\theoremstyle{mytheoremstyle}
\newtheoremstyle{italicstyle}  % name
    {\topsep}                    % Space above
    {\topsep}                    % Space below
    {\itshape}                   % Body font
    {}                           % Indent amount
    {\bfseries}                  % Theorem head font
    {.}                          % Punctuation after theorem head
    {.5em}                       % Space after theorem head
    {}                           % Theorem head spec (can be left empty, meaning ‘normal’)
\newtheoremstyle{normalstyle}  % name
    {\topsep}                    % Space above
    {\topsep}                    % Space below
    {}                           % Body font
    {}                           % Indent amount
    {\bfseries}                  % Theorem head font
    {.}                          % Punctuation after theorem head
    {.5em}                       % Space after theorem head
    {}                           % Theorem head spec (can be left empty, meaning ‘normal’)
\declaretheorem[style=italicstyle,name=Theorem,numberwithin=section]{theorem}
\declaretheorem[style=italicstyle,name=Lemma,sibling=theorem]{lemma}
\newlength\tocrulewidth
\title{Let Me Grok for You: Accelerating Grokking via Embedding Transfer from a Weaker Model}
\author{Zhiwei Xu\textsuperscript{$\dagger$}\thanks{Equal contribution; \(\diamondsuit\) Equal advising.} , 
Zhiyu Ni\textsuperscript{$\ddagger$}$^\ast$, 
Yixin Wang\textsuperscript{$\dagger$ $\diamondsuit$}, 
Wei Hu\textsuperscript{$\dagger$ $\diamondsuit$}
\\
\textsuperscript{$\dagger$}University of Michigan,
\textsuperscript{$\ddagger$}University of California, Berkeley
\\
\texttt{\{zhiweixu,yixinw,vvh\}@umich.edu,zhiyuni@berkeley.edu}
}
\begin{document}

\maketitle

\begin{abstract}
``Grokking'' \citep{power2022grokking} is a phenomenon where a neural network first memorizes training data and generalizes poorly, but then suddenly transitions to near-perfect generalization after prolonged training. While intriguing, this delayed generalization phenomenon compromises predictability and efficiency. Ideally, models should generalize directly without delay. To this end, this paper proposes \gt, a simple and principled method for accelerating grokking in training neural networks, based on the key observation that data embedding plays a crucial role in determining whether generalization is delayed. \gt first trains a smaller, weaker model to reach a nontrivial (but far from optimal) test performance. Then, the learned input embedding from this weaker model is extracted and used to initialize the embedding in the target, stronger model.

We rigorously prove that, on a synthetic XOR task where delayed generalization always occurs in normal training, \gt enables the target model to generalize directly without delay. Moreover, we demonstrate that, across empirical studies of different tasks, \gt effectively reshapes the training dynamics and eliminates delayed generalization, %substantially reducing computational costs 
for both fully-connected neural networks and Transformers.

\end{abstract}

\section{Introduction}\label{sec:intro}

``Grokking'' is an intriguing phenomenon recently discovered by \cite{power2022grokking}, where a neural network first memorizes the training dataset but has poor test performance, and after much longer training, it suddenly transitions to near-perfect generalization. 
Initially reported for Transformer models trained on modular arithmetic tasks, the grokking phenomenon has since been observed in other settings such as learning group operations~\citep{chughtai2023toy}, sparse parity~\citep{barak2022hidden}, and image classification~\citep{liu2023omnigrok}.

While grokking is an interesting phenomenon, it introduces unpredictability into the training process and compromises its practical efficiency. When the model has interpolated the training data with small training loss but still performed poorly on the validation set, it becomes difficult to predict whether or when the model will eventually generalize. Ideally, we would like the model to make continuous progress during training, keeping the gap between training and validation errors minimal. This raises the question: 

\begin{center}
    \textit{How can we effectively modify the training dynamics so that the model generalizes without delay?}
\end{center}

In this work, we show that data embedding plays a crucial role in determining the training dynamics; an informative embedding enables continuous progress during training. To obtain such an informative embedding without excessive computational cost, we propose a novel method called \gt, which leverages the embedding learned by a weaker, smaller model to accelerate the generalization of a larger target model. See Figure~\ref{fig:a-letmegrok_flowchart} for an overview of \gt. 

Specifically, \gt involves two main steps:
(1) Train a weaker model until it groks to non-trivial test performance;
(2) Extract the weak model's learned embedding and use a linear mapping of this embedding to initialize the embedding of the target model. Then, proceed to train the target model.
We theoretically study \gt in the setting of a two-layer neural network trained on a high-dimensional XOR classification task, where normal training exhibits grokking. We prove that \gt enables the target model to directly generalize without delay.
We further empirically verify the effectiveness of \gt on typical algorithmic tasks that show grokking. This is done for both fully-connected neural networks with trainable embeddings and Transformers. Figure~\ref{fig:b-acc_loss_compare-intro} shows typical training curves of \gt vs. training a target model from scratch, on a modular addition task. It shows that \gt effectively eliminates grokking and significantly improves efficiency.

In summary, our contributions are as follows:
\begin{itemize}[leftmargin=*]
\item We propose a novel method, \gt, which leverages the embedding learned from a smaller, weaker model to accelerate grokking in the target model.
\item We theoretically justify \gt in an XOR classification task. We further empirically validate our method on several algorithmic tasks that exhibit grokking in normal training, demonstrating that \gt can effectively eliminate delayed generalization.
\end{itemize}

\begin{figure}
    \vspace{-5mm}
    \centering
    \begin{subfigure}[b]{0.34\textwidth}
        \includegraphics[width=\textwidth]{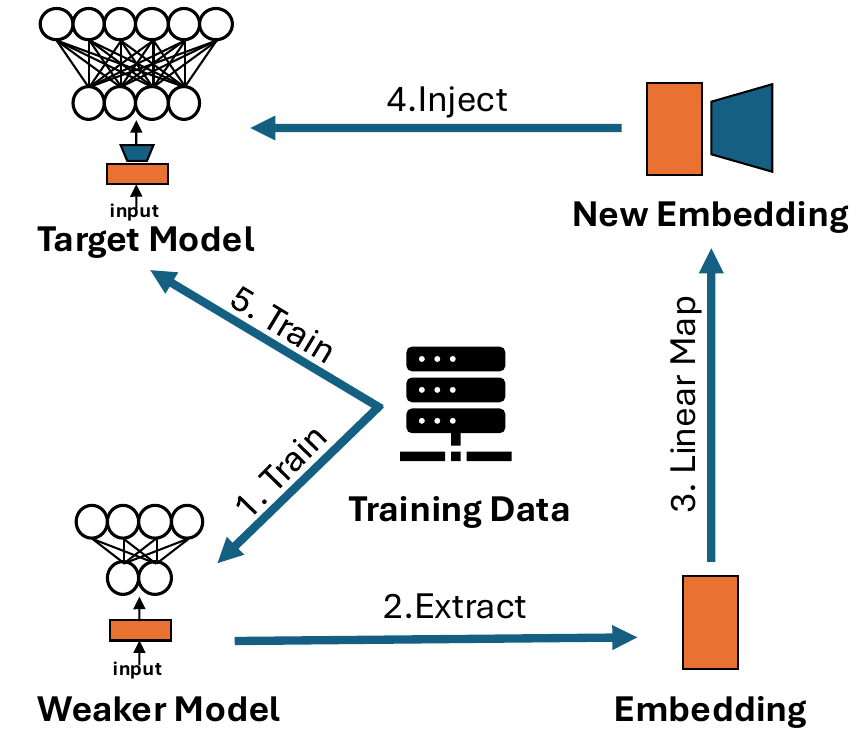}
        \caption{}
        \label{fig:a-letmegrok_flowchart}
    \end{subfigure}
    \hfill
    \begin{subfigure}[b]{0.65\textwidth}
        \includegraphics[width=\textwidth]{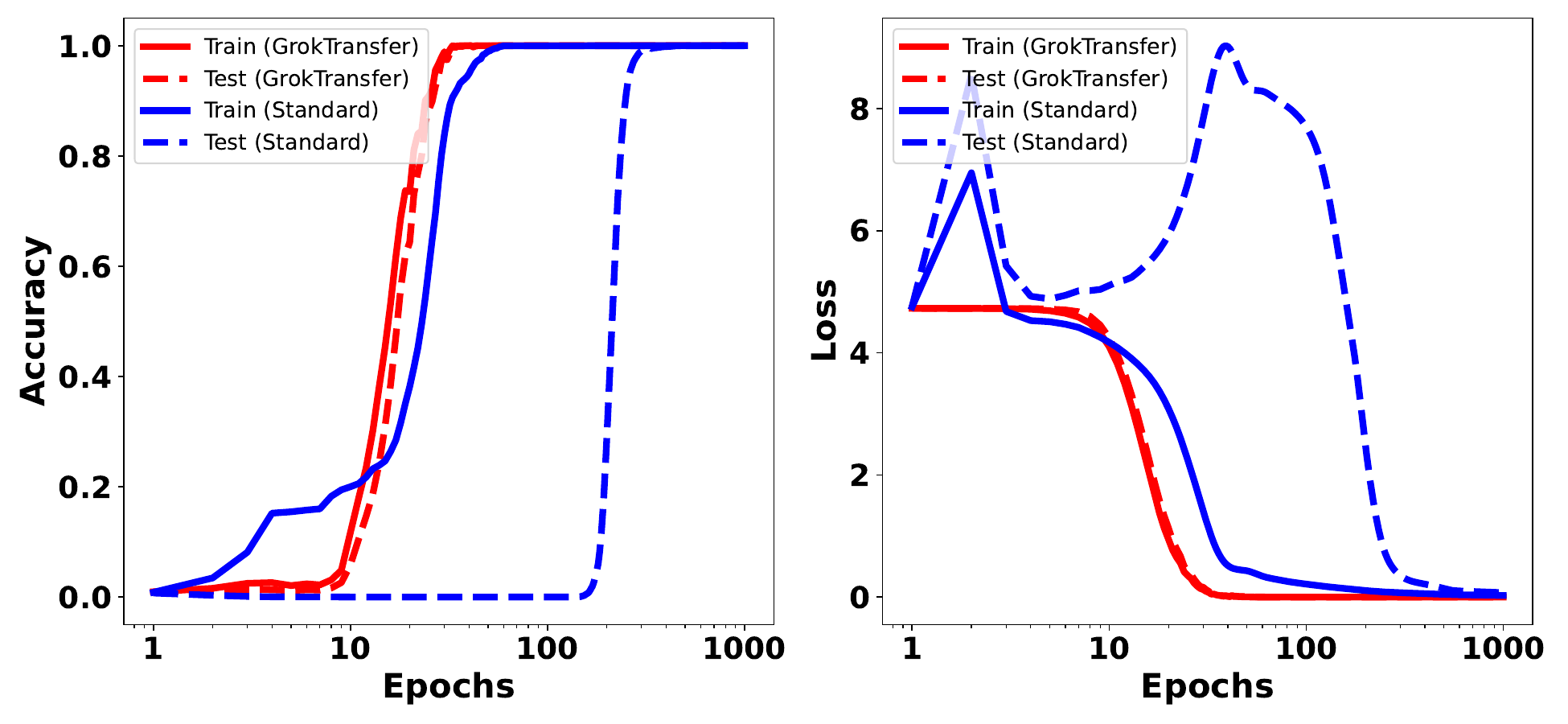}
        \caption{}
        \label{fig:b-acc_loss_compare-intro}
    \end{subfigure}
    \vspace{-5mm}
    \caption{(a) Overview of the \gt framework. (b) Comparison of the training dynamics of a model trained using \gt versus one trained from scratch. There is a clear phase transition between memorization and generalization if we train the model from scratch (blue lines). \gt (red lines) enables the model to make continuous progress, significantly reducing the gap between memorization and generalization.  See Appendix \ref{sec:expr-details} for the detailed experimental setup.}
    \label{fig:1-ab}
    \vspace{-3mm}
\end{figure}

\subsection{Related Work}
Our work draws on two themes around grokking and weak-to-strong knowledge transfer.

\paragraph{Grokking.}

\cite{liu2022towards} reported that the model starts grokking when it learns the hidden structure of the data. \cite{gromov2023grokking} showed that grokking is robust to different optimizers such as vanilla gradient descent and Adam; and regularization methods including no regularization, weight decay, and dropout. \cite{davies2023unifying} hypothesized that grokking and double descent, another surprising phenomenon, are caused by the same hidden mechanism. 
\cite{nanda2023progress} reverse-engineered a grokked transformer model for modular addition and reported that the learned algorithm is a composition of trigonometric and inverse trigonometric functions. \cite{merrill2023tale} and \cite{varma2023explaining} contributed to the occurrence of grokking to the competition of sparse (generalizing) and dense (complementary) subnetworks during training. \cite{zhu2024critical} showed that models only grok when the training data exceeds some critical size. \cite{liu2023omnigrok} attributed grokking to large initialization scale and induced grokking on real-world datasets such as MNIST and IMDb by initializing models with large weight norm. Further work \citep{miller2023grokking, humayun2024deep} showed that grokking can also be observed in other scenarios such as Gaussian Process regression and multi-class classification with adversarial samples. A series of theoretical papers have established rigorous results for grokking/delayed generalization in several settings outside of algorithmic tasks: linear regression with linear models \citep{vzunkovivc2022grokking}, and binary classification with neural networks \citep{lyu2023dichotomy, xu2023benign}. \cite{lyu2023dichotomy} proved that grokking can be induced by a sharp phase transition from kernel regime to rich regime. \cite{mallinar2024emergence} trained Recursive Feature Machines on algorithmic tasks and found its training dynamics similar to neural networks, showing that grokking is not restricted to neural networks. \cite{he2024learning, wang2024grokked} found transformers achieve out-of-distribution generalization on some tasks through grokking. \cite{doshi2024grokking} provided analytical solutions for complex modular arithmetic tasks and hypothesized that some complex modular polynomial tasks cannot be learned by shallow neural networks. \cite{mohamadi2024you} showed that learning modular addition is fundamentally hard for neural networks in the kernel regime. A related phenomenon, termed ``sudden drop in the loss'' \citep{chen2024sudden,gopalani2025abrupt,yang2025dynamics}, describes an abrupt drop in loss after an extended plateau during online training.

Recent work has proposed several methods to accelerate grokking. \cite{liu2023omnigrok} explained grokking through the concept of a ``Goldilocks zone'', a spherical shell of weights, and found that restricting the weight norm to a sphere of the appropriate radius during training can accelerate generalization. However, this method introduces instability in the training process and still involves a phase transition. %It requires careful tuning of the radius. 
\cite{furuta2024interpreting} suggested initializing the model with weights or embeddings from another model that has already generalized on a different task may accelerate grokking, which needs to train the same model on additional data, while our method do not need additional data. {\cite{lee2024grokfast} decomposed the gradient at each step and accelerated grokking by amplifying part of the gradient.}
Interestingly, \cite{minegishi2023grokking} demonstrated that the gap between memorization and generalization can be nearly eliminated if a lottery ticket, a set of sparse mask matrices, is applied to the model during training. However, this lottery ticket can only be obtained by first training the same model under the same initialization till generalization. In contrast, our approach can nearly eliminate the phase transition without requiring additional data or pretraining on the same model.

\vspace{-10pt}

\paragraph{Weak to strong knowledge transfer.}
\cite{burns2023weaktostrong} proposed a method where a small model is first fine-tuned as a teacher model. This teacher model is then used to generate pseudo-labels to fine-tune a larger student model. Surprisingly, the student model can outperform the teacher. \cite{wang2023learning} designed a learned linear growth operator, which uses a learnable linear map of a pretrained small model's weights as the initialization for the large model's weights, to accelerate the training of large models. In contrast to these works, our method focuses on transferring the embedding layer from a weaker model to the target model and reshaping the training dynamics to accelerate grokking.

\subsection{Notation}

For a set \(S\) with finite elements, we denote its cardinality by \(|S|\) and use \(\unif(S)\) to represent the uniform distribution over \(S\).  We denote the set \(\{1,2,\cdots, n\}\) by \([n]\). We use \(\sgn(x)\) to represent the sign of a scalar \(x\). For a matrix \(A \in  \RR^{n \times m}\), we denote by \(A_{i,\cdot} =[A_{i,1}, \cdots, A_{i, m}]\) the \(i\)-th row, %\wh{is this a row vector (1*m) or column vector (m*1)?}\zx{it's a row vector}
 \(A_{i:j,\cdot} =[A_{i,\cdot}^{\top}, \cdots, A_{j, \cdot}^{\top}]^{\top} \in \RR^{(j-i+1)\times m}\) the \(i\)-th to \(j\)-th rows, and \(\|A\|_{\F}\) the Frobenius norm. We use \(\phi(x) = \max\{0, x\}\) to represent the ReLU activation function. We denote the inner product between two vectors \(a, b\) by \(\langle a, b\rangle\).
For two sequences \(\{x_n\}\) and \(\{y_n\}\), we say \(x_n = O(y_n)\) if there exists some constant \(C>0\) such that \(x_n \leq C y_n\) for all \(n\) and \(x_n = \Omega(y_n)\) if \(y_n = O(x_n)\).

\vspace{-1mm}
\section{Accelerating Grokking via Embedding Transfer from a Weaker Model}

\vspace{-1mm}
\subsection{Motivation: The Role of Data Embedding}\label{sec:role-of-embed}

To demonstrate the pivotal role of data embedding in shaping training dynamics, we examine the modular addition task \(a+b \text{ mod } p \).
Following settings in \cite{nanda2023progress} and \cite{liu2023omnigrok}, we take \(p=113\). The dataset consists of \(\{((a,b), y)\}_{0 \leq a,b \leq p-1}\) with label \(y=(a+b)\text{ mod } p\). \(25\%\) of the dataset is randomly sampled as the training set. We evaluate {four} types of embeddings:
\begin{itemize}[leftmargin=*]
    \item \textbf{One-hot embedding:} Each integer \(a \in [0,p-1]\) is represented by its one-hot encoding.
    \item \textbf{Binary embedding:} Each \(a\) is encoded in binary, padded with zeros to the maximum length \(\lfloor \log_2(p-1) \rfloor + 1\).
    \item \textbf{Fourier embedding:} Each \(a\) is encoded as a vector of trigonometric functions: \([\cos(\frac{2\pi i_1 a}{p}), \sin(\frac{2\pi i_1 a}{p}), \cdots, \cos(\frac{2\pi i_k a}{p}), \sin(\frac{2\pi i_k a }{p})]\), where \(i_1, \cdots, i_k \in \mathbb{N}\) are predetermined frequencies.
    \item \textbf{GPT embedding:} %\wh{maybe call it GPT embedding to be consistent with figure?}\zx{have changed} 
    Each \(a\) is embedded using OpenAI's \texttt{text-embedding-3-small} model \citep{openai2024textembedding3small} 
\end{itemize}
\begin{figure}[t]
\vspace{-5mm}
    \centering
\includegraphics[width=\textwidth]{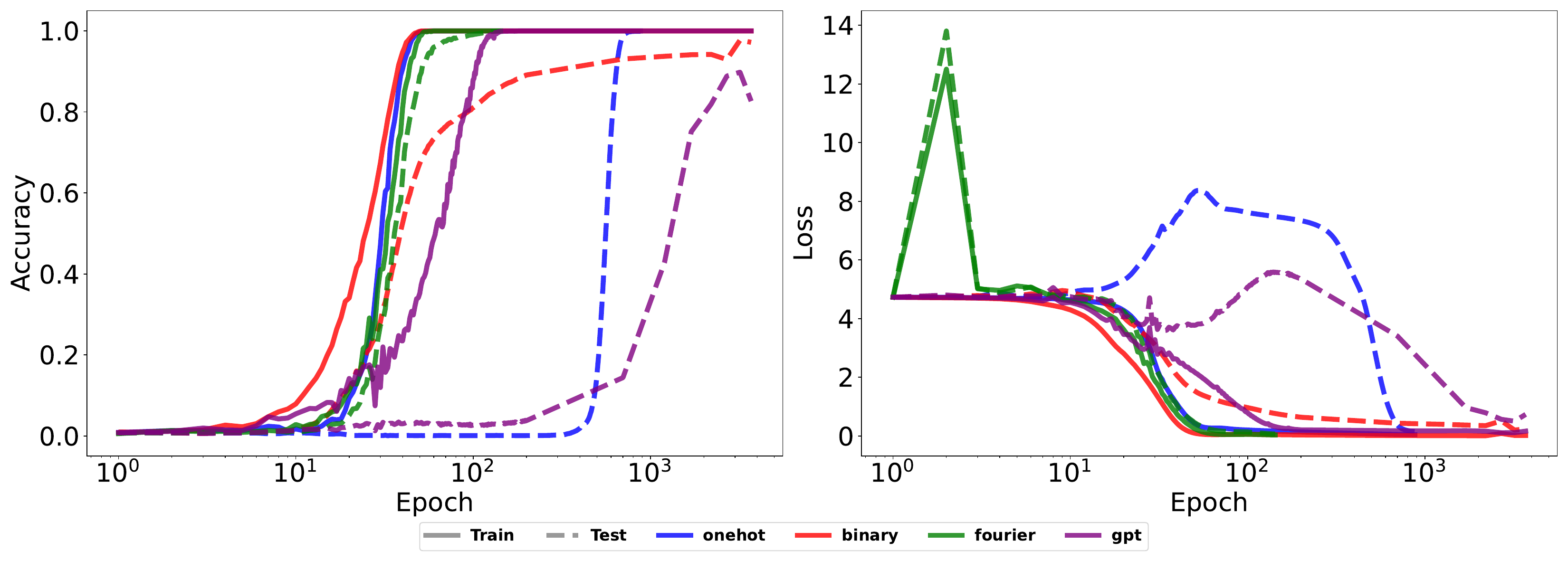}
\vspace{-5mm}
\caption{FNN training dynamics using different embeddings for the modular addition task (\(p=113\)). The training dynamics vary significantly across different embeddings. The one-hot embedding and GPT embedding exhibit sharp phase transition. See Appendix \ref{sec:expr-details} for details of the experimental setup.}%, which has no prior information, exhibits a sharp phase transition. See Appendix \ref{sec:expr-details} for details of the experimental setup.}
    \label{fig:dynamics-modular_addition_all_embed}
    \vspace{-5mm}
\end{figure}
One-hot embeddings contain no prior information about the data, while binary embeddings capture the ordinal information of integers. Fourier embeddings, inspired by the analytical solutions learned by neural networks \citep{nanda2023progress, morwani2024feature}, encode task-specific information. GPT embeddings encode general information about integers. Figure \ref{fig:dynamics-modular_addition_all_embed} shows the training dynamics of a feed-forward neural network using these embeddings. The training dynamics with one-hot and GPT embeddings exhibit clear grokking behavior, whereas those with binary and Fourier embeddings show continuous generalization progress. Notably, Fourier embeddings enable the model to simultaneously achieve memorization and perfect generalization. We observe that general embeddings like one-hot and GPT embeddings suffer from generalization delay, while embeddings encoded with task-related information allow the model to generalize continuously.

\begin{wrapfigure}{r}{0.4\textwidth} 
\vspace{-7mm}
  \centering
\includegraphics[width=0.999\linewidth]{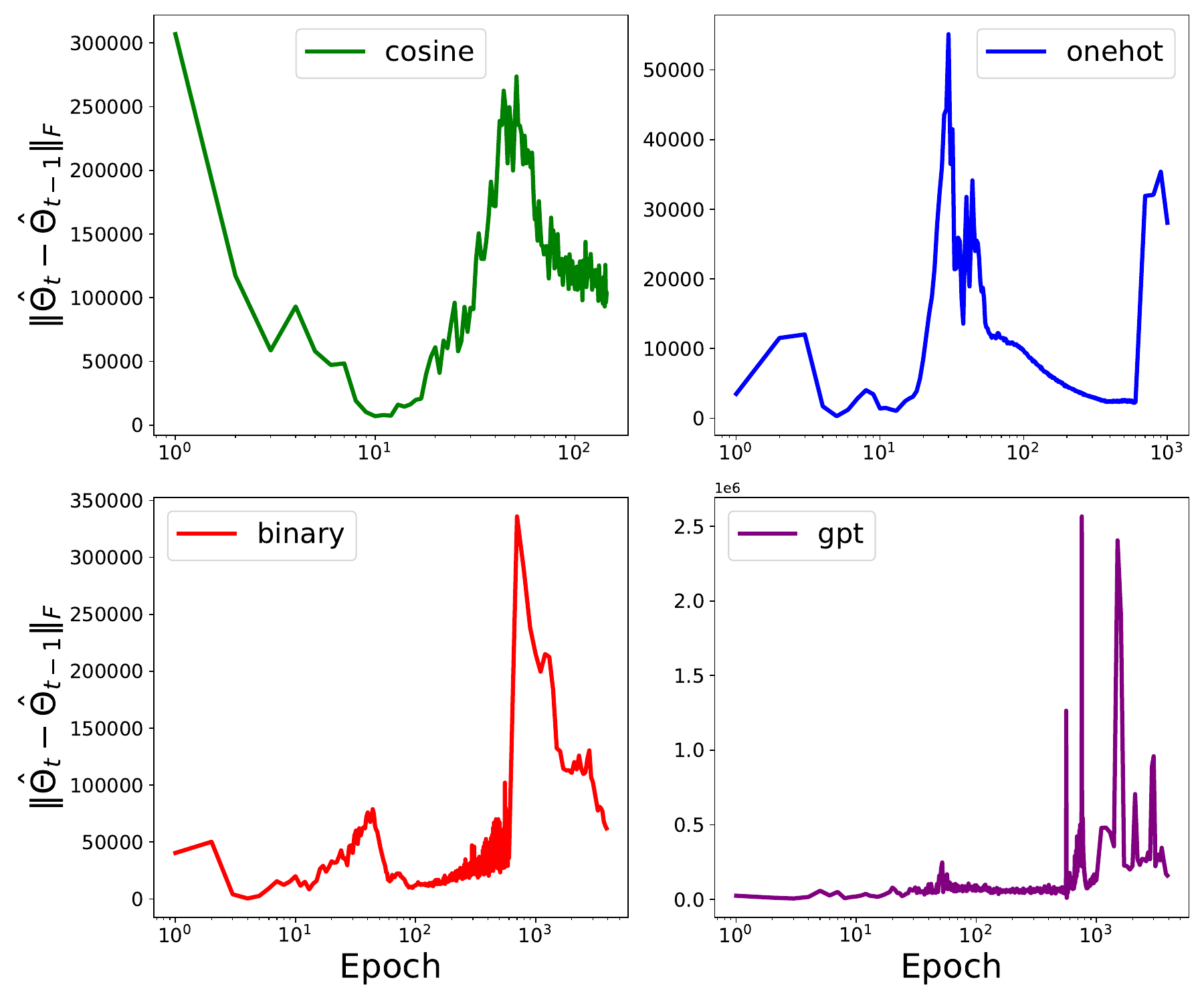}
\vspace{-5mm}
    \caption{Change of empirical NTK.}
    \label{fig:eNTK}
    \vspace{-2mm}
\end{wrapfigure}

A series of works \citep{liu2023omnigrok, kumar2023grokking, lyu2023dichotomy, mohamadi2024you} found that the default initialization scale is relatively large and causes generalization delay. They observed that reducing the initialization scale can accelerate grokking and hypothesized that grokking arises from a time gap between the Neural Tangent Kernel (NTK) regime and the feature-learning regime. However, our empirical findings indicate that grokking persists even after carefully tuning the initialization scale (see Appendix \ref{sec:append_a31}). This suggests that grokking occurs even when the model is not initialized in the kernel regime, implying that the kernel regime may not be the sole cause of grokking. In Figure \ref{fig:eNTK}, we compare the changes in the empirical NTK \citep{mohamadi2023fast} corresponding to the dynamics in Figure \ref{fig:dynamics-modular_addition_all_embed}. The change of empirical NTK evolves similarly across all four types of embeddings (see Appendix \ref{sec:expr-details} for details).

In conclusion, the choice of embedding significantly impacts training dynamics, and an informative embedding can close the gap between memorization and generalization.  However, finding such an informative embedding for specific tasks is not always straightforward. Binary embedding, for example, reduces the sharp phase transition for modular addition but fails to do so for modular multiplication.
In the next section, we will show that constructing a task-specific embedding from training data can be a promising approach to obtaining an informative embedding that can accelerate grokking. 
The embedding construction can be achieved by training a much smaller, weaker model. Here ``small'' refers to smaller model expressivity. This weaker model can learn an informative embedding without achieving optimal generalization. This embedding can then be used to positively influence the training dynamics of the larger target model.

\subsection{Our Method: \gt}
We propose \gt, a simple and principled method for accelerating grokking in training neural networks.
In more detail, given a specific task and a training set \(\cG\), we
consider a target model \(f_T\) that has a trainable embedding layer \(\text{E}_T\) with vocabulary size \(d_v\) and embedding dimension \(d_T\). Our proposed method \gt works as follows:
\begin{enumerate}[leftmargin=*]
    \item \textbf{Train a Weaker Model:} Train a weaker model \(f_W\) with a trainable embedding table \(\text{E}_W \in \RR^{d_v \times d_W}\) on \(\cG\), where \(d_W\) is the embedding dimension in the weak model. Train \(f_W\) until it groks to a non-trivial performance on the validation set.
    %\item \textbf{Extract (and Denoise) Embeddings:} Extract the embedding table \(\text{Embed}_S\). Optionally, apply parameter pruning or PCA to denoise the embeddings. Randomly generate a linear transformation matrix \(L \in \RR^{d_S \times d_L}\).
    \item \textbf{Train the Target Model:} Initialize \(A = \text{E}_W\) and randomly initialize a matrix \(B \in \RR^{d_W \times d_T}\).
    Train the target model with an embedding layer set to \(\text{E}_T = A \cdot B\), where both \(A\) and \(B\) are trainable. 
    %Optionally, when the training loss converges, initialize \(\text{E}_L = A B\). Continue training with the full embedding \(E_L\) until it converges again.
\end{enumerate}
\vspace{-10pt}
%Emphasize intuition and the benefit of doing this
By training a weaker model, the first step aims to obtain an informative embedding that aids the training of the target model. In practice, the weak model can be much smaller than the target model or can even have a different architecture (e.g., the weak model can be a fully-connected network when the target model is a Transformer; see Section~\ref{sec:experiments}). As a result, training a weak model greatly reduces the computational cost of acquiring an informative embedding. This contrasts with the method proposed in \cite{minegishi2023grokking}, which requires the target model to be trained till perfect generalization first. In the next sections, we will demonstrate, both theoretically and empirically, that even if the weak model only partially generalizes (i.e., has a non-trivial but non-optimal test error), its embedding still allows the large model to generalize optimally without delay.

In the second step, we impose a low-rank structure \(A\cdot B\) on the embedding \(\text{E}_T\) while training the target model. This constraint alters the empirical risk landscape and provides a favorable initialization for the embedding table. The intuition behind our method is as follows: by initializing with an informative embedding from the weak model, the target model can bypass the initial phase of pure memorization. Instead, it can start generalizing almost immediately as it begins to optimize the training loss. 

\section{Case Study: \gt on XOR Cluster Data}\label{sec: main-theory}

In this section, we theoretically study an XOR classification task and prove that \gt can eliminate grokking for this task.

\subsection{The Setup of XOR Cluster Data}
\label{sec: xor-setting}
We study the setting where the data \(x = [x_1, x_2, \cdots, x_p]^{\top} =[x_{\text{signal}}^{\top}, x_{\text{noise}}^{\top}]^{\top} \in \RR^p, x_{\text{signal}} \sim \unif(\{\pm 1\}^2), x_{\text{noise}} \sim \unif(\{\pm \varepsilon\}^{p-2})\), and the label $y = x_1 x_2$. Here \(\varepsilon\) is the parameter that controls the scale of the noise. We denote this data distribution by $P$ and consider \(n\) training datapoints $\{(x_i, y_i)\}_{i=1}^n$ drawn i.i.d. from the distribution \(P\). We assume the sample size \(n\) to be sufficiently large, specifically larger than any universal constant mentioned in this paper. The data distribution comprises four feature vectors (see Figure \ref{fig:a-xor-new-embedding-from-small-model}  for a projected visualization), and the model need learn all four features to achieve perfect generalization. 

We denote a width-$m$ two-layer neural network by
\(
f(x) = \sum_{j=1}^m a_j \phi(\langle w_j, x\rangle),
\)
where $w_j \in \RR^p, j\in [m]$ are neurons in the hidden layer and $a_j \in \RR, j \in [m]$ are second-layer weights. The model is randomly initialized by
\[
\vspace{-1mm}
w_j \stackrel{i.i.d}{\sim} N(0, w_{\text{init}}^2 I_p), \quad a_j \stackrel{i.i.d}{\sim} N(0, a_{\text{init}}^2),\quad j \in [m].
\vspace{-1mm}
\]
Define the empirical risk with the exponential loss as:
\(
\hat{L}(f)=\sum_{i=1}^n l(y_i, f(x_i))/n,
\)
where $l(y, \hat y) = \exp(-y \hat y)$.
We use gradient descent (GD) with weight decay $\theta_j^{(t+1)} = (1 - \lambda)\theta_j^{(t)} - \alpha \nabla_{\theta_j}\hat{L}(f^{(t)})$ to update both layers \(\{w_j, a_j\}_{j=1}^m\), where $\lambda$ is the coefficient of \(L_2\) regularization.

Setting $p = 80000, n=400, \varepsilon = 0.05$, this configuration approximates one of the distributions explored in \cite{xu2023benign}, where grokking was observed. Under this setup, we train a two-layer neural network on $\{(x_i, y_i)\}_{i=1}^n$ with default PyTorch initialization. We observe grokking, as shown in Figure \ref{fig:xor-dynamics}(a), where overfitting is achieved by the fifth epoch and generalization begins around the \(80\)-th epoch. Below we will show how our method \gt constructs a new embedding and eliminates the observed delay in generalization %\zx{under this specific setup} 
in subsequent sections.

\begin{figure}[htb]
    \centering
    \includegraphics[width=1.0\linewidth]{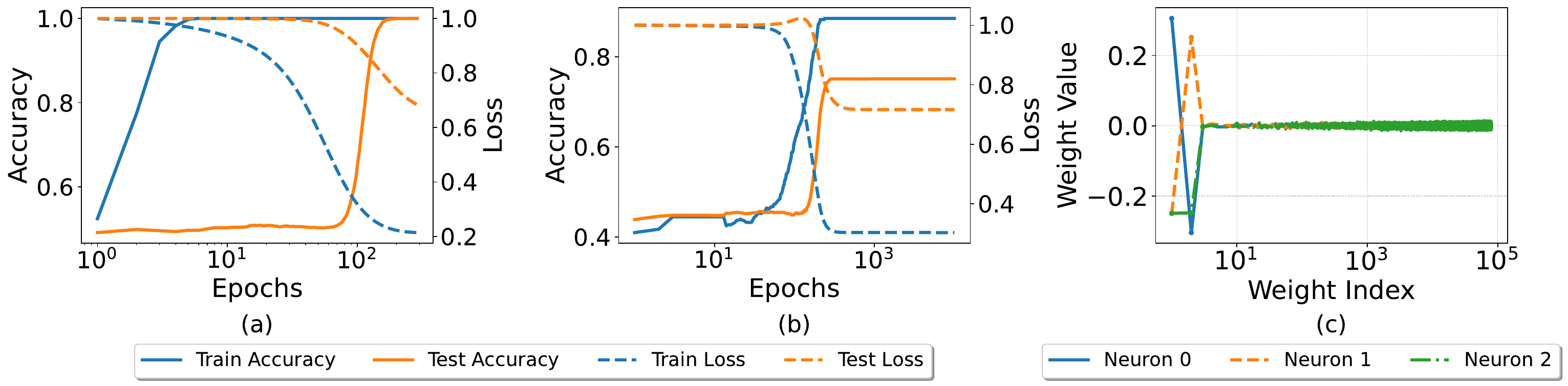}
    \caption{(a) Training dynamics of a two-layer neural network with a hidden width of \(2048\), where grokking is observed. (b) Training dynamics of a two-layer neural network with a hidden width of \(3\). The model can only achieve around \(75\%\) validation accuracy and a phase transition near \(100\)th epoch is observed. (c) Visualization of individual neuron weights from the model trained in (b). It shows three distinct patterns and each corresponds to a feature direction of the XOR data distribution. See Appendix \ref{sec:expr-details} for details of the experimental setup.}
    \label{fig:xor-dynamics}
\end{figure}

\subsection{Empirical Analysis of the Weaker Model}
\label{sec: xor-small-model}

Applying \gt, we first train a small two-layer neural network with only $3$ neurons 
\(
f_S(x) = \sum_{j=1}^3 a_j \phi(\langle w_j, x\rangle)
\)
till convergence (Figure \ref{fig:xor-dynamics}(b)). Denote the first-layer weight matrix by \(W = [w_1, w_2, w_3] \in \RR^{p \times 3}\), the number of training steps by \(T\), and the model after training by \(f_S^{(T)}\). %, and the first-layer weight matrix of \(f_S^{(T)}\) by \(W^{(T)}\). 
Due to the complexity of the training dynamics, it is hard to derive the closed form of \(f_S^{(T)}\) and \(W^{(T)}\). Below we empirically investigate what information the model has gained and how well it learns.

Figure \ref{fig:xor-dynamics}(b) shows that, after training, this weak model has non-trivial performance with test accuracy around $75\%$. The neurons $\{w_j^{(T)}\}_{j=1}^3$ are visualized in Figure \ref{fig:xor-dynamics}(c), displaying patterns $[-1, 1, 0, \cdots, 0], [1, -1, 0, \cdots, 0],$ and $[-1, -1, 0, \cdots, 0]$. Note that the specific features learned by the model are sensitive to its initialization. Nevertheless, we find that empirically, the learned features are always three among the four features $[\pm1, \pm1, 0, \ldots,0]$, provided the test accuracy is around \(75\%\).

Notice that an optimal function for this classification task is
\[
f(x) = \sgn(\phi(x_1 + x_2) + \phi(-x_1 - x_2) - \phi(-x_1 + x_2) - \phi(x_1 - x_2)),
\]
which needs four neurons to represent all features $[\pm 1, \pm 1]$. It thus follows intuitively that the weak model \(f_S\) cannot achieve better generalization with only three neurons. Formally, we establish the following lemma regarding the expressive power of \(f_S\).

\begin{restatable}{lemma}{smallmodelexpressivity}\label{lem:smallmodelexpressivity}
    For any $f(x) = \sum_{j=1}^3 a_j \phi(w_j^{\top}x)$, where $\phi$ is the ReLU activation function, we have
    \[
    \PP_{(x,y)\sim P}(y = \sgn(f(x))) \leq 75\%.
    \]
\end{restatable}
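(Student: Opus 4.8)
The plan is to deduce the bound from a short combinatorial fact about the sign patterns a sum of three \emph{bias‑free} ReLU neurons can realize on two antipodal pairs of points, together with the central symmetry of the data distribution.

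\emph{Step 1 (symmetrize the distribution).} First I observe that $P$ is invariant under $x\mapsto -x$ and that this map preserves the label, since $(-x_1)(-x_2)=x_1x_2$ and both $\unif(\{\pm1\}^2)$ and $\unif(\{\pm\varepsilon\}^{p-2})$ are symmetric. Hence, writing $P_{+1}=P(\cdot\mid y=1)$ and $P_{-1}=P(\cdot\mid y=-1)$, negation maps $P_{+1}$ onto $P_{+1}$ and $P_{-1}$ onto $P_{-1}$. Set $p_+=\PP_{x\sim P_{+1}}(f(x)>0)$ and $p_-=\PP_{x\sim P_{-1}}(f(x)<0)$. Since $\PP(y=1)=\PP(y=-1)=\tfrac12$, counting the ambiguity $\sgn(0)$ as an error we have $\PP_{(x,y)\sim P}\big(y=\sgn f(x)\big)=\tfrac12(p_++p_-)$, so it suffices to prove $p_++p_-\le\tfrac32$.

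\emph{Step 2 (the combinatorial core).} I will show: for any $f(x)=\sum_{j=1}^3 a_j\phi(\langle w_j,x\rangle)$ there exist no $u,u'\in\RR^p$ with $f(u)>0$, $f(-u)>0$, $f(u')<0$, and $f(-u')<0$. Among $a_1,a_2,a_3$, either at least two are $\le0$ or at least two are $\ge0$ (otherwise two would be $>0$ and two would be $<0$, impossible among three numbers). In the first case, say $a_2,a_3\le0$, so $f(x)\le a_1\phi(\langle w_1,x\rangle)$ for every $x$; if $a_1\le0$ then $f\le0$, contradicting $f(u)>0$, while if $a_1>0$ then $f(u)>0$ and $f(-u)>0$ force $\langle w_1,u\rangle>0$ and $\langle w_1,-u\rangle>0$, impossible. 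The second case is symmetric, using $f(x)\ge a_3\phi(\langle w_3,x\rangle)$ and the pair $u',-u'$. This is precisely where bias‑freeness is essential: a bias would allow one ReLU to be active simultaneously at $u$ and at $-u$.

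\emph{Step 3 (conclude).} Fix any $x'$ in the support of $P_{-1}$ and draw $x\sim P_{+1}$. Applying Step 2 with $u=x$ and $u'=x'$, every realization satisfies $\mathbf 1[f(x)>0]+\mathbf 1[f(-x)>0]+\mathbf 1[f(x')<0]+\mathbf 1[f(-x')<0]\le3$. Taking the expectation over $x$ and using that $-x\sim P_{+1}$ as well gives $2p_++\mathbf 1[f(x')<0]+\mathbf 1[f(-x')<0]\le3$; averaging this over $x'\sim P_{-1}$ and using $-x'\sim P_{-1}$ yields $2p_++2p_-\le3$, i.e.\ $\tfrac12(p_++p_-)\le\tfrac34$, which is the claim.

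\emph{Main obstacle and remarks.} The real difficulty is the $p-2$ noise coordinates: after conditioning on the noise, each neuron is a \emph{biased} ReLU on the signal plane, and in fact a three‑neuron network can fit XOR exactly on the four signal corners for a suitable fixed noise pattern, so a slice‑by‑slice (conditional‑on‑noise) argument cannot work. The key move is instead to pair each point $x$ with its exact antipode $-x$, which flips the noise too and reduces everything to Step 2, where the absence of biases is exactly what kills the pattern. A minor remaining point is bookkeeping around $\sgn(0)$ when $f$ vanishes on a set of positive $P$‑mass; one checks the bound persists under any tie‑breaking convention by a short direct argument, which I omit. (As a sanity check the bound is tight: $f(x)=\phi(x_1+x_2)+\phi(-x_1-x_2)-\phi(x_1-x_2)$ attains exactly $75\%$.)
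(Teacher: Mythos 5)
Your proof is correct and takes essentially the same route as the paper's: both rest on the observation that a bias‑free ReLU cannot be active at both $u$ and $-u$, combined with a pigeonhole on the three output weights $a_j$, to conclude that at most three of the four points $\{u,-u,u',-u'\}$ (with $u,u'$ of opposite labels) can be classified correctly, followed by an averaging step exploiting the central symmetry of $P$. The differences are only presentational—you run the pigeonhole by bounding $f$ above (resp.\ below) by a single ReLU term when two $a_j$'s share a weak sign, whereas the paper tracks which $a_j$'s must be activated with the correct sign at each correctly classified point; and you make the averaging explicit over independent draws $x\sim P_{+1}$, $x'\sim P_{-1}$, whereas the paper pairs each $x$ with the specific point $x'=(x_1,-x_2,x_3,\dots,x_p)$ and leaves the resulting orbit‑counting reduction implicit.
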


Although the model $f_S^{(T)}$ fails to generalize perfectly due to the inherent limitation of capacity, it has correctly selected the subset that contains features after training as shown in Figure \ref{fig:xor-dynamics}(c). Consequently, for any input $x \sim P$, $W^{(T) \top} x$ becomes a high-quality embedding for \(x\) in a much lower dimensional space. 
Figure \ref{fig:a-xor-new-embedding-from-small-model} shows that, with this new embedding, data points are well-separated in a three-dimensional space with a relatively high signal-to-noise ratio (SNR) compared to the original embedding.

Next, we empirically examine the order of the ratio between the norm of the complementary subnetwork and the norm of the generalizing subnetwork. This will be used to estimate the SNR of \(P\) with the new embedding. Given the structure of the XOR cluster data, the first two rows of $W^{(T)}$ correspond to the generalizing subnetwork. We define the norm ratio between the complementary and generalizing subnetwork as follows:
\[
r_W = \frac{\|W^{(T)}_{3:p, \cdot}\|_{\F}/\sqrt{p-2}}{\|W^{(T)}_{1:2, \cdot}\|_{\F}/\sqrt{2}}.
\]
Figure \ref{fig:b-heatmap-norm-ratio-lambda4} and \ref{fig:c-heatmap-norm-ratio-lambda2} show that the norm ratio is proportional to \(\varepsilon\) and \(1/\sqrt{n}\), i.e. \(r_W \propto \varepsilon/\sqrt{n}\). 
\iffalse
It can also be seen from the GD update:  when the training of small model converges, we have
\[
\lambda w_j = \frac{\alpha}{n}a_j\sum_{i=1}^n y_i l(y_i, f(x_i)) \phi^{\prime}(\langle w_j, x_i\rangle)x_i, \quad j \in [3],
\]
which can be treated as a weighted average of \(\{x_i\}\). Then  \(p-2\) entries is of order \(\sqrt{n}\varepsilon\) by CLT. 
\fi
We will use this property to show that, under mild assumptions,  the target model can learn this low-dimensional XOR task with just one step of gradient descent.

\begin{figure}[htbp]
    \centering
    \begin{subfigure}[b]{0.36\textwidth}
        \includegraphics[width=\textwidth]{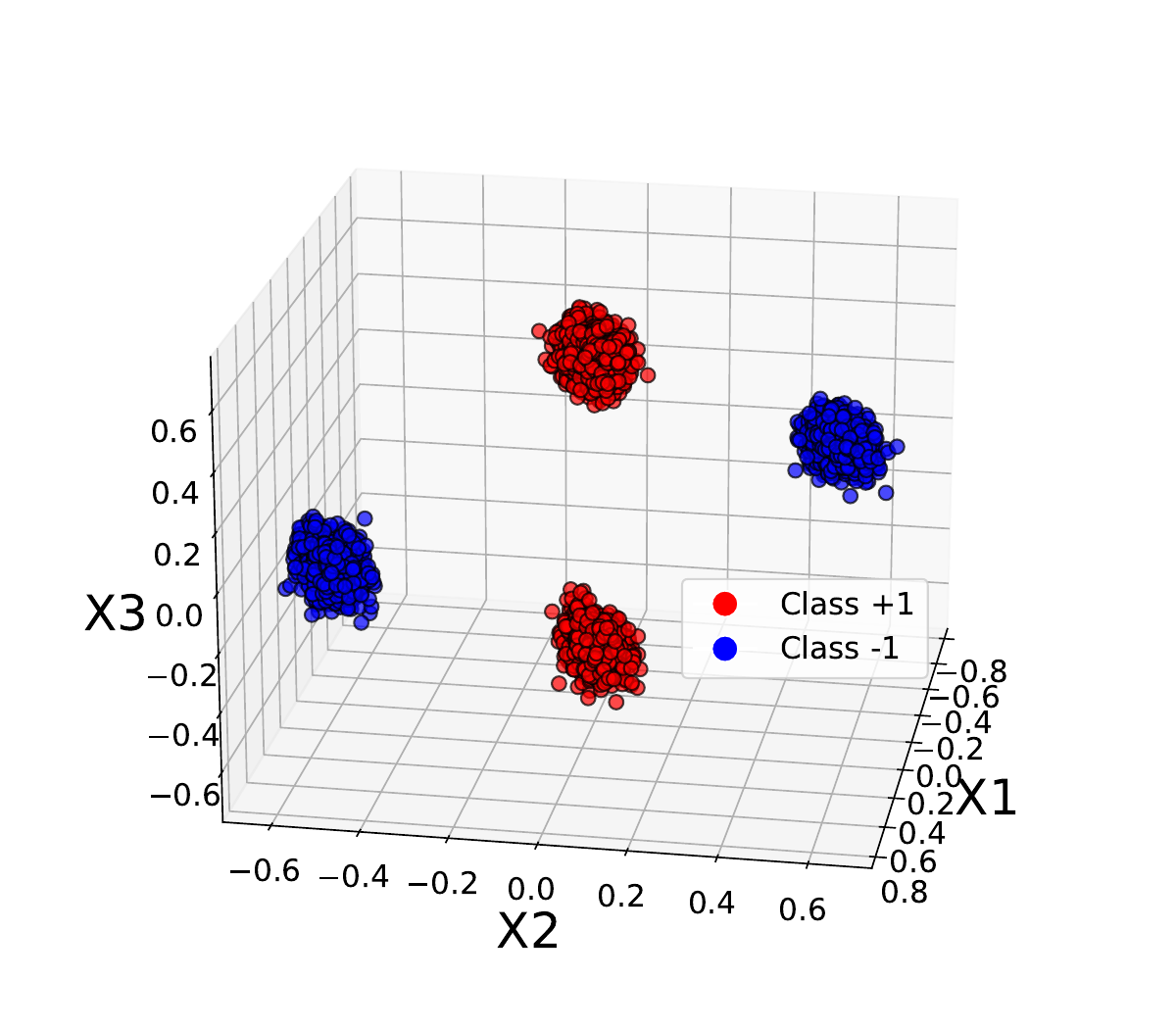}
        \caption{}
        \label{fig:a-xor-new-embedding-from-small-model}
    \end{subfigure}
    \hfill
    \begin{subfigure}[b]{0.3\textwidth}
        \includegraphics[width=\textwidth]{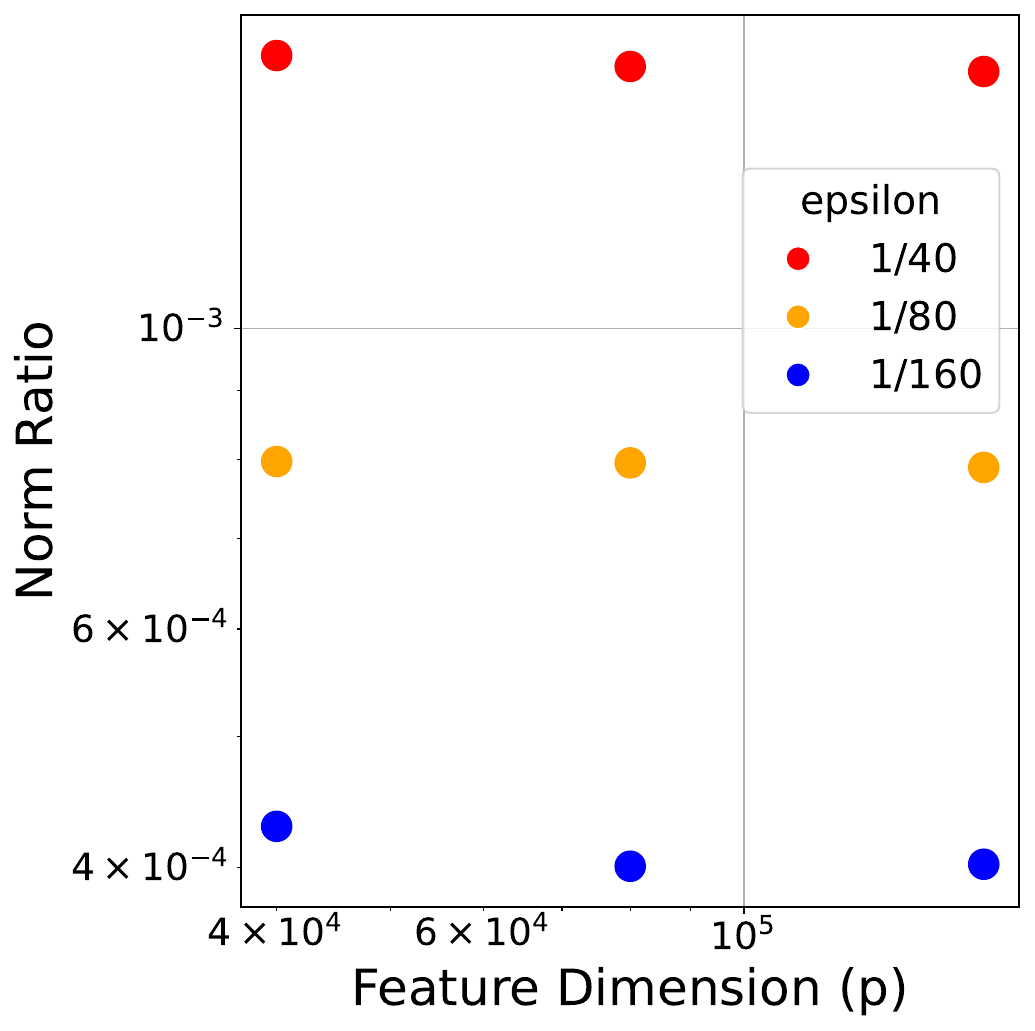}
        \caption{}
        \label{fig:b-heatmap-norm-ratio-lambda4}
    \end{subfigure}
    \hfill
    \begin{subfigure}[b]{0.3\textwidth}
        \includegraphics[width=\textwidth]    {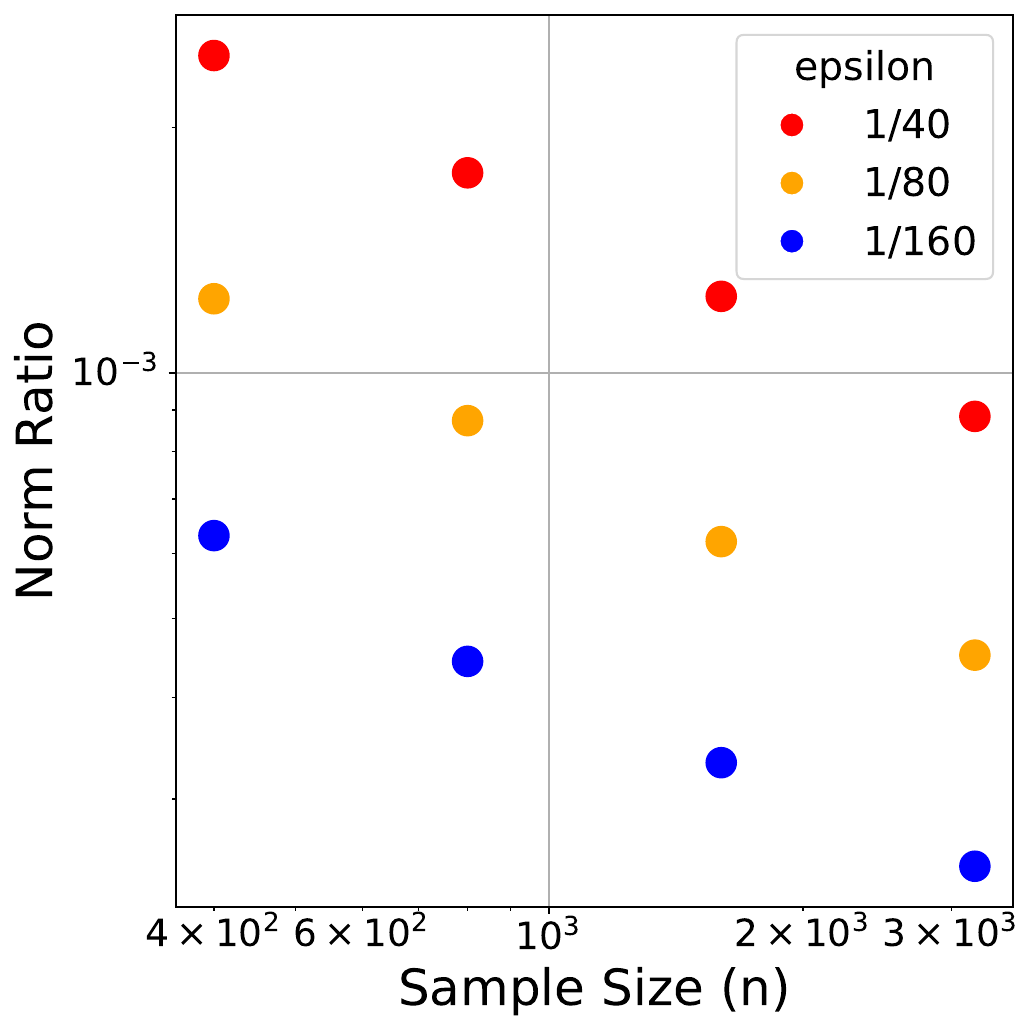}
        \caption{}
        \label{fig:c-heatmap-norm-ratio-lambda2}
    \end{subfigure}
    \caption{(a) 3-D Visualization of the distribution \(P\) with the embedding from the weak model. %\wh{instead of "new embedding", use "embedding from weak model"} 
    The clusters are well-separated under the new embedding. (b) Norm ratio \(r_W\) for different values of \(p\) and \(\varepsilon\) with fixed sample size \(n\), indicating that \(r_W\) does not depend on \(p\). (c) Norm ratio \(r_W\) for different values of \(n\) and \(\varepsilon\) with fixed feature dimension \(p\). For each \(\epsilon\), the slope is around \(-1/2\), indicating that \(r_W\) is proportional to \(1/\sqrt{n}\). See Appendix \ref{sec:expr-details} for details of the experimental setup.}
    \label{fig:xor-heatmap}
\end{figure}

\subsection{Theoretical Analysis of the Target Model}
\label{sec: xor-large-model}
In this section, we theoretically analyze the behavior of \gt on the XOR cluster data.
We consider the target model as a large model with width $m$ of the form
% \[
$f_L(x) = \sum_{j=1}^m a_j \phi(\langle v_j, U^{\top} x\rangle),$
% \]
where \(U = [u_1, u_2, u_3] \in \RR^{p \times 3}\) comes from the first-layer weight matrix \(W^{(T)}\) learned by the weak model (visualized in Figure \ref{fig:xor-dynamics}(c)).
Here, $U$ is the embedding matrix being transferred from the weak model $f_S$, which will then go through another linear transformation (given by $v_j$'s) to form the embedding in the target model.
Following our observation in Section~\ref{sec: xor-small-model}, we can write
\[
u_1 = [\mu_2^{\top}, \delta_1^{\top}]^{\top}, \quad u_2 = [-\mu_2^{\top}, \delta_2^{\top}]^{\top}, \quad u_3 = [-\mu_1^{\top}, \delta_{3}^{\top}]^{\top},
\]
where \(\mu_1 = [1,1]^{\top}, \mu_2 = [-1,1]^{\top}\) are two orthogonal features of \(P\), and \(\delta_j = [\delta_{j,1}, \cdots, \delta_{j, p-2}]^{\top} \in \RR^{p-2} (j\in[3])\).\footnote{We assume that the weak model learned three features $[1,1], [-1,1], [-1,-1]$ without loss of generality. Our result will hold the same for any three features among the four features $[\pm1,\pm1]$.} Here we let \(\delta = [\delta_1, \delta_2, \delta_3] = W_{3:p, \cdot}^{(T)}\).
Given a universal constant \(C>1\), we assume 
\begin{enumerate}[label=(A\arabic*), leftmargin=*]
\item\label{assump:xor-data-snr} The noise scale \(\varepsilon \leq (n/(p\log^3 n))^{\nicefrac{1}{4}} \).
    \item\label{assump:small-model-weight} The norm of the complementary subnetwork satisfies \(\|\delta\|_{\F} \leq C \varepsilon \sqrt{p/n}.\)
    \item\label{assump:large-model-init-scale} The initialization scale \( v_{\init} \leq C \log^{-\nicefrac{3}{2}} (n) \).
    \item\label{assump:step-size} The step size \(\sqrt{m}v_{\init}/C \leq \alpha \leq \sqrt{m}v_{\init}\).
    \item\label{assump:large-model-width} The number of neurons satisfies \(m \geq 2\log^3 n\).
\end{enumerate}
Here Assumption \ref{assump:small-model-weight} corresponds to the finding that \(r_W \propto \varepsilon /\sqrt{n}\) in Section \ref{sec: xor-small-model}. Assumptions \ref{assump:xor-data-snr} and \ref{assump:small-model-weight} together ensure that the SNR of the distribution \(P\) in the new embedding space is large enough. Assumption \ref{assump:large-model-init-scale} controls the initial weight norm of the target model such that the empirical risk starts within a reasonable range. Assumption \ref{assump:step-size} guarantees that the step size is appropriately balanced; it is neither too small to prevent meaningful updates after a single-step gradient descent nor too large to cause overly drastic movements. Assumption \ref{assump:large-model-width} ensures that the model's width is large enough to ensure certain concentration results about the random initialization. All assumptions are satisfied in the empirical setup discussed in Section \ref{sec: xor-small-model}. 

We denote \(a=[a_1, \cdots, a_m]^{\top} \in \RR^m \) and \(V = [v_1,\cdots, v_m] \in \RR^{3 \times p}\). We initialize \(a\) and \(V\) as follows:
\[a_j \stackrel{i.i.d}{\sim} \unif (\{\pm 1/\sqrt{m}\}), \quad v_j \stackrel{i.i.d}{\sim} \unif (\{\pm v_{\init}\}^3) , \quad j \in [m],
\] 
and keep \(a\) and $U$ fixed during the training process.\footnote{Our result will not be affected if $a$ and $U$ are also trainable. We set them fixed to simplify the analysis while still conveying the main ideas.} Following the training method outlined in Section \ref{sec: xor-setting}, we use gradient descent \(V^{(t+1)} = V^{(t)} - \alpha \nabla_{V}\hat{L}(f_L^{(t)})\) at step \(t\) to update the linear layer \(V\), where \(\alpha\) is the step size and the empirical risk \(\hat{L}(\cdot)\) is defined in Section \ref{sec: xor-setting}. With the assumptions and initializations, we state the theorem that characterizes the train and test error of the target model after one step.

\begin{restatable}{theorem}{MainResult}\label{thm:one-step-overfit-and-generalization}
    Suppose that Assumptions \ref{assump:xor-data-snr}-\ref{assump:large-model-width} hold. With probability at least $1-O(1/n^2)$ over the generation of the training data and initial weights of \(f_L\), after one step of training, the classifier \(\sgn (f_L^{(1)}(x))\) can correctly classify all training datapoints and generalize with test error no greater than \(\exp(-\Omega(\log^2 n))\).
\end{restatable}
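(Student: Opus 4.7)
The plan is to show that a single gradient step endows $V$ with a low-dimensional structure that realizes a large-margin classifier on the four cluster centers $\tilde\mu_1,\dots,\tilde\mu_4 \in \{0,\pm 2\}^3$, and that every other perturbation—random $a_j$, uneven cluster sizes, the residual noise $\tilde\eta = \delta^\top x_{\mathrm{noise}}$, and an update residual $\xi_j$—is a factor $\log n$ smaller than this margin.

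\emph{Setup and initialization.} Each sample decomposes as $U^\top x_i = \tilde\mu_{k_i} + \tilde\eta_i$ with $\tilde\mu_{k_i}$ determined by the cluster $k_i \in \{1,2,3,4\}$. Coordinatewise Hoeffding on $\tilde\eta_{i,r} = \delta_r^\top x_{\mathrm{noise},i}$ combined with Assumptions \ref{assump:small-model-weight} and \ref{assump:xor-data-snr} yields $\|\tilde\eta_i\|_\infty \leq 1/\log n$ uniformly over training and test samples, at union-bound cost $O(1/n^3)$. Because $v_j^{(0)} \in \{\pm v_{\init}\}^3$ we have $|\phi(\langle v_j^{(0)}, \tilde x\rangle)| \leq 3v_{\init}$, and Hoeffding over the independent signs $a_j$ gives $|f_L^{(0)}(x)| \leq O(v_{\init}\sqrt{\log n}) = o(1)$ under Assumption \ref{assump:large-model-init-scale}; hence every loss derivative $L'_i = \exp(-y_i f_L^{(0)}(x_i))$ lies in $[1-o(1),1+o(1)]$ and the gradient is essentially an unweighted sum.

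\emph{Update and margin at cluster centers.} Stratify the neurons into the $2^3=8$ sign patterns of $v_j^{(0)}/v_{\init}$, each containing $(1/8)m \pm O(\sqrt{m\log n})$ neurons by Chernoff. For a neuron of pattern $(s_1,s_2,s_3)$ and a sample in cluster $k$, the signal $\langle v_j^{(0)},\tilde\mu_k\rangle$ has magnitude in $\{0,2v_{\init},4v_{\init}\}$; since $v_{\init}\gg \|\tilde\eta_i\|_\infty$, the activation $\phi'_{ij}$ is determined purely by $(s_1,s_2,s_3)$ on three of the four clusters and is noise-driven only in the degenerate case $s_1=s_2$ with $k\in\{3,4\}$. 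Letting $c_j^{(k)} = \frac{1}{|S_k|}\sum_{i\in S_k} L'_i\phi'_{ij}$, the averages are $0$ or $1-o(1)$ in the deterministic cases and $1/2 + O(\sqrt{\log n/n})$ in the degenerate ones; together with $|S_k| = n/4 + O(\sqrt{n\log n})$ and a conditional Hoeffding on $\sum_i L'_i\phi'_{ij}\tilde\eta_i$, the update collapses to
\begin{equation*}
v_j^{(1)} = v_j^{(0)} + \tfrac{\alpha a_j}{2}\bigl[\,c_j^{(3)}-c_j^{(4)},\ c_j^{(4)}-c_j^{(3)},\ c_j^{(2)}-c_j^{(1)}\,\bigr]^{\top} + \xi_j,
\end{equation*}
with $\|\xi_j\|_\infty = O(\alpha/(\sqrt{mn}\log n))$. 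Evaluated at $\tilde\mu_1 = (0,0,-2)$, only neurons with $s_3=-1$ survive the ReLU (using $\alpha/\sqrt{m}\leq v_{\init}$ so that $v_{\init} + \alpha a_j/2 > 0$ regardless of $a_j$), and the sum collapses to $\alpha\,|\{j:s_3=-1\}|/m + 2v_{\init}\sum_{j:s_3=-1} a_j = \alpha/2 - O(v_{\init}\sqrt{\log n})$. Assumptions \ref{assump:step-size}--\ref{assump:large-model-width} force $\alpha \geq \sqrt{m}v_{\init}/C = \Omega(\log^{3/2} n \cdot v_{\init}) \gg v_{\init}\sqrt{\log n}$, so $y_1 f_L^{(1)}(\tilde\mu_1) = \Omega(\alpha)$; analogous calculations give the same margin at $\tilde\mu_2,\tilde\mu_3,\tilde\mu_4$ (for $k=3,4$ only neurons with $s_1=-s_2$ survive).

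\emph{Robustness and obstacle.} Transferring the margin, $|f_L^{(1)}(x) - f_L^{(1)}(\tilde\mu_{k(x)})| \leq 3\sum_j|a_j|\|v_j^{(1)}\|_\infty\|\tilde\eta\|_\infty = O(\sqrt{m}v_{\init}/\log n)$, a factor $\log n$ below the $\Omega(\sqrt{m}v_{\init})$ margin, so every training point is classified correctly on the good-noise event; for a fresh test point, a sub-Gaussian tail on $\tilde\eta$ (using $\varepsilon^2\|\delta_r\|^2 \leq C^2/\log^3 n$) gives failure probability $\exp(-\Omega(\log^2 n))$, matching the statement. The main obstacle is controlling $\xi_j$: the activation $\phi'_{ij}$ and the noise $\tilde\eta_i$ are correlated through $\tilde x_i$, so a naive Hoeffding does not apply. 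The right approach is a case split—on the three coordinates with nonzero cluster signal, $\phi'_{ij}$ is deterministic after conditioning on the cluster, so the noise contribution reduces to a clean centered sum of independent sub-Gaussian variables; on the two degenerate coordinates, the symmetry $\tilde\eta_i \mapsto -\tilde\eta_i$ still forces a zero-mean contribution but with larger $O(\sqrt{n\log n})$ fluctuations. Handling both without losing the $\log n$ factor that preserves the $\Omega(\alpha)$ margin is the most delicate bookkeeping of the proof.
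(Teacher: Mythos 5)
Your overall strategy — decompose $U^\top x_i$ into cluster signal plus noise, stratify the $m$ neurons by the eight sign patterns of $v_j^{(0)}$, compute the post-update margin at the four cluster centers, then transfer that margin to noisy points — is exactly the structure of the paper's proof. The closed-form ``main term'' $\frac{\alpha a_j}{2}[c_j^{(3)}-c_j^{(4)},\,c_j^{(4)}-c_j^{(3)},\,c_j^{(2)}-c_j^{(1)}]$ is a clean repackaging of the paper's case-by-case update computations, and the margin calculation $\alpha/2 - O(v_{\init}\sqrt{\log n})$ at the cluster centers matches the paper's $\Omega(\alpha/m)$ per-pair bound summed over $\Theta(m)$ pairs. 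The sign pattern in the first two coordinates of your main term is flipped relative to the paper's convention (a labeling issue, not a substantive error).

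The real problem is the claimed residual bound $\|\xi_j\|_\infty = O\!\big(\alpha/(\sqrt{mn}\log n)\big)$, and the ``main obstacle'' you construct around it. This bound is not achievable: the cluster-size fluctuation $|S_k| - n/4 = O(\sqrt{n\log n})$ alone contributes $O\!\big(\alpha\sqrt{\log n}/\sqrt{mn}\big)$ to $\xi_j$, already a factor $\log^{3/2} n$ larger than your target, and the per-sample noise contribution $\frac{\alpha a_j}{n}\sum_i l_i^{(0)}\phi'_{ij}\tilde\eta_i$ is larger still. Chasing a $\sqrt{n}$-concentration in the noise sum is what forces you into the $\phi'_{ij}$-versus-$\tilde\eta_i$ correlation issue that you flag at the end and do not resolve. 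But that obstacle is spurious: a plain triangle-inequality bound suffices. Conditioning on the good event $\max_i\|z_i - \bar z_i\| \leq \varepsilon^2\sqrt{p/n}\log n$ (which under Assumption \ref{assump:xor-data-snr} is at most $\log^{-1/2} n$, not $1/\log n$ as you write), one gets deterministically
\[
\|\xi_j\|_\infty \;=\; O\!\Big(\tfrac{\alpha}{\sqrt{m}}\,\varepsilon^2\sqrt{p/n}\,\log n\Big) \;=\; O\!\Big(\tfrac{\alpha}{\sqrt{m\log n}}\Big),
\]
with no correlation to handle at all — this is exactly what the paper does in inequality \eqref{ineq:case1-upper_bound}. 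Summed over $m$ neurons with $|a_j|=1/\sqrt{m}$, the residual perturbs $f_L^{(1)}$ by $O(\alpha/\sqrt{\log n})$, which is a factor $\sqrt{\log n}$ (not $\log n$) below the $\Omega(\alpha)$ margin, and that suffices. Two further small fixes: the threshold for $|f_L^{(0)}(x_i)|$ should be $v_{\init}\log n$ (so the union bound over $n$ samples costs $\exp(-\Omega(\log^2 n))$, not $n\exp(-\Omega(\log n))$); and the phrase ``$v_{\init}\gg\|\tilde\eta_i\|_\infty$'' is the wrong comparison, since both $\langle v_j^{(0)},\bar z\rangle$ and $\langle v_j^{(0)},\tilde\eta_i\rangle$ scale with $v_{\init}$, which cancels — the relevant condition is $\|\tilde\eta_i\|_\infty \ll 1$.
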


Theorem \ref{thm:one-step-overfit-and-generalization} shows that {with \gt,} after just one step of gradient descent, the target model overfits all training data and achieves near perfect test accuracy. Notably, this is not in a kernel regime but a feature learning regime. Since models with normal training cannot achieve generalization in one step (Figure \ref{fig:xor-dynamics}(a)), this result indicates that our method \gt effectively boosts the generalization speed of the target model and eliminates the time gap between overfitting and generalization. Empirically, the model continues to generalize with further training (see Figure \ref{fig:additional-xor} in Appendix \ref{sec:expr-details}). Given that the weaker model \(f_S\) has only three neurons, the computational cost of training \(f_S\) is negligible compared to the cost of training the target model \(f_L\) with sufficiently large width. This implies that \gt may reduce the overall computational cost. In the next section, we will compare the computational cost of our method to that of standard training procedures.

\section{Experiments} \label{sec:experiments}
\vspace{-3mm}
This section empirically studies \gt in modular addition and multiplication, as well as the sparse parity task. Our experiments verify that \gt effectively reshapes the training dynamics and eliminate delayed generalization for both fully-connected neural networks (FNN) and Transformers (TF). The AdamW optimizer \citep{loshchilov2018decoupled} is used in all experiments in this section.
\vspace{-3mm}
\begin{figure}[t]
\vspace{-5mm}
  \centering
  \begin{minipage}[b]{0.3\linewidth}
    \includegraphics[width=\linewidth]{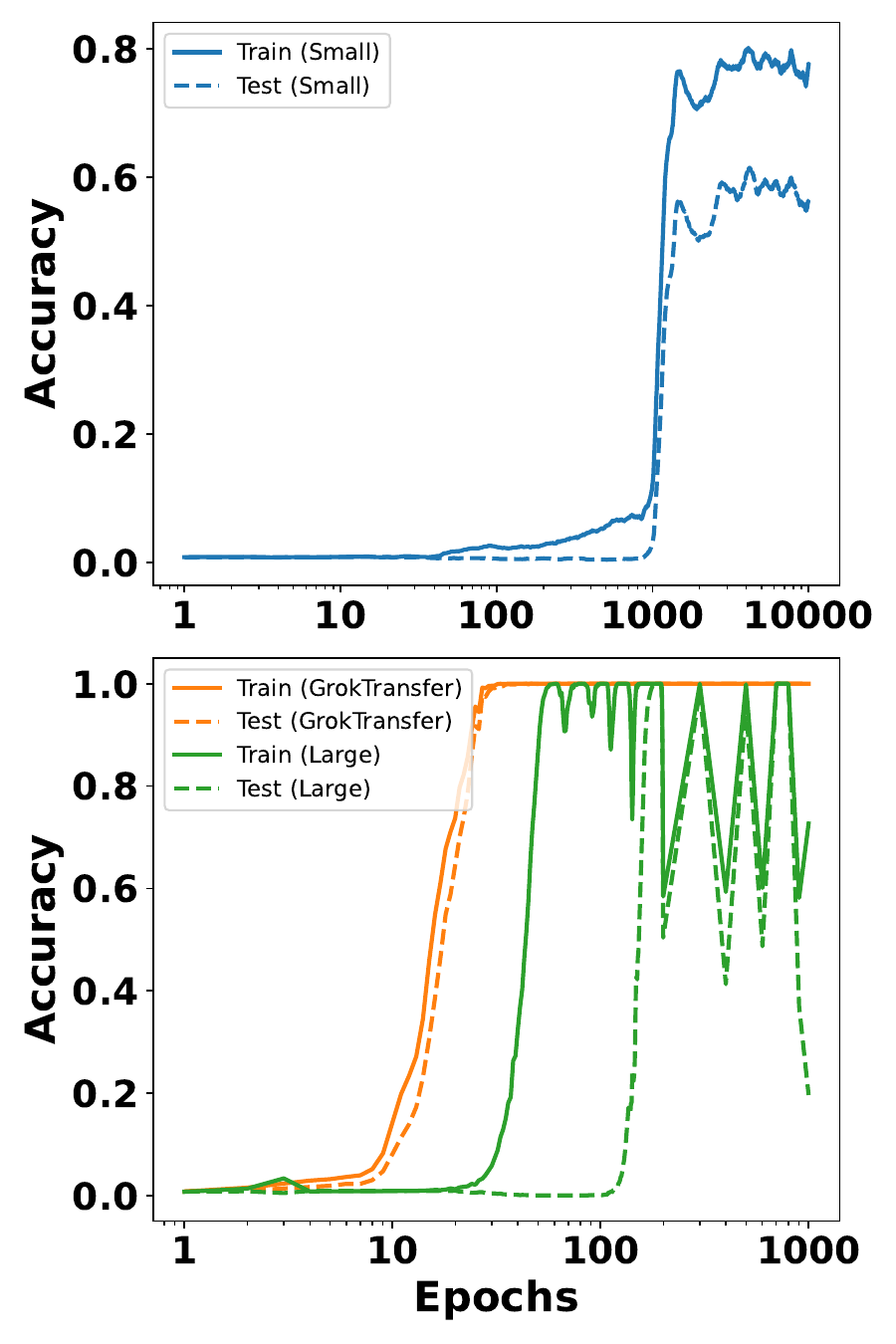}
    \subcaption{Modular addition}
    \label{fig:method-FNN-a}
  \end{minipage}
  \hfill
  \begin{minipage}[b]{0.3\linewidth}
    \includegraphics[width=\linewidth]{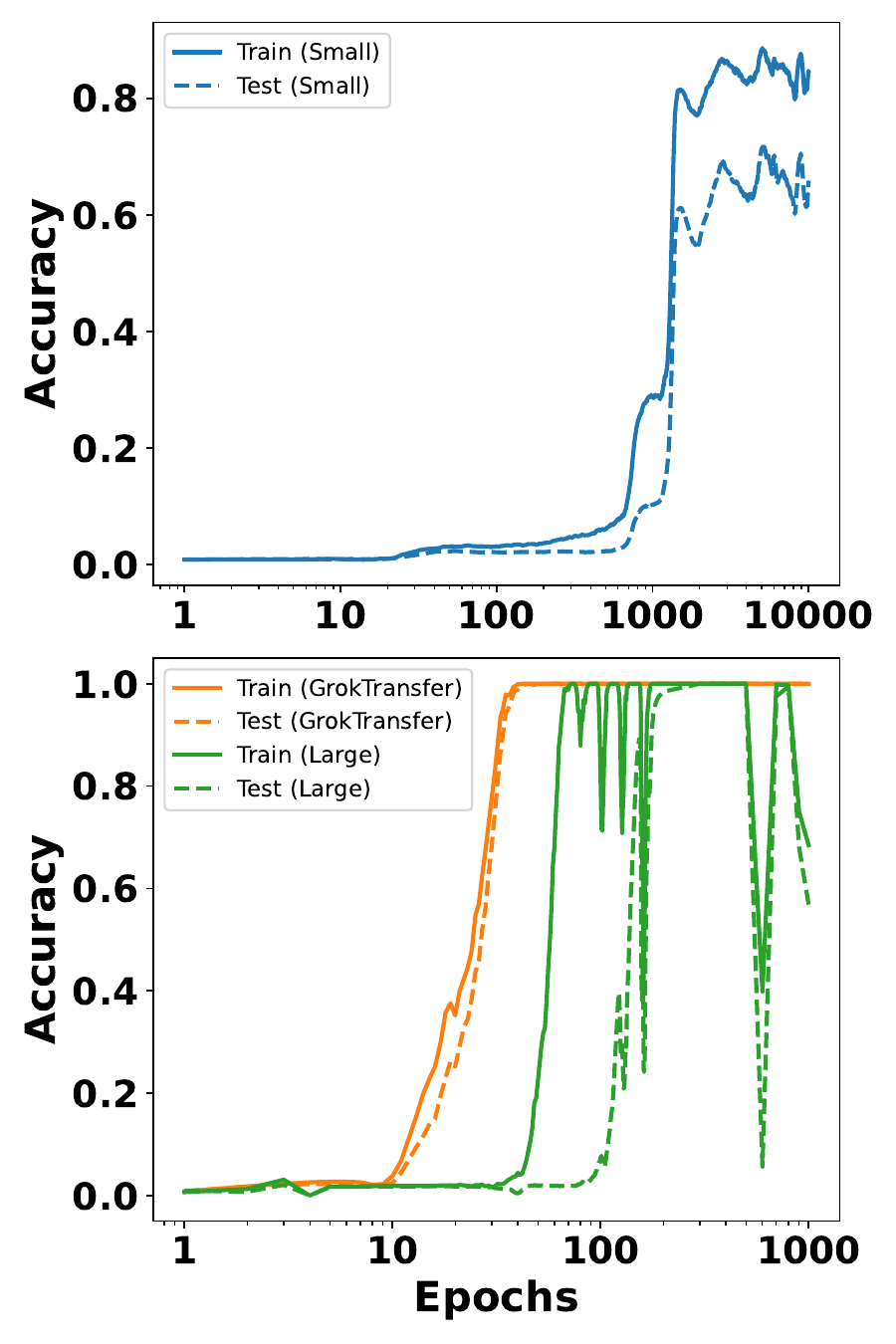}
    \subcaption{Modular multiplication}
    \label{fig:method-FNN-b}
  \end{minipage}
  \hfill
  \begin{minipage}[b]{0.3\linewidth}
    \includegraphics[width=\linewidth]{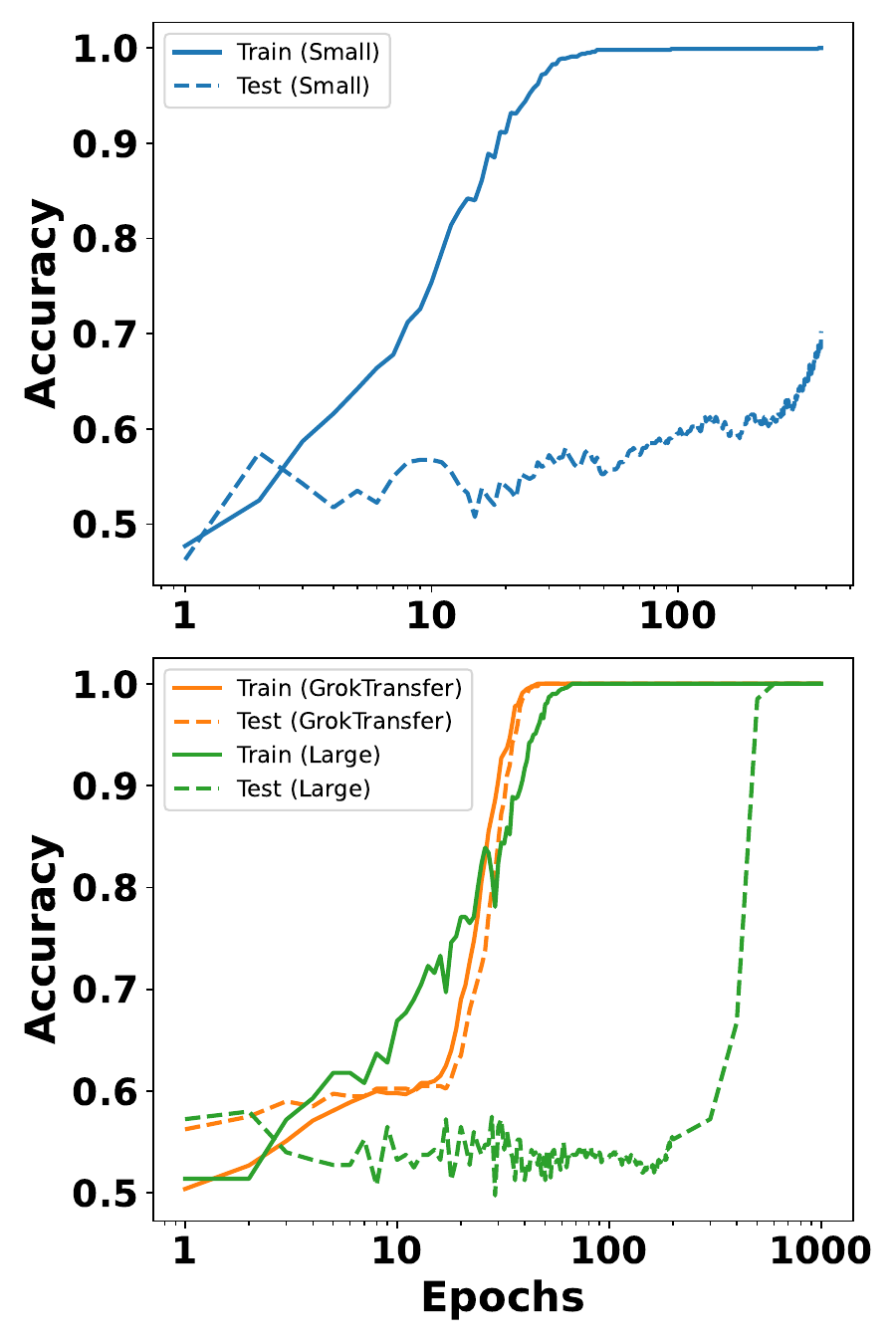}
    \subcaption{\((40,3)\)-parity}
    \label{fig:method-FNN-c}
  \end{minipage}
  \caption{Training dynamics of FNNs on various tasks. The rows represent different models/training methods: The first row shows the dynamics of the weak model used in \gt, the second row shows the dynamics of the target model trained using \gt, and the target model trained from scratch. The columns represent different tasks: the first column is for the modular addition task, the second column is for the modular multiplication task, and the third column is for the \((40, 3)\)-parity task.
  The comparison between the first and second rows shows that the target model trained via \gt can surpass the weak model's performance.
  The comparison within the second row shows that \gt eliminates the sharp phase transition and enables the model to make continuous progress. 
  See Appendix \ref{sec:expr-details} for details of the experimental setup.
  }
  \label{fig:method-FNN}
    \vspace{-6mm}
\end{figure}

\subsection{FNN \(\rightarrow\) FNN}
\vspace{-3mm}
We first consider a three-layer FNN as the target model and conduct \gt on tasks including modular addition, modular multiplication, and \((q,k)\)-parity \citep{barak2022hidden}. These results are compared to training a target model from scratch. The modular addition task is introduced in Section \ref{sec:role-of-embed}, and modular multiplication is defined similarly with the label \(y = a b \text{ mod } p\). The \((q,k)\)-parity task consists of a dataset \(\{(x, y): x\in \{\pm 1\}^q, y = \prod_{i \in S}x_i, |S| = k \}\). Following the setting in \cite{merrill2023tale}, we choose \(q=40, k=3\), and \(S=\{1,2,3\}\).

For the modular addition and multiplication tasks, we employ a two-layer neural network with a trainable embedding as the weak model, which we train for \(10^4\) epochs. We then initialize the target model by setting its layer \(A\) to the embedding learned by the weak model. Figure \ref{fig:method-FNN-a} and \ref{fig:method-FNN-b} show the training dynamics of the weak model, the target model trained via \gt, and the target model trained from scratch. Notably, \gt nearly eliminates the sharp phase transition observed in normal training. Here all training hyperparameters (initialization scale, learning rate, weight decay) are selected by grid search, and the best configuration is defined as the one that reaches \(99\%\) test accuracy the quickest. The oscillations of accuracies in the second row of Figure \ref{fig:method-FNN} are related to the ``slingshot mechanism'' \citep{thilak2022slingshot} and training instabilities associated with large learning rates \citep{wortsman2024smallscale}. Since large learning rate and this kind of oscillation are believed to help generalization \citep{damian2023selfstabilization, lu2024benign}, we do not change our configuration selection criteria.

For the parity task, we use a three-layer FNN as the weak model, as empirical evidence suggests that a two-layer FNN without bias terms cannot generalize on this task. The weak model is trained until it achieves \(70\%\) test accuracy, after which the first layer's weight matrix is transferred to the target model. As shown in Figure \ref{fig:method-FNN-c}, the weak model undergoes a generalization delay, but the large model inheriting its embedding generalizes continuously. %It shows that weak models can grok for the target model.

\textbf{Ablation study:} To further understand the empirical effectiveness of \gt, we perform an ablation study by varying the training epochs of the weak model in the modular addition task.
\begin{wrapfigure}{r}{0.4\textwidth} 
\vspace{-5mm}
  \centering
\includegraphics[width=0.999\linewidth]{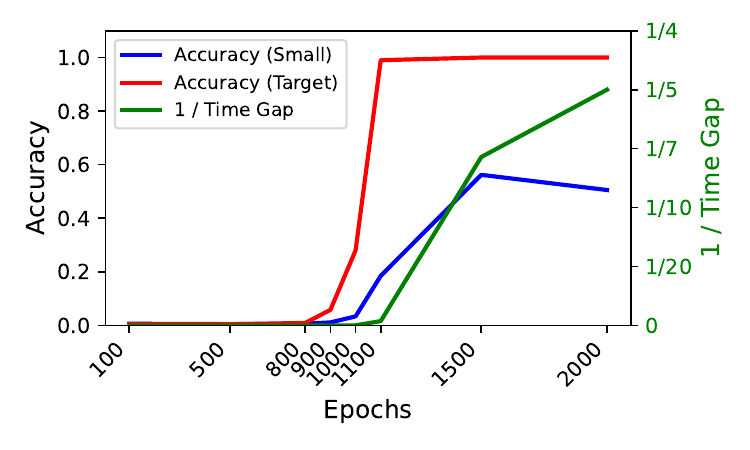}
    \caption{Ablation study showing the effect of the weak model's performance on the test accuracy of the target model (initialized via \gt and trained for \(10^4\) epochs).}
    \label{fig:ablation_fnn}
    \vspace{-5mm}
\end{wrapfigure}
We extract the embeddings of the weak model at epochs \(100,500,800,900,1000,1100,1500,\text{and } 2000\). For each embedding, we apply \gt to the target model and train it for \(10^4\) epochs. To measure the generalization delay of the target model, we define \emph{Time Gap} as the difference between the first epoch that achieves \(95\%\) training accuracy and the first epoch that achieves \(95\%\) test accuracy. If the target model fails to reach \(95\%\) accuracy, we set \(1/\)\emph{Time Gap} \(=0\). Figure \ref{fig:ablation_fnn} shows that the test performance of the target model, initialized with the weak model's embedding, is positively correlated with the test performance of the weak model. A grokked weak model is essential for the target model to achieve near-perfect generalization with minimal generalization delay. We hypothesize that the target model can only generalize well after the weak model has grokked.

\vspace{-3mm}
\subsection{FNN \(\rightarrow\) Transformers}
Interestingly, we find that the embeddings extracted from the weak FNN model can be transferred to the target model even when the target model is a Transformer comparable to the scale of GPT2-small \citep{radford2019language}. Under this FNN \(\rightarrow\) TF setting, \gt still mitigates the generalization delay of the target model. Specifically, we choose the target model to be a Transformer with \(8\) attention layers, \((d_{\text{embed}}, d_{\text{mlp}}, n_{\text{head}})=(512, 512, 4)\). For each sample, the input is a sequence with two tokens \((a,b)\). We extract the embeddings of the weak model in Figure \ref{fig:method-FNN-a} at the point that it first reaches \(30\%\) test accuracy. Figure \ref{fig:fnn_to_transformer}(a) shows that \gt enables the target model to generalize much faster than training from scratch and exhibits little generalization delay. Here both method suffer from training instability of large learning rates. 

In terms of the computation cost, we use wall-clock time as the measure. The computation cost of \gt comprises the training of weak model and the training of target model. 
Table \ref{tab:wall_clock_time} shows the total wall-clock time for weak model, target model with \gt, and target model trained from scratch. 
The time spent training the weak FNN model is negligible compared to training the target transformer model. The total wall-clock time of \gt is approximately five times faster than training from scratch.%In Figure 

\begin{figure}[t]
\vspace{-3mm}
    \centering
    \begin{subfigure}[b]{0.45\linewidth}
        \centering
        \includegraphics[width=\linewidth]{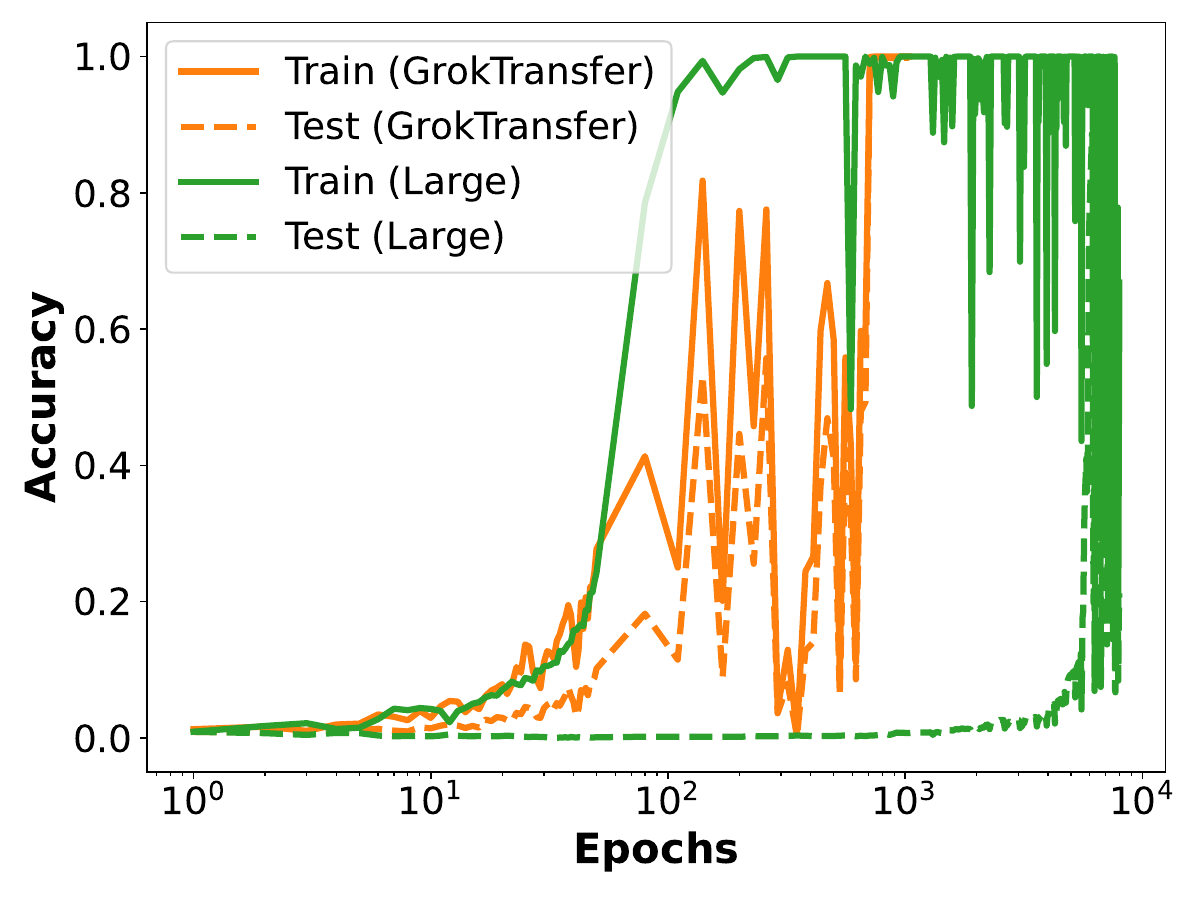}
    \end{subfigure}
    \hfill
    \begin{subfigure}[b]{0.45\linewidth}
        \centering
        \includegraphics[width=\linewidth]{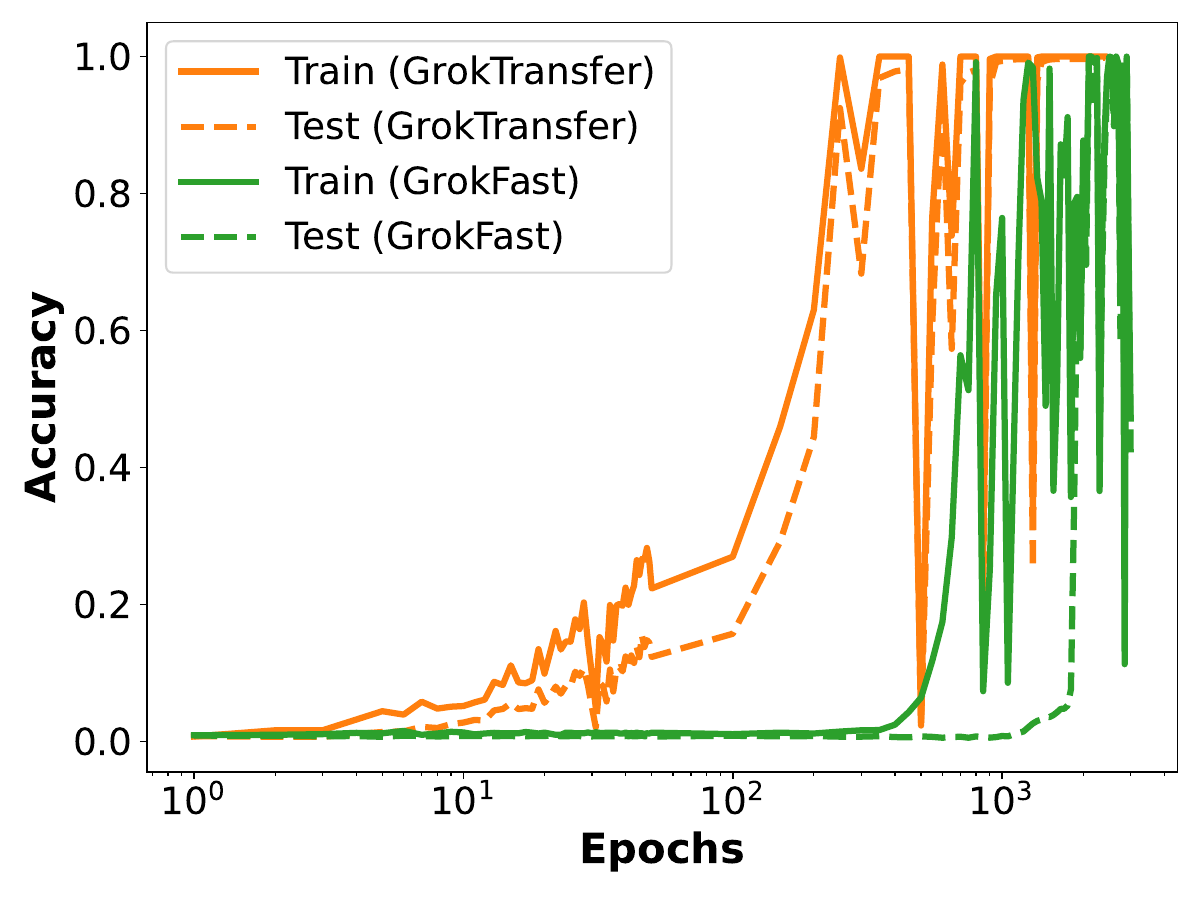}
    \end{subfigure}
    \caption{Training dynamics of Transformers on Modular Addition Task. The weak model is a three-layer FNN. (a) Dynamics of the target model (an \(8\)-layer transformer) trained via \gt, and the target model trained from scratch. (b) Dynamics of the target model (a two-layer Transformer) trained via \gt, and the target model trained via \texttt{GrokFast} \citep{lee2024grokfast}.}
    \label{fig:fnn_to_transformer}
    \vspace{-5mm}
\end{figure}

\begin{table}[ht]
\centering
\begin{tabular}{lccc}
\toprule
\textbf{Model} & \textbf{Weak} & \textbf{Target} (\gt) & \textbf{Target} (scratch) \\
\midrule
Total Wall-clock time (ms) & 2828 & 71079 & 392667 \\
\bottomrule
\end{tabular}
\caption{Comparison of total wall clock times (forward and backward passes) for different models. The weak model is a three-layer FNN. The target/large model is an \(8\)-layer transformer.}
\label{tab:wall_clock_time}
\end{table}
\vspace{-10pt}

\cite{lee2024grokfast} proposed a gradient amplification algorithm \texttt{GrokFast} to accelerate grokking. We compare \gt with \texttt{GrokFast} in Figure \ref{fig:fnn_to_transformer}(b). The weak model embedding we transfer is the same as the one used in Figure \ref{fig:fnn_to_transformer}(a). For the target model, we follow the model used in \cite{lee2024grokfast}, which is a two-layer decoder-only transformer with \((d_{\text{embed}}, d_{\text{mlp}}, n_{\text{head}})=(128, 512, 4)\). The \emph{Time Gap} of \gt is \(46\) while the \emph{Time Gap} of \texttt{GrokFast} is \(1119\).

\section{Conclusion}
To eliminate the unpredictability associated with grokking, we proposed \gt, a novel method that effectively accelerates grokking by transferring the embedding from a weaker model. Our method was inspired by the key observation that data embedding critically shapes training dynamics.
We theoretically justified \gt on an XOR classification task. We also empirically evaluated it on various algorithmic tasks known to exhibit grokking under standard training. Our results showed that \gt can effectively modify training dynamics, enabling continuous progression in model performance.

One limitation of our work is that the theoretical result only considers a relatively simple XOR task. For this task, after transferring the embedding from the smaller model, one step of gradient descent suffices for both memorization and generalization. Theoretical justification for more complex problems is an important future direction.
Furthermore, our method focuses solely on accelerating grokking and was only investigated on problems where grokking occurs. It would be interesting to study whether similar ideas can be applied to improve training dynamics or enable weak-to-strong generalization in a broader context.

\section*{Acknowledgments}

This work was supported in part by the Office of Naval Research under grant number N00014-23-1-2590, the National Science Foundation under Grant No. 2231174, No. 2310831, No. 2428059, No. 2435696, No. 2440954, and a Michigan Institute for Data Science Propelling Original Data Science (PODS) grant.

\newpage
\bibliography{iclr2025_conference}
\bibliographystyle{iclr2025_conference}

%%%%%%%%%%%%%%%%%%%%%%%%%%%%%%%%%%%%%%%%%%%%%%%%%%%%%%%%%%%%
\clearpage
\appendix

\section{Appendix}
\begingroup
\parindent=0em
\etocsettocstyle{\rule{\linewidth}{\tocrulewidth}\vskip0.5\baselineskip}{\rule{\linewidth}{\tocrulewidth}}
\localtableofcontents 
\endgroup

\subsection{Proofs}

\subsubsection{Proof of Lemma \ref{lem:smallmodelexpressivity}}
\smallmodelexpressivity*
\begin{proof}
For any $(x, y) \sim P$, define $ x' = (x_1, -x_2, x_3, \cdots, x_p) $ and $ y' = \sgn(x_1^{\prime} x_2^{\prime})=-y$. It is sufficient to show that if \(y=\sgn(f(x)), y = \sgn(f(-x)), y' = \sgn(f(x'))\), then \(y' \neq \sgn(f(-x'))\) with probability $1$.

Assume \(y = \sgn(f(x)) \) and \( y = \sgn(f(-x))\). Given $\phi(z) \geq 0, \forall z$, $y = \sgn(f(x))$ implies that there exists at least one $i \in [3]$ such that $a_i$ has the same sign as $y$ and $w_i^{\top} x > 0$. Without loss of generality, assume $\sgn(a_1) = y, w_1^{\top}x>0$. Then for $(-x,-y)$, it follows that
\[
f(-x) = \sum_{j=1}^3 a_j \phi(-w_j^{\top}x) = a_2\phi(-w_2^{\top}x) + a_3\phi(-w_3^{\top}x)
\]
has the same sign as $y$. Again without loss of generality, we assume $\sgn(a_2) = y$.

If \(y' = \sgn(f(x'))\) and $y' \neq \sgn(f(-x'))$ hold, following the same discussion, we have that at least two $a_i$'s have the same sign as $y' = -y$, which contradicts the previous assumption that $\sgn(a_1) = \sgn(a_2) = y$.
\end{proof}

\subsubsection{Proof of Theorem \ref{thm:one-step-overfit-and-generalization}}\label{subsubsec:main_proof}
\paragraph{Additional notations:} For training dataset \(\{(x_i, y_i)\}_{i=1}^n\), we denote the signal of \(x_i\) by \(\bar x_i = [x_{i,1}, x_{i,2}]^{\top} \in \{\pm \mu_1, \pm \mu_2\}\). For each \(\mu \in \{\pm \mu_1, \pm \mu_2\}\), define 
\[ \cI_{\mu} = \{i \in [n]: \bar x_i = \mu \}\] 
and \(n_{\mu} = |\cI_{\mu}|\). Denote the new embedding of the \(i\)-th datapoint by \(z_i = U^{\top}x_i, i \in [n]\). Define 
\[ \nu_i = [\mu_2, - \mu_2, -\mu_1]^{\top} \mu_i, \quad i = 1,2 .\] 
Then \( \nu_1 = [0, 0, -2], \nu_2 = [2, -2, 0]\), and \(\{\pm \nu_1, \pm \nu_2\}\) becomes the features for \(P\) with the new embedding. Denote the signal of \(z_i\) by \(\bar z_i = [\mu_2, - \mu_2, -\mu_1]^{\top} \bar x_i\). Define the set of training data
\[
\gdata = \{ \{(x_i, y_i)\}_{i=1}^n: \|z_i - \bar z_i\| \leq \varepsilon^2 \sqrt{\frac{p}{n}}\log n, \text{ for all } i \in [n]\}.
\]
By Lemma \ref{lem:concentrate_of_train_data}, \(\PP(\{(x_i, y_i)\}_{i=1}^n \in \gdata) \geq 1 - \exp(-\Omega(\log^2 n))\). We further define sets to separate the second-layer coefficients for the ease of discussion:
\[
\cJ_{\Pos} = \{j \in [m]: a_j > 0\}; \quad \cJ_{\Neg} = \{j \in [m]: a_j < 0\}.
\]
We divide the index of neurons by its initialization and define \(\cJ_{e} = \{j \in [m]: v_j^{(0)} = v_{\init} e\}\) for \(e\in \unif(\{\pm 1\})^3\). We further define
\[
\cJ_{\Pos, e} = \cJ_{\Pos} \cap \cJ_{e}; \quad \cJ_{\Neg, e} = \cJ_{\Neg} \cap \cJ_{e}.
\]
For each initialization of \(v_j^{(0)}\), we denote the set of datapoints which have positive inner product with it by
\[
\cI_{e, \mu} = \{i \in \cI_{\mu}: \langle e, z_i\rangle > 0 \} , \quad e \in \unif(\{\pm 1\}^3), \mu \in \{\pm \mu_1, \pm \mu_2\}.
\]

\MainResult*
\begin{proof}
For brevity, we omit the subscript \(L\) in \(f_L\) in the proof below.

At step \(t = 0\): for each \((x_i, y_i)\), we have
\[
f^{(0)}(x_i) = \sum_{j = 1}^m a_j \phi(\langle v_j^{(0)}, z_i\rangle), %= \frac{1}{\sqrt{m}}\Big[ \sum_{j \in \cJ_{\Pos, \nu}} \langle v_j^{(0)}, z_i\rangle - \sum_{j \in \cJ_{\Neg, \nu}} \langle v_j^{(0)}, z_i  \rangle\Big]
\]
where \(a_j \phi(\langle v_j^{(0)}, z_i\rangle), j \in [m]\) are bounded random variables with zero mean. The absolute bound is 
\[|a_j\phi(\langle v_j^{(0)}, z_i\rangle)| \leq \frac{\sqrt{3}v_{\init}}{\sqrt{m}}(\max_i\|\bar z_i\|+\varepsilon^2 \sqrt{p/n}\log n) \leq 5v_{\init}/\sqrt{m},\]
where the first inequality uses Lemma \ref{lem:concentrate_of_train_data} and the second inequality uses \(\max_i\|\bar z_i\| = 2\sqrt{2}\) and Assumption \ref{assump:xor-data-snr}.
Then by Hoeffding's inequality and law of total probability, 
\[
\PP(|f^{(0)}(x_i)| > t) \leq \PP(|f^{(0)}(x_i)| > t | \gdata) + \PP(\gdata) \leq 2\exp\Big(- \frac{2 t^2}{25 v_{\init}^2} \Big) + \exp(-\Omega(\log^2 n)).
\]
Let \(t = v_{\init}\log n\). It follows that
\begin{equation}\label{eq:last-condition}
    \begin{split}
        \PP(\max_{i \in [n]}|f^{(0)}(x_i)| \leq t) &\geq 1 - \sum_{i=1}^n \PP(|f^{(0)}(x_i)| > t)\\  
        &\geq 1 - 2n\exp(-\frac{2\log^2 n }{25}) - n\exp(-\Omega(\log^2 n))
        = 1 - \exp(-\Omega(\log^2 n)).
    \end{split}
\end{equation}
We define a set of training data and initial weights:
\begin{equation*}
    \begin{split}
        \cG = \Big\{(\{(x_i,y_i)\}_{i=1}^n, a, V^{(0)}) : \{( & x_i,y_i)\}_{i=1}^n \in \gdata, \text{condition \eqref{eq:last-condition} and } \\
        &\text{all conditions in Lemma \ref{lem:init-neuron-alignment} and \ref{lem: xor-initialization_properties} hold}  \Big\}.
    \end{split}
\end{equation*}
Combining \eqref{eq:last-condition}, Lemma \ref{lem:init-neuron-alignment}, \ref{lem:concentrate_of_train_data}, and \ref{lem: xor-initialization_properties} then applying the union bound, we have
\[
\PP((\{(x_i,y_i)\}_{i=1}^n, a, V^{(0)}) \in \cG) \geq 1 - \exp(-\Omega(\log^2 n))-O(\frac{1}{n^2})-O(\frac{1}{n^4}) = 1 - O(\frac{1}{n^2}).
\]
Denote \(l^{(t)}_i = l(y_i, f^{(t)}(x_i)) = \exp(-y_i f^{(t)}(x_i))\). Conditioning on \(\cG\), the ratio between the maximum and minimum loss is bounded by:
\begin{equation}\label{ineq:loss-ratio}
    R^{(0)} := \frac{\max_{i \in [n]} l^{(0)}_i}{\min_{i \in [n]} l^{(0)}_i} \leq \exp(2v_{\init}\log n).
\end{equation}
For each \(j\), below we will analyze the gradient descent update for all possible combinations of \(a_j^{(0)}, v_j^{(0)}\) conditioning on the event \(\cG\).

\textbf{(1)} When \(a_j > 0\): If \(v_j^{(0)} = v_{\init}[1,1,1]\), then according to Lemma \ref{lem:init-neuron-alignment}, we have
\begin{equation}\label{eq:111-init}
    \cI_{[1,1,1], +\mu_1} = \varnothing;\quad  \cI_{[1,1,1], -\mu_1} = \cI_{-\mu_1};\quad  \Big| |\cI_{[1,1,1], \mu}| - \frac{n_{\mu}}{2} \Big| \leq \sqrt{n \log n}, \mu = \pm \mu_2.
\end{equation}
Recall that the gradient descent update of \(v_j^{(t)}\) is
    \begin{equation}\label{gd_update}
        v_j^{(t+1)} = v_j^{(t)} + \frac{\alpha}{n}a_j\sum_{i=1}^n y_i \exp(-y_i f^{(t)}(x_i)) \phi^{\prime}(\langle v_j^{(t)}, z_i\rangle) z_i.
    \end{equation}
It follows that
\begin{equation}\label{ineq:case1-lower_bound}
     \begin{split}
         v_{j,3}^{(1)} &= v_{j,3}^{(0)} + \frac{\alpha}{n}a_j\sum_{i=1}^n y_i l^{(0)}_i \phi^{\prime}(\langle v_j^{(0)}, z_i\rangle) z_{i,3}\\
        &=v_{j,3}^{(0)} + \frac{\alpha}{n}a_j\sum_{i \in \cI_{-\mu_1}} y_i l^{(0)}_i  z_{i,3} + \frac{\alpha}{n}a_j\sum_{i \in \cI_{[1,1,1],\mu_2}\cup \cI_{[1,1,1],-\mu_2}} y_i l^{(0)}_i  z_{i,3} \\
         &\geq v_{\init} + \frac{2\alpha}{n \sqrt{m}}\sum_{i \in \cI_{-\mu_1}} l^{(0)}_i - O(\frac{\alpha}{\sqrt{m}}\max_i l^{(0)}_i \varepsilon^2 \sqrt{\frac{p}{n}}\log n)\\
         & \geq v_{\init} + \frac{1.9\alpha |\cI_{-\mu_1}|}{n \sqrt{m}} \exp(-v_{\init}\log n) \geq v_{\init} + \frac{1.9\alpha}{4\sqrt{m}}(1 - \frac{4}{\log n})(1 - v_{\init}\log n)\\
         &\geq v_{\init} + \frac{2\alpha}{5\sqrt{m}},
     \end{split}
\end{equation}
where the first inequality uses Lemma \ref{lem:concentrate_of_train_data} and \(\bar z_{i,3} = 0, i \in \cI_{\pm \mu_2}\); the second inequality uses Assumption \ref{assump:xor-data-snr}, \ref{assump:large-model-init-scale} and \ref{assump:step-size}; the third inequality uses Lemma \ref{lem: xor-initialization_properties} and $\exp(x) \geq 1+x$. Further for $l = 1,2,$ we have
\begin{equation}\label{ineq:case1-upper_bound}
    \begin{split}
       \big| v_{j,l}^{(1)} - v_{j,l}^{(0)}\big| &=  \Big|\frac{\alpha}{n}a_j\sum_{i \in \cI_{-\mu_1}} y_i l^{(0)}_i  z_{i,l} + \frac{\alpha}{n}a_j\sum_{i \in \cI_{[1,1,1],\mu_2}\cup \cI_{[1,1,1],-\mu_2}} y_i l^{(0)}_i  z_{i,l} \Big|\\
        &= \frac{\alpha}{n}a_j\BIG\sum_{i \in \cI_{-\mu_1}\cup \cI_{[1,1,1],\mu_2}\cup \cI_{[1,1,1],-\mu_2}} y_i l^{(0)}_i  (z_{i,l}-\bar z_{i,l}) \\
        &\quad \quad \quad \quad \quad - \Big[\sum_{i \in \cI_{[1,1,1],\mu_2}} l^{(0)}_i  \bar z_{i,l} + \sum_{i \in \cI_{[1,1,1],-\mu_2}} l^{(0)}_i  \bar z_{i,l} \Big] \BIG\\
        &\leq \frac{\alpha}{\sqrt{m}} \exp(v_{\init}\log n)\varepsilon^2 \sqrt{\frac{p}{n}}\log n
        + \frac{2\alpha}{n\sqrt{m}}\BIG \sum_{i \in \cI_{[1,1,1],\mu_2}} l^{(0)}_i - \sum_{i \in \cI_{[1,1,1],-\mu_2}} l^{(0)}_i \BIG\\
        &\leq \frac{\alpha}{\sqrt{m}} \exp(v_{\init}\log n)\varepsilon^2 \sqrt{\frac{p}{n}}\log n
         + \frac{2\alpha}{n\sqrt{m}}\exp(v_{\init}\log n)\Big( \frac{n}{8} + \frac{n}{2\log n}\\
         &\quad \quad +\sqrt{n \log n} - \exp(-2v_{\init}\log n)(\frac{n}{8}-\frac{n}{2\log n}-\sqrt{n \log n}) \Big)\\
         &\leq C\frac{\alpha \varepsilon^2 \sqrt{p}}{\sqrt{mn}}\log n + C\frac{\alpha}{n\sqrt{m}}\Big( \frac{n}{\log n} + v_{\init} n\log n  \Big) \leq C\frac{\alpha}{\sqrt{m \log n}},
    \end{split}
\end{equation}
where the first inequality uses \eqref{ineq:loss-ratio} and \(\bar z_{i,l} = -\bar z_{j,l}\) for \(i \in \cI_{[1,1,1], \mu_2}, j \in \cI_{[1,1,1], -\mu_2}\); the second inequality uses \eqref{ineq:loss-ratio}, \eqref{eq:111-init}, and \ref{xor-initialization_pt3} in Lemma \ref{lem: xor-initialization_properties}; the third inequality uses Assumption \ref{assump:xor-data-snr}-\ref{assump:large-model-width}; and the last inequality uses Assumption \ref{assump:xor-data-snr}, \ref{assump:large-model-init-scale} and \ref{assump:step-size}.

For a datapoint \((x, y) \sim P\), define \(z = [z_1,z_2,z_3]^{\top} = U^{\top}x\). Applying Lemma \ref{lem:concentrate_new_embed} we obtain
\[
\PP(\|z-\bar z\| \leq \varepsilon^2 \sqrt{\frac{p}{n}}\log n) \geq 1 - \exp(-\Omega(\log^2 n)).
\]
Conditioning on
\begin{equation}\label{new:condition1}
    \|z-\bar z\| \leq \varepsilon^2 \sqrt{\frac{p}{n}}\log n,
\end{equation}
if \(x_{\text{signal}} =- \mu_1\), we combine \eqref{ineq:case1-lower_bound} and \eqref{ineq:case1-upper_bound} and have
\begin{equation}
    \dotp{v_j^{(1)}}{z} = \dotp{v_j^{(1)}}{\bar z} +  \dotp{v_j^{(1)}}{z - \bar z} \geq 2(v_{\init}+\frac{2\alpha}{5\sqrt{m}}) - Cv_{\init}\varepsilon^2 \sqrt{\frac{p}{n}}\log n \geq \frac{3}{2} (v_{\init}+\frac{2\alpha}{5\sqrt{m}}).
\end{equation}
Further for any pair \(j_1,j_2\) with \(v_{j_1}^{(0)} =  v_{j_2}^{(0)} = v_{\init}[1,1,1]\) and \(a_{j_1}>0, a_{j_2}<0\):

If \(\dotp{v_{j_2}^{(1)}}{z} < 0\), it follows that
\begin{equation}\label{ineq:111-n-case1}
    \begin{split}
        z_3\frac{\alpha}{n\sqrt{m}}\sum_{i=1}^n y_i l^{(0)}_i \phi^{\prime}(\langle v_{j_2}^{(0)}, z_i\rangle) z_{i,3} &= -\dotp{v_{j_2}^{(1)}}{z} + z_3 v_{j_2, 3}^{(0)} + \sum_{l=1}^2 z_l v_{j_1, l}^{(1)}\\
        &\geq z_3 v_{\init} - \sum_{l=1}^2 |z_l-\bar z_l| |v_{j_2,l}^{(1)}|\\
        &\geq z_3 v_{\init} -2\varepsilon^2 \sqrt{\frac{p}{n}}\log n \big(v_{\init} + C\frac{\alpha}{\sqrt{m \log n}}\big) \geq \frac{z_3}{2}v_{\init},
    \end{split}
\end{equation}
where the first inequality uses \(v_{j_2,3}^{(0)}=v_{\init}\) and \(\bar z_l = 0, l=1,2\); the second inequality uses \eqref{new:condition1}; and the last inequality uses condition \eqref{new:condition1}, \(\bar z_3 =2\), and Assumption \ref{assump:xor-data-snr}, \ref{assump:large-model-init-scale} and \ref{assump:step-size}. Combining \eqref{ineq:case1-lower_bound} and \eqref{ineq:111-n-case1}, we have
\[
\frac{\alpha}{n\sqrt{m}}\sum_{i=1}^n y_i l^{(0)}_i \phi^{\prime}(\langle v_{j_2}^{(0)}, z_i\rangle) z_{i,3} \geq \max\{ \frac{v_{\init}}{2}, \frac{2\alpha}{5\sqrt{m}} \},
\]
which together with \eqref{ineq:case1-upper_bound} yield that
\begin{equation}\label{ineq:10-1}
\begin{split}
    &a_{j_1}\phi(\dotp{v_{j_1}^{(1)}}{z}) +  a_{j_2}\phi(\dotp{v_{j_2}^{(1)}}{z})  = a_{j_1}\dotp{v_{j_1}^{(1)}}{z}\\
    & = \frac{1}{\sqrt{m}}\Big[\dotp{v_{j_1}^{(0)}}{z} + \frac{\alpha}{n\sqrt{m}}z_3 \sum_{i=1}^n y_i l^{(0)}_i \phi^{\prime}(\langle v_{j_2}^{(0)}, z_i\rangle) z_{i,3}
    + \sum_{l=1}^2 (v_{j_1, l}^{(1)}-v_{j_1, l}^{(0)})z_l\Big]\\
    &\geq \frac{1}{\sqrt{m}}\Big[v_{\init} + \max\{ \frac{v_{\init}}{2}, \frac{2\alpha}{5\sqrt{m}} \} - Cv_{\init}\varepsilon^2 \sqrt{\frac{p}{n}}\log n - C\frac{\alpha}{\sqrt{m \log n}}\varepsilon^2 \sqrt{\frac{p}{n}}\log n
    \Big]\\
    &\geq \frac{1}{\sqrt{m}}\Big[v_{\init} +  \frac{v_{\init}}{4}+ \frac{\alpha}{5\sqrt{m}} - Cv_{\init}\varepsilon^2 \sqrt{\frac{p}{n}}\log n - C\frac{\alpha}{\sqrt{m \log n}}\varepsilon^2 \sqrt{\frac{p}{n}}\log n
    \Big]\\
    &\geq \frac{v_{\init}}{\sqrt{m}} +  \frac{\alpha}{10m},
\end{split}
\end{equation}
where the second inequality uses \(\max(x,y)\geq (x+y)/2\) and the last inequality uses the fact that \(n\) is sufficiently large.

If \(\dotp{v_{j_2}^{(1)}}{z} > 0\), we have
\begin{equation}\label{ineq:10-2}
\begin{split}
    &a_{j_1}\phi(\dotp{v_{j_1}^{(1)}}{z}) +  a_{j_2}\phi(\dotp{v_{j_2}^{(1)}}{z})  = \frac{1}{\sqrt
    {m}}\dotp{v_{j_1}^{(1)}-v_{j_2}^{(1)}}{z}\\ &= \frac{1}{\sqrt
    {m}}\dotp{v_{j_1}^{(1)}-v_{j_1}^{(0)}}{z} - \frac{1}{\sqrt
    {m}}\dotp{v_{j_2}^{(1)}-v_{j_2}^{(0)}}{z}\\
    & = \frac{1}{\sqrt{m}}\Big[2\frac{\alpha}{n\sqrt{m}}z_3 \sum_{i=1}^n y_i l^{(0)}_i \phi^{\prime}(\langle v_{j_1}^{(0)}, z_i\rangle) z_{i,3} + \sum_{l=1}^2 (v_{j_1,l}^{(1)}-v_{j_2,l}^{(1)})(z_l - \bar z_l)
    \Big]\\
    &\geq \frac{1}{\sqrt{m}}\Big[\frac{4\alpha}{5\sqrt{m}} - C\frac{\alpha}{\sqrt{m \log n}}\varepsilon^2 \sqrt{\frac{p}{n}}\log n
    \Big] \geq \frac{2\alpha}{5m},
\end{split}
\end{equation}
where the second equation uses \(v_{j_1}^{(0)} = v_{j_2}^{(0)}\); the third equation uses \eqref{gd_update}; the first inequality uses \eqref{ineq:case1-lower_bound}; and the second inequality uses Assumption \ref{assump:xor-data-snr}. Combining \eqref{ineq:10-1} and \eqref{ineq:10-2}, it follows that
\begin{equation}\label{ineq:pair-sum--mu1}
    a_{j_1}\phi(\dotp{v_{j_1}^{(1)}}{z}) +  a_{j_2}\phi(\dotp{v_{j_2}^{(1)}}{z}) \geq \frac{2\alpha}{5m}
\end{equation}
when \(v_{j_1}^{(0)} = v_{j_2}^{(0)} = v_{\init}[1,1,1]\) and \(x_{\text{signal}} = - \mu_1\).

If \(x_{\text{signal}} =+\mu_1\), following the same procedure, we obtain that \(\dotp{v_j^{(1)}}{z} < 0\) for \(a_j>0\). For \(a_j<0\), similar to \eqref{ineq:case1-lower_bound}, we have
\begin{equation}\label{ineq:case1-upper_bound2}
     \begin{split}
         v_{j,3}^{(1)} &=v_{j,3}^{(0)} + \frac{\alpha}{n}a_j\sum_{i \in \cI_{-\mu_1}} y_i l^{(0)}_i  z_{i,3} + \frac{\alpha}{n}a_j\sum_{i \in \cI_{[1,1,1],\mu_2}\cup \cI_{[1,1,1],-\mu_2}} y_i l^{(0)}_i  z_{i,3} \\
         &\geq v_{\init} - \frac{2\alpha}{n \sqrt{m}}\sum_{i \in \cI_{-\mu_1}} l^{(0)}_i - O(\frac{\alpha}{\sqrt{m}}\max_i l^{(0)}_i \varepsilon^2 \sqrt{\frac{p}{n}}\log n)\\
         & \geq v_{\init} - \frac{2.1\alpha |\cI_{-\mu_1}|}{n \sqrt{m}} \exp(v_{\init}\log n) 
         \geq v_{\init} - \frac{2.1\alpha}{4\sqrt{m}}(1 + \frac{4}{\log n})(1 + 2v_{\init}\log n)\\
         &\geq v_{\init} - \frac{3\alpha}{4\sqrt{m}} \geq \frac{v_{\init}}{4},
     \end{split}
\end{equation}
where the last inequality comes from Assumption \ref{assump:step-size}. Then \(\dotp{v_j^{(1)}}{z} < 0\) also hold for \(a_j<0\) following the same analysis. Thus we have
\begin{equation}\label{ineq:pair-sum-+mu1}
    a_{j_1}\phi(\dotp{v_{j_1}^{(1)}}{z}) +  a_{j_2}\phi(\dotp{v_{j_2}^{(1)}}{z}) = 0
\end{equation}
when \(v_{j_1}^{(0)} = v_{j_2}^{(0)} = v_{\init}[1,1,1]\) and \(x_{\text{signal}} = +\mu_1\).

If \(x_{\text{signal}} \in \{\pm\mu_2\}\), combining \eqref{ineq:case1-lower_bound} and \eqref{ineq:case1-upper_bound},  we have
\begin{equation*}
    \begin{split}
        |\dotp{v_j^{(1)}}{z}| &\leq |\dotp{v_j^{(0)}}{\bar z}| + |\dotp{v_j^{(0)}}{z - \bar z}| + |\dotp{v_j^{(1)}-v_j^{(0)}}{\bar z}|  + |\dotp{v_j^{(1)}-v_j^{(0)}}{z - \bar z}| \\
        &\leq 0 + v_{\init}\varepsilon^2 \sqrt{\frac{p}{n}}\log n + 0 + C\frac{\alpha}{\sqrt{m}
        }\varepsilon^2 \sqrt{\frac{p}{n}}\log n \leq 2v_{\init} \varepsilon^2 \sqrt{\frac{p}{n}}\log n,
    \end{split}
\end{equation*}
where the last inequality uses Assumption \ref{assump:large-model-init-scale} and \ref{assump:step-size}. Thus
\begin{equation}\label{ineq:ip2}
    |a_j \dotp{v_j^{(1)}}{z}| \leq 2v_{\init} \varepsilon^2 \sqrt{\frac{p}{nm}}\log n \leq \frac{2v_{\init}}{\sqrt{m \log n}}.
\end{equation}
Note that
neurons initialized with \(v_{\init}[i,i,k], i,k\in \{\pm 1\}\) share very similar dynamics and following the same procedure, specifically, if \(k = +1\) (resp. \(-1\)), the neurons align well with \(-\mu_1\) (resp. \(+\mu_1\)). Additionally, the neurons do not align well with \(\pm \mu_2\) for both \(i=+1\) and \(i=-1\). For brevity, we omit the analysis for \(v_{j}^{(0)} = v_{\init}[i,i,k], i,k\in \{\pm 1\} \backslash \{v_{\init}[1,1,1]\} \).

Next we analyze the one-step update of neuron \(v_j\) with initialization \(v_{\init}[1,-1,1]\).

\textbf{(2)} If \(v_j^{(0)} = v_{\init}[1,-1,1]\), then according to Lemma \ref{lem:init-neuron-alignment}, we have
\begin{equation}\label{eq:1-11-init}
    \cI_{[1,-1,1], +\mu_1} = \varnothing;\quad  \cI_{[1,-1,1], -\mu_1} = \cI_{-\mu_1};\quad  \cI_{[1,-1,1], \mu_2} = \cI_{+\mu_2}; \quad \cI_{[1,-1,1], -\mu_2} = \varnothing.
\end{equation}
Similar to \eqref{ineq:case1-lower_bound}, we have
\begin{equation}\label{ineq:case2-lower_bound}
        \begin{split}
        &\Big| v_{j,3}^{(1)} - \big(v_{j,3}^{(0)} + \frac{\alpha}{2}a_j \big)\Big| =  \Big|\frac{\alpha}{n}a_j\sum_{i=1}^n y_i l^{(0)}_i \phi^{\prime}(\langle v_j^{(0)}, z_i\rangle) z_{i,3}  - \frac{\alpha}{2}a_j \Big| \\
        &=\BIG\frac{\alpha}{n}a_j\Big[\sum_{i\in \cI_{-\mu_1}} y_i l^{(0)}_i  z_{i,3} + \sum_{i\in \cI_{+\mu_2}} y_i l^{(0)}_i z_{i,3}\Big] - \frac{\alpha}{2}a_j \BIG\\
        &=\BIG \frac{\alpha}{n\sqrt{m}}\Big[\sum_{i\in \cI_{-\mu_1}} l^{(0)}_i  z_{i,3} - \sum_{i\in \cI_{+\mu_2}} l^{(0)}_i z_{i,3}\Big]  - \frac{\alpha}{2\sqrt{m}} \BIG\\
        &=\BIG\frac{\alpha}{n\sqrt{m}}\Big[\sum_{i\in \cI_{-\mu_1}} l^{(0)}_i \bar z_{i,3}  + \sum_{i\in \cI_{-\mu_1}} l^{(0)}_i  (z_{i,3}-\bar z_{i,3}) - \sum_{i\in \cI_{+\mu_2}} l^{(0)}_i (z_{i,3}-\bar z_{i,3})\Big]  - \frac{\alpha}{2\sqrt{m}} \BIG\\
        &\leq \frac{2\alpha}{n\sqrt{m}}\big| \frac{n}{4} - n_{-\mu_1}\exp(-v_{\init}\log n)\big|
        + \frac{\alpha}{n\sqrt{m}}\big(n_{-\mu_1}+n_{+\mu_2}\big)\exp(v_{\init}\log n) \varepsilon^2 \sqrt{\frac{p}{n}}\log n\\
        &= O(\frac{\alpha}{\sqrt{m}}(\varepsilon^2 \sqrt{\frac{p}{n}} + v_{\init})\log n) = O(\frac{\alpha}{\sqrt{m \log n}}),
     \end{split}
\end{equation}
where the first equation comes from the GD update; the second equation uses \eqref{eq:1-11-init}; the third equation uses \(|a_j|=1/\sqrt{m}\); the fourth equation uses \(\bar z_{i,3} = 0\) for \(i \in \cI_{+\mu_2}\); the first inequality uses \(\bar z_{i,3} = 2, i \in \cI_{-\mu_1}\), \eqref{ineq:loss-ratio} and the definition of \(\cG\); the fifth equation uses \(|n_{\mu}-n/4| \leq n / \log n\) and \(|\exp(-v_{\init}\log n)-1| \leq 2 v_{\init}\log n \leq 2/\sqrt{\log n}\) by Assumption \ref{assump:large-model-init-scale}; and the last equation uses Assumption \ref{assump:xor-data-snr} and \ref{assump:large-model-init-scale}.
Further for the first entry of \(v_j\), we have
\begin{equation}\label{ineq:case3-1-upper_bound}
     \begin{split}
        &\Big| v_{j,1}^{(1)} - \big(v_{j,1}^{(0)} - \frac{\alpha}{2}a_j \big)\Big| =  \Big|\frac{\alpha}{n}a_j\sum_{i=1}^n y_i l^{(0)}_i \phi^{\prime}(\langle v_j^{(0)}, z_i\rangle) z_{i,1}  + \frac{\alpha}{2}a_j \Big| \\
        &=\BIG\frac{\alpha}{n}a_j\Big[\sum_{i\in \cI_{-\mu_1}} y_i l^{(0)}_i  z_{i,1} + \sum_{i\in \cI_{+\mu_2}} y_i l^{(0)}_i z_{i,1}\Big] + \frac{\alpha}{2}a_j \BIG\\
        &=\BIG \frac{\alpha}{n\sqrt{m}}\Big[\sum_{i\in \cI_{-\mu_1}} l^{(0)}_i  z_{i,1} - \sum_{i\in \cI_{+\mu_2}} l^{(0)}_i z_{i,1}\Big]  + \frac{\alpha}{2\sqrt{m}} \BIG\\
        &=\BIG\frac{\alpha}{n\sqrt{m}}\Big[- \sum_{i\in \cI_{+\mu_2}} l^{(0)}_i \bar z_{i,1}  + \sum_{i\in \cI_{-\mu_1}} l^{(0)}_i  (z_{i,1}-\bar z_{i,1}) - \sum_{i\in \cI_{+\mu_2}} l^{(0)}_i (z_{i,1}-\bar z_{i,1})\Big]  + \frac{\alpha}{2\sqrt{m}} \BIG\\
        &\leq \frac{2\alpha}{n\sqrt{m}}\big| 1 - n_{+\mu_2}\exp(-v_{\init}\log n)\big|
        + \frac{\alpha}{n\sqrt{m}}\big(n_{-\mu_1}+n_{+\mu_2}\big)\exp(v_{\init}\log n) \varepsilon^2 \sqrt{\frac{p}{n}}\log n\\
        &= O(\frac{\alpha}{\sqrt{m}}(\varepsilon^2 \sqrt{\frac{p}{n}} + v_{\init})\log n) = O(\frac{\alpha}{\sqrt{m \log n}}),
     \end{split}
\end{equation}
where the inequality uses \(\bar z_{i,1} = 2\) for \(i \in \cI_{+\mu_2}\). And for the second entry of \(v_j\), it follows similarly that
\begin{equation}\label{ineq:case3-2-lower_bound}
     \begin{split}
        \Big| v_{j,2}^{(1)} - \big(v_{j,2}^{(0)} + \frac{\alpha}{2}a_j \big)\Big| &=\BIG \frac{\alpha}{n\sqrt{m}}\Big[\sum_{i\in \cI_{-\mu_1}} l^{(0)}_i  z_{i,2} - \sum_{i\in \cI_{+\mu_2}} l^{(0)}_i z_{i,2}\Big]  - \frac{\alpha}{2\sqrt{m}}\BIG = O(\frac{\alpha}{\sqrt{m \log n}}).
     \end{split}
\end{equation}
Unifying \eqref{ineq:case2-lower_bound}, \eqref{ineq:case2-lower_bound} and \eqref{ineq:case3-1-upper_bound}, we obtain
\begin{equation}\label{ineq:case2-unified}
    \Big| v_{j,l}^{(1)} - \big(v_{j,l}^{(0)} + \frac{\alpha}{2}a_j \sgn(v_{j,l}^{(0)}) \xi_l \big)\Big| = O(\frac{\alpha}{\sqrt{m \log n}})
\end{equation}
for \(l=1,2,3\). Here \(\{\xi_l\}\) are defined as \(\xi_l = -1, l=1,2\) and \(\xi_3 = 1\).

For a datapoint \((x, y) \sim P\) with \(z = [z_1,z_2,z_3]^{\top} = U^{\top}x\). 
We condition on the event
\begin{equation*}
    \|z-\bar z\| \leq \varepsilon^2 \sqrt{\frac{p}{n}}\log n.
\end{equation*}
If \(x_{\text{signal}} =- \mu_1\): for each pair \(j_1,j_2\) with \(v_{j_1}^{(0)} =  v_{j_2}^{(0)} = v_{\init}[1,-1,1]\) and \(a_{j_1}>0, a_{j_2}<0\), we have \(\dotp{v_{j_l}^{(1)}}{z} > 0, l  = 1,2\), and
\[
\| v_{j_1}^{(1)}-v_{j_2}^{(1)} - \frac{\alpha}{\sqrt{m}}[-1, 1, 1]^{\top}\| = \| (v_{j_1}^{(1)}-v_{j_1}^{(0)}) - (v_{j_2}^{(1)}-v_{j_2}^{(0)}) - \frac{\alpha}{\sqrt{m}}[-1, 1, 1]^{\top}\| = O(\frac{\alpha}{\sqrt{m \log n}})
\]
by \eqref{ineq:case2-unified}. It follows that
\begin{equation}\label{ineq:pair-sum--mu1-case2}
    \begin{split}
        &a_{j_1}\phi(\dotp{v_{j_1}^{(1)}}{z}) +  a_{j_2}\phi(\dotp{v_{j_2}^{(1)}}{z})  = \frac{1}{\sqrt
    {m}}\dotp{v_{j_1}^{(1)}-v_{j_2}^{(1)}}{z}\\ 
    &= \frac{1}{\sqrt
    {m}} \Big(\dotp{\frac{\alpha}{\sqrt{m}}[-1,1,1]}{\bar z} + \dotp{v_{j_1}^{(1)}-v_{j_2}^{(1)} - \frac{\alpha}{\sqrt{m}}[-1,1,1]}{\bar z} + \dotp{v_{j_1}^{(1)}-v_{j_2}^{(1)}}{z-\bar z} \Big)\\
    & \geq  \frac{1}{\sqrt{m}} \Big(\frac{2\alpha}{\sqrt{m}
    } - O(\frac{\alpha}{\sqrt{m}\log n}) -  O(\frac{\alpha}{\sqrt{m} \log n} \varepsilon^2 \sqrt{\frac{p}{n}} \log n)\Big) \geq \frac{\alpha}{m}.
    \end{split}
\end{equation}
If \(x_{\text{signal}} =+ \mu_1\): we have \(\dotp{v_{j_l}^{(1)}}{z} < 0, l  = 1,2\), thus
\[
 a_{j_1}\phi(\dotp{v_{j_1}^{(1)}}{z}) =  a_{j_2}\phi(\dotp{v_{j_2}^{(1)}}{z}) = 0
\]
If \(x_{\text{signal}} =+ \mu_2\): we have \(\dotp{v_{j_l}^{(0)}}{z} > 0, l  = 1,2\). Applying \eqref{ineq:case2-unified} and Assumption \ref{assump:step-size}, we have \(\dotp{v_{j_l}^{(1)}}{z} > 0, l  = 1,2\). It follows that
\begin{equation}\label{ineq:ip3}
    \begin{split}
        &a_{j_1}\phi(\dotp{v_{j_1}^{(1)}}{z}) +  a_{j_2}\phi(\dotp{v_{j_2}^{(1)}}{z})  = \frac{1}{\sqrt
    {m}}\dotp{v_{j_1}^{(1)}-v_{j_2}^{(1)}}{z}\\ 
    &= \frac{1}{\sqrt
    {m}} \Big(\dotp{\frac{\alpha}{\sqrt{m}}[-1,1,1]}{\bar z} + \dotp{v_{j_1}^{(1)}-v_{j_2}^{(1)} - \frac{\alpha}{\sqrt{m}}[-1,1,1]}{\bar z} + \dotp{v_{j_1}^{(1)}-v_{j_2}^{(1)}}{z-\bar z} \Big)\\
    & \leq   \Big(-\frac{2\alpha}{\sqrt{m}
    } + O(\frac{\alpha}{\sqrt{m}\log n}) +  O(\frac{\alpha}{\sqrt{m} \log n} \varepsilon^2 \sqrt{\frac{p}{n}} \log n)\Big) \leq -\frac{\alpha}{m}
    \end{split}
\end{equation}
for sufficiently large \(n\). Here the last inequality uses Assumption \ref{assump:xor-data-snr}.

If \(x_{\text{signal}} =- \mu_2\): we have \(\dotp{v_{j_l}^{(0)}}{z} < 0, l  = 1,2\). Applying \eqref{ineq:case2-unified} and Assumption \ref{assump:step-size}, we have \(\dotp{v_{j_l}^{(1)}}{z} < 0, l  = 1,2\). It follows that
\begin{equation}
    a_{j_1}\phi(\dotp{v_{j_1}^{(1)}}{z}) +  a_{j_2}\phi(\dotp{v_{j_2}^{(1)}}{z})  = 0.
\end{equation}
In conclusion, for datapoint \((x,y)\) with \(x_{\text{signal}} = - \mu_1\), conditioning on \eqref{new:condition1}, the output of \(f^{(1)}\) is
\begin{equation}\label{case1:-mu1}
    \begin{split}
        f^{(1)}(x) &= \sum_{j=1}^m a_j \phi(\dotp{v_j^{(1)}}{z}) = \sum_{e \in \unif(\{\pm 1\}^3)}\sum_{j \in \cJ_e} a_j \phi(\dotp{v_j^{(1)}}{z}) \\
        &= \sum_{e: e_3 = 1}\Big[\sum_{j \in \cJ_{\Pos, e}} a_j\phi(\dotp{v_j^{(1)}}{z}) - \sum_{j \in \cJ_{\Neg, e}}a_j \phi(\dotp{v_j^{(1)}}{z}) \Big]\\
        &\geq \sum_{e: e_3 = 1}\Big[\min \{ |\cJ_{\Pos, e}|, |\cJ_{\Neg, e}|\} \frac{2\alpha}{5m} - \frac{4\sqrt{m}}{\log n}(v_{\init} + \frac{\alpha}{\sqrt{m}}) \Big] \\
        &\geq \sum_{e: e_3 = 1}\Big[ \frac{\alpha}{40} - \frac{4\sqrt{m}}{\log n}(C \frac{\alpha}{\sqrt{m}} + \frac{\alpha}{\sqrt{m}}) \Big] > 0
    \end{split}
\end{equation}
for sufficiently large \(n\). Here the first inequality uses \eqref{ineq:pair-sum--mu1} and \eqref{ineq:pair-sum--mu1-case2}, the property that \(||\cJ_{\Pos, e}|- |\cJ_{\Neg, e}|| \leq 2m/\log n\) from \ref{xor-initialization_pt2.6} in Lemma \ref{lem: xor-initialization_properties} and the property that \(\phi(\dotp{v_j^{(1)}}{z}) \leq 2(v_{\init}+\alpha/\sqrt{m})\); the second inequality uses \ref{xor-initialization_pt2.6} and Assumption \ref{assump:step-size}. Similarly, we have that for datapoint \((x,y)\) with \(x_{\text{signal}} = + \mu_1\), conditioning on \eqref{new:condition1}, the output of \(f^{(1)}\) is
\begin{equation}\label{case1:+mu1}
        f^{(1)}(x)= \sum_{e: e_3 = -1}\Big[\sum_{j \in \cJ_{\Pos, e}} a_j\phi(\dotp{v_j^{(1)}}{z}) - \sum_{j \in \cJ_{\Neg, e}}a_j \phi(\dotp{v_j^{(1)}}{z}) \Big] > 0.
\end{equation}
For datapoint \((x,y)\) with \(x_{\text{signal}} = + \mu_2\), conditioning on \eqref{new:condition1}, the output of \(f^{(1)}\) is
\begin{equation}\label{case1:+mu2}
    \begin{split}
        f^{(1)}(x) &= \sum_{j=1}^m a_j \phi(\dotp{v_j^{(1)}}{z}) = \sum_{e \in \unif(\{\pm 1\}^3)}\sum_{j \in \cJ_e} a_j \phi(\dotp{v_j^{(1)}}{z}) \\
        &= (\sum_{e: [e_1,e_2] = [1,-1]} + \sum_{e: e_1=e_2})\Big[\sum_{j \in \cJ_{\Pos, e}} a_j\phi(\dotp{v_j^{(1)}}{z}) - \sum_{j \in \cJ_{\Neg, e}}a_j \phi(\dotp{v_j^{(1)}}{z}) \Big]\\
        &\leq \sum_{e: [e_1,e_2] = [1,-1]}\Big[-\min \{ |\cJ_{\Pos, e}|, |\cJ_{\Neg, e}|\} \frac{\alpha}{m} + \frac{4\sqrt{m}}{\log n}(v_{\init} + \frac{\alpha}{\sqrt{m}}) \Big] + \sum_{e: e_1=e_2}\frac{2v_{\init}|\cJ_{\Neg, e}|}{\sqrt{m \log n}}  \\
        &\leq 2\Big(- \frac{\alpha}{16} + \frac{5\sqrt{m}}{\log n}(v_{\init} + \frac{\alpha}{\sqrt{m}}) \Big) +  \frac{v_{\init}\sqrt{m}}{\sqrt{\log n}} \leq -\frac{\alpha}{8} + \frac{10(C+1) \alpha }{\log n} + \frac{C\alpha}{\sqrt{\log n}} < 0,
    \end{split}
\end{equation}
where the first inequality uses \eqref{ineq:ip2} , \eqref{ineq:ip3}, \ref{xor-initialization_pt2.6} and the property that \(\phi(\dotp{v_j^{(1)}}{z}) \leq 2(v_{\init}+\alpha/\sqrt{m})\); the second inequality uses \ref{xor-initialization_pt2.6}; and the third inequality uses Assumption \ref{assump:step-size}. Similarly, we have that for datapoint \((x,y)\) with \(x_{\text{signal}} = - \mu_2\), conditioning on \eqref{new:condition1}, \(f^{(1)}(x) < 0\), which combined with \eqref{case1:-mu1}, \eqref{case1:+mu1} and \eqref{case1:+mu2}, yields that
\[
\sgn(f^{(1)}(x)) = y
\]
for any \((x, y) \sim P, z = U^{\top}x\) satisfying
\begin{equation*}
    \|z-\bar z\| \leq \varepsilon^2 \sqrt{\frac{p}{n}}\log n.
\end{equation*}
According to the definition of \(\gdata\), all \( (x_i,y_i) \) satisfy this condition. Thus conditioning on the event \(\cG\), the model \(f^{(1)}\) can correctly classify all training data points. And applying the law of total probability, we obtain that the test error is bounded by:
\begin{equation*}
    \begin{split}
        \PP_{(x,y)\sim P}(y \neq \sgn(f^{(1)}(x))) &\leq \PP_{(x,y)\sim P}\Big( y \neq \sgn(f^{(1)}(x)) \mid   \|z-\bar z\| \leq \varepsilon^2 \sqrt{\frac{p}{n}}\log n \Big)\\
        &\quad + \PP_{(x,y)\sim P}\Big( \|z-\bar z\| \leq \varepsilon^2 \sqrt{\frac{p}{n}}\log n \Big) \\
        &= \PP_{(x,y)\sim P}\Big( \|z-\bar z\| \leq \varepsilon^2 \sqrt{\frac{p}{n}}\log n \Big) \leq \exp(-\Omega(\log^2 n)),
    \end{split}
\end{equation*}
where the last inequality uses Lemma \ref{lem:concentrate_new_embed}.
\end{proof}

\begin{lemma}\label{lem:init-neuron-alignment}
    Suppose that Assumption \ref{assump:small-model-weight} holds. With probability at least \(1 - O(\frac{1}{n^2})\), the following conditions hold:
    \begin{equation}\label{lem:init-neuron-alignment-1}
        \begin{split}
            &\cI_{[i,j,-1],+\mu_1} = \cI_{+\mu_1}; \quad \cI_{[i,j,-1],-\mu_1} = \varnothing, \quad i,j \in \{\pm 1\};\\
            &\cI_{[i,j,+1],+\mu_1} = \varnothing; \quad \cI_{[i,j,+1],-\mu_1} = \cI_{-\mu_1}, \quad i,j \in \{\pm 1\};\\
            &\cI_{[+1,-1,k], +\mu_2} = \cI_{+\mu_2}; \quad \cI_{[+1,-1,k], -\mu_2} = \varnothing, \quad k \in \{\pm 1\};
        \end{split}
    \end{equation}
\begin{equation}\label{lem:init-neuron-alignment-2}
\begin{split}
&\cI_{[-1,+1,k], +\mu_2} = \varnothing; \quad \cI_{[-1,+1,k], -\mu_2} = \cI_{-\mu_2}, \quad k \in \{\pm 1\};\\
    &\Big||\cI_{[i,i,k],\mu}| - \frac{n_{\mu}}{2}\Big| \leq \sqrt{n \log n}, \quad i,k \in \{\pm 1\}, \mu \in \{\pm \mu_2\}.
\end{split}
\end{equation}
\end{lemma}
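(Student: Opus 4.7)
The plan is to decompose each embedded training point as $z_i = \bar z_i + (z_i - \bar z_i)$ and to treat separately the cases where the signal inner product $\langle e, \bar z_i\rangle$ is bounded away from zero (handled deterministically) and where it vanishes (handled by concentration over the random noise). By Lemma \ref{lem:concentrate_of_train_data}, with probability at least $1 - \exp(-\Omega(\log^2 n))$ the event $\gdata$ holds, i.e.\ $\|z_i - \bar z_i\| \leq \varepsilon^2\sqrt{p/n}\log n$ simultaneously for all $i \in [n]$. Cauchy--Schwarz combined with $\|e\|=\sqrt{3}$ then yields $|\langle e, z_i - \bar z_i\rangle| \leq \sqrt{3}\,\varepsilon^2\sqrt{p/n}\log n = O(1/\sqrt{\log n})$ by Assumption \ref{assump:xor-data-snr}, which is in particular strictly less than $1$ for $n$ large enough.

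For the first four displayed lines, I compute $\bar z_i = [\mu_2,-\mu_2,-\mu_1]^\top \bar x_i$ case-by-case, obtaining $\bar z_i = [0,0,\mp 2]$ when $\bar x_i = \pm\mu_1$ and $\bar z_i = [\pm 2,\mp 2,0]$ when $\bar x_i = \pm\mu_2$. For $e = [i,j,-1]$ this gives $\langle e,\bar z_i\rangle = +2$ when $\bar x_i = +\mu_1$ and $-2$ when $\bar x_i = -\mu_1$, while the noise correction above is $o(1)$, so $\sgn(\langle e,z_i\rangle) = \sgn(\langle e,\bar z_i\rangle)$ and the first two lines of \eqref{lem:init-neuron-alignment-1} follow; the case $e = [i,j,+1]$ is entirely symmetric. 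For $e \in \{[+1,-1,k],[-1,+1,k]\}$ and $\bar x_i \in \{\pm\mu_2\}$ we get $|\langle e,\bar z_i\rangle| = 4$, and the same signal-dominates-noise argument produces the third line of \eqref{lem:init-neuron-alignment-1} and the first line of \eqref{lem:init-neuron-alignment-2}.

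The last line is the only genuinely probabilistic one: when $e = [i,i,k]$ and $\mu \in \{\pm\mu_2\}$, the signal cancels, $\langle e, \bar z_i\rangle = 0$, so $\langle e, z_i\rangle = \eta^\top x_{\mathrm{noise},i}$ with $\eta = i(\delta_1+\delta_2) + k\delta_3 \in \RR^{p-2}$ deterministic once the weak model is trained, and $x_{\mathrm{noise},i}$ an $\varepsilon$-scaled Rademacher vector independent of $\bar x_i$. Hence the indicators $\mathbf{1}\{\eta^\top x_{\mathrm{noise},i}>0\}$ for $i \in \cI_\mu$ are i.i.d.\ with common mean $\tfrac{1}{2}\bigl(1-\PP(\eta^\top x_{\mathrm{noise},i}=0)\bigr)$ by symmetry of the Rademacher distribution. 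Hoeffding's inequality then gives $\bigl||\cI_{[i,i,k],\mu}| - n_\mu/2\bigr| \leq \sqrt{n\log n}$ with probability at least $1 - O(n^{-4})$, and a union bound over the $O(1)$ relevant triples $(e,\mu)$ together with the bad event of Lemma \ref{lem:concentrate_of_train_data} delivers the overall $1 - O(1/n^2)$ probability.

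The main obstacle is the symmetry step of the last paragraph: one must check that the point mass $\PP(\eta^\top x_{\mathrm{noise},i}=0)$ is $o(\sqrt{\log n / n})$ so that it is absorbed into the Hoeffding slack, which in turn requires $\eta$ to have enough non-zero coordinates. This should follow from Assumption \ref{assump:small-model-weight} together with a Littlewood--Offord style anti-concentration bound on Rademacher sums, and is the only place where non-trivial information about the complementary subnetwork enters beyond the crude norm control. Everything else is bookkeeping of which sign pattern in $\{\pm 1\}^3$ is aligned with which cluster center of the XOR distribution under the transferred embedding.
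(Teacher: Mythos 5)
Your argument mirrors the paper's proof: condition on $\gdata$ via Lemma \ref{lem:concentrate_of_train_data}, dispose of the containments in \eqref{lem:init-neuron-alignment-1}--\eqref{lem:init-neuron-alignment-2} by observing that the signal inner products $\langle e, \bar z_i\rangle \in \{\pm 2, \pm 4\}$ dominate the $O(1/\sqrt{\log n})$ noise correction, and control $|\cI_{[i,i,k],\mu}|$ by Hoeffding applied to a sum of indicator variables. You are also right to single out the symmetry step as the delicate point: the paper flatly asserts $\PP(\langle v_j^{(0)}, z_i\rangle > 0) = 1/2$, which tacitly assumes the Rademacher sum $\eta^\top x_{\mathrm{noise},i}$ places no mass at the origin; your identity $\PP(>0) = \tfrac12\bigl(1-\PP(=0)\bigr)$ makes that hypothesis explicit, and the concentration around $n_\mu/2$ genuinely fails without it.

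However, the patch you propose does not close the gap. Assumption \ref{assump:small-model-weight} bounds $\|\delta\|_{\F}$ from \emph{above} and gives no lower bound on any coordinate of $\eta = i(\delta_1+\delta_2)+k\delta_3$, so it cannot feed a Littlewood--Offord estimate, which needs many entries of $\eta$ to be bounded away from zero. Nothing in Assumptions \ref{assump:xor-data-snr}--\ref{assump:large-model-width} rules out, for instance, $\eta$ having exactly two nonzero entries of equal magnitude, in which case $\PP(\eta^\top x_{\mathrm{noise},i}=0)=\tfrac12$ and the $\sqrt{n\log n}$ bound is simply false. The gap must instead be closed by an argument the paper leaves implicit, most naturally genericity of the trained weak-model weights: $\delta$ is the output of gradient descent from a continuous (Gaussian) initialization, and the union of hyperplanes $\{\eta : \eta^\top\epsilon = 0\}$ over $\epsilon\in\{\pm 1\}^{p-2}$ is Lebesgue-null, so the atom vanishes for almost every initialization. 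Alternatively one could impose an explicit support or spread condition on $\delta$ strong enough to run Littlewood--Offord, but the Frobenius-norm control in \ref{assump:small-model-weight} alone is not it.
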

\begin{proof}
For simplicity, we denote \(\PP(\cdot \mid \{(x_i, y_i)\}_{i=1}^n \in \gdata)\) as \(\PP(\cdot)\) in the proof below.

For \(v_j^{(0)} =[v_{\init},v_{\init},v_{\init}] \), we first show that for \(\{(x_i, y_i)\}_{i=1}^n \in \gdata\),
    \[
    \dotp{v_j^{(0)}}{z_i} > 0, \quad \forall i \in \cI_{-\mu_1}.
    \]
According to the definition of \(\gdata\), \(\|z_i - \bar z_i\| \leq \varepsilon^2 \sqrt{p/n}\log n\) for all \(i \in \cI_{-\mu_1}\). Thus
\[
\dotp{v_j^{(0)}}{z_i} = \dotp{v_j^{(0)}}{\bar z_i} + \dotp{v_j^{(0)}}{z_i - \bar z_i} \geq v_{\init}(2 - \|z_i - \bar z_i\|) > 0,
\]
where the first inequality uses \(\bar z_i = [0,0,2]\) when \(i \in \cI_{-\mu_1}\) and the second inequality uses Assumption \ref{assump:xor-data-snr}. Similarly, we have
\[
\dotp{v_j^{(0)}}{z_i} < 0, \quad \forall i \in \cI_{+\mu_1}.
\]
Thus conditioning on \(\{(x_i, y_i)\}_{i=1}^n \in \gdata\), we have
\[
\cI_{[1,1,1],-\mu_1} = \cI_{-\mu_1}; \cI_{[1,1,1],+\mu_1} = \varnothing.
\]
For \(i \in \cI_{\pm \mu_2}\), recall that \(x_i = [x_{i,\text{signal}}^{\top}, x_{i, \text{noise}}^{\top}]^{\top} \) with \(x_{i,\text{signal}} = [x_{i,1}, x_{i,2}]^{\top}\) and \(x_{i,\text{noise}} = [x_{i,3}, \cdots ,x_{i,p}]^{\top}\). We have
\[
\dotp{v_j^{(0)}}{z_i} = v_{\init}\sum_{l=1}^3 z_{i,l} = v_{\init}(\sum_{l=1}^3 u_l)^{\top} x_i = v_{\init} [-\mu_1^{\top}, (\sum_{l=1}^3 \delta_l)^{\top}] x_i = v_{\init} (\sum_{l=1}^3 \delta_l)^{\top} x_{i,\text{noise}}.
\]
It follows that
\[
\PP(\dotp{v_j^{(0)}}{z_i} > 0 ) = \frac{1}{2}.
\]
Applying Hoeffding's inequality, we obtain
\[
\PP( \Big||\cI_{[1,1,1], +\mu_2}| - \frac{|\cI_{+\mu_2}|}{2}\Big| > t ) \leq 2\exp(-\frac{2t^2}{n}).
\]
Similarly we have
\[
\PP( \Big||\cI_{[1,1,1],-\mu_2}| - \frac{|\cI_{-\mu_2}|}{2}\Big| > t ) \leq 2\exp(-\frac{2t^2}{n}).
\]
Let \(t=\sqrt{n \log n}\). We have
\[
\Big||\cI_{[1,1,1],\mu}| - \frac{n_{\mu}}{2}\Big| \leq \sqrt{n \log n}, \quad \mu \in \{\pm \mu_2\}
\]
with probability at least \(1 - 4/n^2\).
Following similar discussion, we have that
\[
\cI_{[i,j,-1],+\mu_1} = \cI_{+\mu_1}; \quad \cI_{[i,j,-1],-\mu_1} = \varnothing, \quad i,j \in \{\pm 1\};
\]
\[
\cI_{[i,j,+1],+\mu_1} = \varnothing; \quad \cI_{[i,j,+1],-\mu_1} = \cI_{-\mu_1}, \quad i,j \in \{\pm 1\};
\]
\[
\cI_{[+1,-1,k], +\mu_2} = \cI_{+\mu_2}; \quad \cI_{[+1,-1,k], -\mu_2} = \varnothing, \quad k \in \{\pm 1\};
\]
\[
\cI_{[-1,+1,k], +\mu_2} = \varnothing; \quad \cI_{[-1,+1,k], -\mu_2} = \cI_{-\mu_2}, \quad k \in \{\pm 1\}
\]
hold with probability \(1\) given \(\{(x_i, y_i)\}_{i=1}^n \in \gdata\). And
\[
\Big||\cI_{[i,i,k],\mu}| - \frac{n_{\mu}}{2}\Big| \leq \sqrt{n \log n}, \quad i,k \in \{\pm 1\}, \mu \in \{\pm \mu_2\}
\]
hold with probability at least \(1 - 16/n^2\). In total, the conditions above hold with probability at least \(1 - \exp(-\Omega(\log^2 n)) - O(\frac{1}{n^2}) = 1 - O(\frac{1}{n^2})\).

\end{proof}

\begin{lemma}\label{lem:concentrate_of_train_data}
     Suppose that Assumption \ref{assump:small-model-weight} holds. Let the training data \(\{x_i, y_i\}_{i=1}^n\) for model \(f_L\) be sampled i.i.d from \(P\). With probability at least \(1 - \exp(-\Omega(\log^2 n))\), we have
     \begin{equation}
         \|z_i - \bar z_i\| \leq \varepsilon^2 \sqrt{\frac{p}{n}}\log n, \quad \text{ for all } i \in [n].
     \end{equation}
\end{lemma}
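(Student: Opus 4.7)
\textbf{Proof proposal for Lemma \ref{lem:concentrate_of_train_data}.}
The plan is to reduce the vector concentration to three scalar Hoeffding bounds, one per coordinate of $z_i - \bar z_i$, and then take a union bound over the $n$ data points. The first step is to use the explicit structure of the embedding matrix $U$. Writing $x_i = [x_{i,\text{signal}}^\top, x_{i,\text{noise}}^\top]^\top$ and recalling $u_1 = [\mu_2^\top,\delta_1^\top]^\top$, $u_2 = [-\mu_2^\top,\delta_2^\top]^\top$, $u_3 = [-\mu_1^\top,\delta_3^\top]^\top$, the signal components of $z_i = U^\top x_i$ exactly reproduce $\bar z_i = [\mu_2,-\mu_2,-\mu_1]^\top \bar x_i$, so the difference is purely noise:
\[
z_i - \bar z_i = \bigl[\delta_1^\top x_{i,\text{noise}},\; \delta_2^\top x_{i,\text{noise}},\; \delta_3^\top x_{i,\text{noise}}\bigr]^\top.
\]
Hence $\|z_i-\bar z_i\| \le \sqrt{3}\max_{l\in[3]}|\delta_l^\top x_{i,\text{noise}}|$, and it suffices to show that each scalar $|\delta_l^\top x_{i,\text{noise}}|$ is at most $\varepsilon^2\sqrt{p/n}\log n / \sqrt{3}$ with high probability.

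For the second step, I would apply Hoeffding's inequality to $\delta_l^\top x_{i,\text{noise}} = \sum_{k=1}^{p-2} \delta_{l,k}\, x_{i,k+2}$, noting that the coordinates $x_{i,k+2}$ are i.i.d.\ uniform on $\{\pm\varepsilon\}$ and hence zero-mean and bounded by $\varepsilon$. Hoeffding gives
\[
\PP\!\left(|\delta_l^\top x_{i,\text{noise}}| > t\right) \le 2\exp\!\left(-\frac{t^2}{2\varepsilon^2 \|\delta_l\|^2}\right).
\]
Plugging in $t = \varepsilon^2\sqrt{p/n}\log n / \sqrt{3}$ and invoking Assumption \ref{assump:small-model-weight}, which forces $\|\delta_l\|^2 \le \|\delta\|_{\F}^2 \le C^2\varepsilon^2 p/n$, the exponent simplifies to a constant multiple of $\log^2 n$, so the failure probability is $\exp(-\Omega(\log^2 n))$.

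The third step is a union bound over $l\in\{1,2,3\}$ and $i\in[n]$, giving $3n$ events each with failure probability $\exp(-\Omega(\log^2 n))$; since $n\cdot \exp(-\Omega(\log^2 n)) = \exp(-\Omega(\log^2 n))$, the overall bound holds for all training points simultaneously with the claimed probability. There is no genuine obstacle here: the lemma is essentially a repackaging of Hoeffding once one exploits the ``signal cancels'' decomposition of $z_i - \bar z_i$ and the weak-model norm estimate from Assumption \ref{assump:small-model-weight}. The only point requiring a little care is to make sure the constants chosen in the threshold $t$ are small enough that the $\sqrt{3}$ factor from bounding a $3$-vector norm by its $\ell_\infty$ norm is absorbed, which is automatic here because $t$ can be chosen as any fixed fraction of $\varepsilon^2\sqrt{p/n}\log n$ without affecting the $\exp(-\Omega(\log^2 n))$ rate.
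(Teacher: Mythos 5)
Your proof is correct and takes essentially the same approach as the paper: the paper factors the argument into Lemma \ref{lem:concentrate_new_embed} (the per-coordinate Hoeffding bound using the signal-cancellation decomposition and the $\|\delta\|_{\F}$ estimate from Assumption \ref{assump:small-model-weight}) and then obtains Lemma \ref{lem:concentrate_of_train_data} by a union bound over $i \in [n]$, exactly as you do; you have simply inlined the auxiliary lemma. The $\sqrt{3}$ bookkeeping you flag is indeed harmless at the $\exp(-\Omega(\log^2 n))$ scale, and the paper handles it implicitly by stating Lemma \ref{lem:concentrate_new_embed} in the max-norm.
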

\begin{proof}
    Applying Lemma \ref{lem:concentrate_new_embed}, we obtain
    \begin{equation*}
        \begin{split}
            \PP(\|z_i - \bar z_i\| \leq \varepsilon^2 \sqrt{\frac{p}{n}}\log n, \forall i \in [n]) &\geq 1 - \sum_{i=1}^n \PP(\|z_i - \bar z_i\| > \varepsilon^2 \sqrt{\frac{p}{n}}\log n) \\
            &\geq 1 - n \exp(-\Omega(\log^2 n)) = 1 - \exp(-\Omega(\log^2 n)).
        \end{split}
    \end{equation*}
\end{proof}

\begin{lemma}\label{lem:concentrate_new_embed}
    Suppose that Assumption \ref{assump:small-model-weight} holds. For \(x = [x_1, x_2, \cdots, x_p] \sim P\) with \([x_{1}, x_{2}]^{\top} = \mu, \mu \in \{\pm \mu_1, \pm \mu_2\}\), we have
    \begin{equation}
        \PP (\| U^{\top}x - \nu\|_{\max} > \varepsilon^2 \sqrt{\frac{p}{n}}\log n) \leq \exp( - \Omega(\log^2 n)),
    \end{equation}
    where \(\nu = [\mu_2, - \mu_2, -\mu_1]^{\top} \mu \in \RR^3\).
\end{lemma}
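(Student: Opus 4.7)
The plan is to observe that the randomness in $U^{\top}x - \nu$ comes entirely from $x_{\text{noise}}$ contracted against the noise‐block rows of $U$, and that these rows have controlled Frobenius norm by Assumption \ref{assump:small-model-weight}. A straightforward Hoeffding bound then suffices.

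\textbf{Step 1: Decomposition.} Recall the structure
$u_1 = [\mu_2^{\top},\delta_1^{\top}]^{\top}$, $u_2 = [-\mu_2^{\top},\delta_2^{\top}]^{\top}$, $u_3 = [-\mu_1^{\top},\delta_3^{\top}]^{\top}$. Writing $x = [\mu^{\top}, x_{\text{noise}}^{\top}]^{\top}$ with $x_{\text{noise}} \sim \unif(\{\pm\varepsilon\}^{p-2})$, the $j$-th entry of $U^{\top}x$ reads $\dotp{u_j}{x} = \dotp{\mu_j^\star}{\mu} + \dotp{\delta_j}{x_{\text{noise}}}$ for the appropriate signed signal vector $\mu_j^{\star}\in\{\pm\mu_1,\pm\mu_2\}$. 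Subtracting $\nu$, which by definition contains precisely the signal inner products $\dotp{\mu_j^\star}{\mu}$, yields the clean identity
\[
(U^{\top}x - \nu)_j \;=\; \dotp{\delta_j}{x_{\text{noise}}},\qquad j\in[3].
\]

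\textbf{Step 2: Concentration of each coordinate.} Each $\dotp{\delta_j}{x_{\text{noise}}}$ is a weighted sum of $p-2$ independent mean-zero random variables bounded by $\varepsilon|\delta_{j,k}|$ in absolute value. Hoeffding's inequality therefore gives, for any $t > 0$,
\[
\PP\bigl(|\dotp{\delta_j}{x_{\text{noise}}}| > t\bigr) \;\leq\; 2\exp\!\left(-\frac{t^2}{2\varepsilon^2 \|\delta_j\|^2}\right).
\]
By Assumption \ref{assump:small-model-weight}, $\|\delta_j\|^2 \leq \|\delta\|_{\F}^2 \leq C^2 \varepsilon^2 p/n$ for each $j$. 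Plugging this in and choosing $t = \varepsilon^2\sqrt{p/n}\log n$ collapses the exponent to $-\log^2 n/(2C^2) = -\Omega(\log^2 n)$.

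\textbf{Step 3: Union bound.} Taking a union bound over the three coordinates preserves the $\exp(-\Omega(\log^2 n))$ bound, since the three tail events each have probability $\exp(-\Omega(\log^2 n))$. This gives
\[
\PP\bigl(\|U^{\top}x - \nu\|_{\max} > \varepsilon^2\sqrt{p/n}\log n\bigr) \;\leq\; 3\cdot 2\exp\!\bigl(-\log^2 n/(2C^2)\bigr) \;=\; \exp(-\Omega(\log^2 n)),
\]
as required. There is no genuine obstacle here; the lemma is essentially a bookkeeping step that isolates the signal cancellation $u_j^{\top}[\mu^{\top},0]^{\top} = \nu_j$ from the Rademacher-type noise inner product, and then invokes the norm control on $\delta$ provided by the weaker model's empirically observed norm ratio (Assumption \ref{assump:small-model-weight}).
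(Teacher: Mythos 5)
Your proof is correct and follows essentially the same route as the paper: decompose $(U^{\top}x-\nu)_j = \dotp{\delta_j}{x_{\text{noise}}}$, apply Hoeffding with the noise bound $\varepsilon|\delta_{j,k}|$, substitute the bound $\|\delta_j\| \le \|\delta\|_{\F} \le C\varepsilon\sqrt{p/n}$ from Assumption (A2), plug in $t = \varepsilon^2\sqrt{p/n}\log n$, and union-bound over the three coordinates. The only cosmetic difference is that the paper spells out the case $\mu=\mu_1$ and appeals to symmetry for the rest, whereas you handle all four signal choices uniformly from the start.
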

\begin{proof}
    We start our analysis with \([x_1, x_2]^{\top} = \mu_1\). Note that
    \[
    u_1^{\top}x = \sum_{i=1}^{p-2}\delta_{1,i} x_{i+2}
    \]
    is a summation of independent bounded random variables with zero mean. By Hoeffding's inequality, we have
    \[
    \PP (|u_1^{\top}x| \geq t) \leq 2 \exp\Big( - \frac{t^2}{2\sum_{i=1}^{p-2}\delta_{1,i}^2 \varepsilon^2} \Big) \leq 2 \exp\Big( - \frac{t^2}{2\|\delta\|_{\F}^2 \varepsilon^2} \Big).
    \]
     Similarly, the concentration for \( u_2^{\top}x \) and \( u_3^{\top}x + 2 \) are as follows:
    \[
    \PP (|u_2^{\top}x| \geq t) \leq 2 \exp\Big( - \frac{t^2}{2\|\delta\|_{\F}^2 \varepsilon^2} \Big);
    \]
    \[
    \PP (|u_3^{\top}x + 2| \geq t) \leq 2 \exp\Big( - \frac{t^2}{2\|\delta\|_{\F}^2 \varepsilon^2} \Big).
    \]
    Combining these inequalities yields
    \begin{equation}
        \PP (\| U^{\top}x - \nu_1\|_{\max} > t) \leq 6\exp\Big( - \frac{t^2}{2\|\delta\|_{\F}^2 \varepsilon^2} \Big) \leq 6 \exp\Big( - \frac{n t^2}{2C^2\varepsilon^4 p} \Big),
    \end{equation}
    where the last inequality uses Assumption \ref{assump:small-model-weight}. The proof concludes by letting $t = \varepsilon^2 \sqrt{p/n}\log n$. The analysis for other values of \([x_1, x_2]^{\top}\) follows similarly. 
\end{proof}

\begin{lemma}
\label{lem: xor-initialization_properties}
    Suppose that Assumption \ref{assump:large-model-width} holds. Then the following conditions hold with probability at least \(1 - O(1/n^4)\):
    \begin{enumerate}[label=(B\arabic*)]
    \item\label{xor-initialization_pt1} \(\max_{k \in \{\Pos, \Neg\}}||\cJ_{k}| - \frac{m}{2}| \leq \frac{m}{\log n}\).
    %\item\label{xor-initialization_pt2} \(\max_{k \in \{\Pos, \Neg\}, \nu \in \{\pm \nu_1, \pm \nu_2\}}||\cJ_{k, \nu}^{(0)}| - \frac{m}{4}| \leq \frac{m}{\log n}\).
    \item\label{xor-initialization_pt2.5} \(\max_{e \in \unif(\{\pm 1\}^3)} |\cJ_{e} - \frac{m}{8}| \leq \frac{m}{\log n} \)
\item\label{xor-initialization_pt2.6} \(\max_{k \in \{\Pos, \Neg\}, e \in \unif(\{\pm 1\}^3)} |\cJ_{k,e} - \frac{m}{16}| \leq \frac{m}{\log n} \).
    
    \item\label{xor-initialization_pt3} \(\max_{\mu \in \{\pm \mu_1, \pm \mu_2\}} |n_{\mu} - \frac{n}{4}| \leq \frac{n}{\log n}\).
\end{enumerate}
\end{lemma}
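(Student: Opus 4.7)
The plan is to treat all four claims as standard Hoeffding-type concentration bounds for binomial counts and then combine them with a union bound. Each quantity being bounded is a sum of i.i.d.\ indicator random variables, so the tail bounds come essentially for free; the only bookkeeping is to verify that the deviation tolerance $m/\log n$ (or $n/\log n$) is large enough, given Assumption \ref{assump:large-model-width} and the standing assumption that $n$ is sufficiently large, to produce a failure probability of order $n^{-4}$ per event.

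First I would handle (B1). Since $a_j \stackrel{i.i.d.}{\sim} \unif(\{\pm 1/\sqrt m\})$, the count $|\cJ_{\Pos}|$ is $\mathrm{Binomial}(m,1/2)$, and Hoeffding gives
\[
\PP\!\left(\Big||\cJ_{\Pos}| - \tfrac{m}{2}\Big| > \tfrac{m}{\log n}\right) \leq 2\exp\!\left(-\tfrac{2m}{\log^2 n}\right) \leq 2\exp(-4\log n),
\]
where the last step uses $m \geq 2\log^3 n$ from Assumption \ref{assump:large-model-width}. The same computation, applied to $|\cJ_e|$ with success probability $1/8$ and to $|\cJ_{k,e}|$ with success probability $1/16$ (both Binomial by independence of $a_j$ and $v_j^{(0)}$), yields (B2) and (B3) term by term; a union bound over the $8$ values of $e$ and the $16$ pairs $(k,e)$ costs only a constant factor and still lands within $O(1/n^4)$. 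Finally, for (B4), $n_\mu$ is $\mathrm{Binomial}(n,1/4)$ because $x_{\text{signal}} \sim \unif(\{\pm 1\}^2)$, and Hoeffding gives
\[
\PP\!\left(\big|n_\mu - \tfrac{n}{4}\big| > \tfrac{n}{\log n}\right) \leq 2\exp\!\left(-\tfrac{2n}{\log^2 n}\right),
\]
which is $O(1/n^4)$ provided $n \geq 2 \log^3 n$; this is covered by ``$n$ larger than any universal constant mentioned in this paper,'' since $\log^3 n = o(n)$ for large $n$. A final union bound over (B1)--(B4) produces the claimed $1 - O(1/n^4)$.

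There is no genuine obstacle here: the lemma is purely a concentration statement, and the four quantities are independent sums of bounded i.i.d.\ indicators under the stated sampling schemes. The only mildly delicate point is confirming that the exponent $2t^2/m$ (respectively $2t^2/n$) in Hoeffding, with $t = m/\log n$ (respectively $n/\log n$), actually dominates $4\log n$; this is exactly what Assumption \ref{assump:large-model-width} and the ``$n$ sufficiently large'' convention are there to guarantee.
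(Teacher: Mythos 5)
Your proof is correct and follows essentially the same route as the paper: identify each count as a binomial random variable under the stated sampling distributions, apply Hoeffding with deviation $m/\log n$ (or $n/\log n$), invoke Assumption~(A5) (resp.\ $n$ sufficiently large) to push the failure probability below $O(1/n^4)$, and finish with a union bound. The only cosmetic difference is that you spell out the union bound over the $8$ choices of $e$ and $16$ pairs $(k,e)$, whereas the paper summarizes (B2)--(B3) as following ``the same procedure.''
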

\begin{proof}
Note that \(|\cJ_{\Pos}| \sim \Bin (m, 1/2)\). Applying Hoeffding's inequality, we have
    \[
    \PP\Big(\Big||\cJ_{\Pos}|-\frac{m}{2}\Big| \leq \frac{m}{\log n}\Big) \leq 2\exp\Big(- \frac{2m}{ \log^2 n} \Big) \leq \frac{2}{n^4},
    \]
    where the last inequality comes from Assumption \ref{assump:large-model-width}. And similarly
   \[
    \PP\Big(\Big||\cJ_{\Neg}|-\frac{m}{2}\Big| \leq  2\exp\Big(- \frac{2m}{\log^2 n} \Big) \leq \frac{2}{n^4},
    \]
    which completes the proof of \ref{xor-initialization_pt1}.
Note that \(|n_{\mu}| \sim \Bin (n, 1/4)\). Applying Hoeffding's inequality, we have
\[
    \PP\Big(\Big||n_{\mu}|-\frac{n}{4}\Big| \leq \frac{n}{\log n}) \leq 2 \exp\Big(- \frac{2n}{ \log^2 n} \Big) = O(\frac{1}{n^4}), \quad \forall \mu \in \{\pm \mu_1, \pm \mu_2\}.
    \]
\ref{xor-initialization_pt2.5}-\ref{xor-initialization_pt2.6} can be proved following the same procedure. We omit the proof here.
\end{proof}

\subsection{Additional Experiments}

\begin{figure}[htb]
    \centering
    \includegraphics[width=0.4\textwidth]{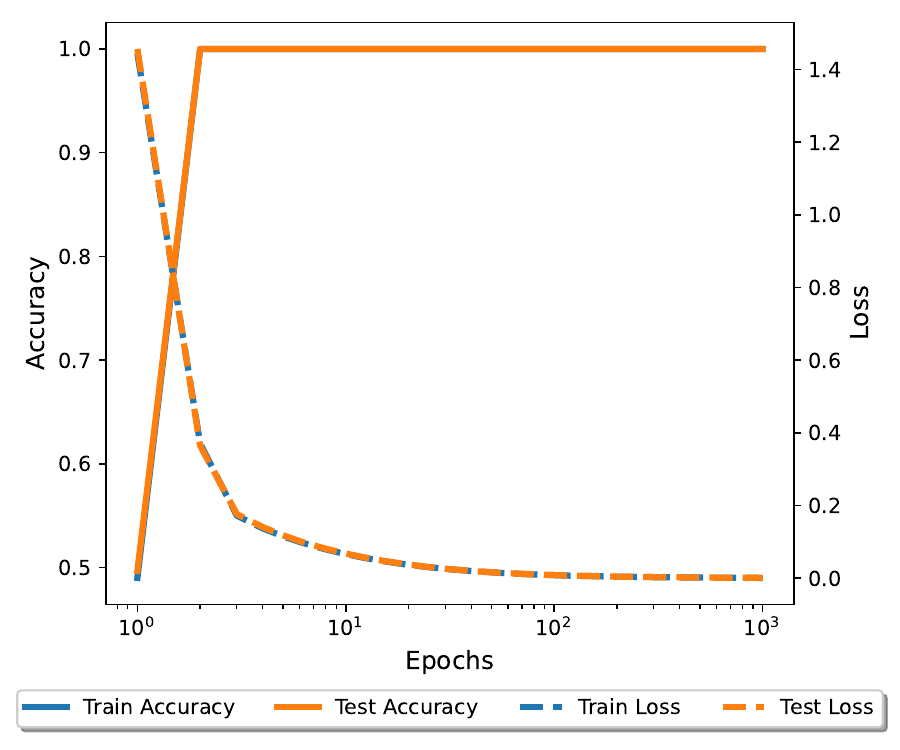}
    \caption{Training dynamics of the model \(f_L\) discussed in Section \ref{sec: xor-large-model}.}
    \label{fig:additional-xor}
\end{figure}

\subsection{Experimental details}\label{sec:expr-details}
All experiments in the paper can be run on a single NVIDIA A100 GPU. The loss function for modular arithmetic tasks is cross-entropy loss and for \((40,3)\)-parity task is logistic loss. 
All models used in the paper, unless stated otherwise, set \(d_{mlp} = 4d_{embed}\), where \(d_{mlp}\) is the MLP dimension and \(d_{embed}\) is the embedding dimension. All FNN models used are in the paper are homogeneous and do not have bias terms. Code is available at \href{https://github.com/zhiweixx/groktransfer}{https://github.com/zhiweixx/groktransfer}.

\subsubsection{Experiments in Section \ref{sec:intro} and \ref{sec:role-of-embed}}\label{sec:append_a31}
In Figure \ref{fig:b-acc_loss_compare-intro}, we use a two-layer FNN with trainable embedding layer as the weak model. We choose (\(d_{embed},\) width) = \((4, 16)\) for the weak model. The target model is a three-layer FNN with trainable embedding layer. We choose (\(d_{embed},\) width) = \((128, 512)\) for the target model.
The hyperparameters \((\text{init scale}, \text{learning rate}, \text{weight decay})\) are selected by the following grid search:
\begin{align*}
    \text{init scale: } &[0.1, 0.2, \cdots, 1.5]\\
    \text{learning rate: } &[10^{-4}, 5 \times 10^{-4} , 10^{-3}, 5 \times 10^{-3}, 10^{-2}, 10^{-1}]\\
    \text{weight decay: } &[10^{-4}, 10^{-3}, 10^{-2}, 10^{-1}, 1, 2,3,4,5].
\end{align*}
We select the configuration that first achieves \(90\%\) accuracy on the validation set. The best configuration for \gt is \((0.3, 0.005, 3)\). For standard training, only learning rate and weight decay are tuned. They are selected by the following grid search:
\begin{align*}
    \text{learning rate: } &[10^{-3}, 5 \times 10^{-3}, 10^{-2}, 5\times 10^{-2}, 10^{-1}]\\
    \text{weight decay: } &[10^{-2}, 10^{-1}, 1,2,3,4,5],
\end{align*} 
and the optimal configuration is \((0.05, 3)\).

In Figure \ref{fig:dynamics-modular_addition_all_embed}, we set the dimension of the GPT embedding to be \(128\). For the Fourier embedding, we choose \(k=7\) frequencies, and let \(i_j\) to be the \(j\)-th smallest prime number.
For each type of embedding, we normalize the embedding of each integer to be \(1\).
The FNN used in Figure \ref{fig:dynamics-modular_addition_all_embed} is a three layer dense neural network
\[
f(x) = W_3\phi(W_2\phi(W_1 x)),
\]
where \(W_1 \in \mathbb{R}^{\text{width} \times \text{embed dim}}, W_2 \in \mathbb{R}^{\text{width} \times \text{width}}, W_3 \in \mathbb{R}^{p \times \text{width}}\), width\(=512\). The hyperparameters \((\text{init scale}, \text{learning rate}, \text{weight decay})\) are selected by the following grid search:
\begin{align*}
    \text{init scale: } &[0.1, 0.2, \cdots, 1.5]\\
    \text{learning rate: } &[10^{-4}, 10^{-3}, 10^{-2}, 10^{-1}, 1]\\
    \text{weight decay: } &[10^{-4}, 10^{-3}, 10^{-2}, 10^{-1}, 1, 5, 10].
\end{align*}
We select the configuration that first achieves \(90\%\) accuracy on the validation set. The best configuration \((\text{init}, \text{lr}, \text{wd})\) for the four embeddings are:
\begin{align*}
    &\text{One-hot: } (0.2, 0.01, 5) ;\quad \text{Binary: } (0.3, 0.01, 1); \quad \text{Fourier: } (0.5, 0.1, 0.1) \quad \text{GPT: } (1.3, 0.01, 1).
\end{align*}

In Figure \ref{fig:eNTK}, the distance between two empirical NTK is estimated following the method in \cite{mohamadi2023fast}. We denote \(\hat\Theta_t\) as the pseudo-NTK of the model at epoch \(t\), i.e. 
\[
\hat\Theta_t (x_1, x_2) = [\nabla_{\theta} \sum_{i=1}^p f_{\theta}^{(i)}(x_1)]^{\top}[\nabla_{\theta} \sum_{i=1}^p f_{\theta}^{(i)}(x_2)] / p \in \mathbb{R}.
\]
We estimate the distance between the empirical NTK at step \(t\) and \(t-1\) by \(\|\hat{\Theta_t}-\hat{\Theta_{t-1}}\|_{\textbf{F}}\).

\subsubsection{Experiments in Section \ref{sec: main-theory}}

For experiments in Section \ref{sec: main-theory}, we let the sample size \(n=400\), feature dimension \(p=80000\), and noise level \(\varepsilon = 0.05\). For Figure \ref{fig:xor-dynamics}(a), the model is a two-layer neural network with width \(2048\). The optimizer is full-batch gradient descent with learning rate \(0.1\) and weight decay \(0.1\). In Figure \ref{fig:xor-dynamics}(b), we train a small model with only three neurons. The initialization of the hidden layer follows i.i.d \(N(0, 0.01)\), and the initialization of the second layer follows i.i.d \(N(0, 10^{-4})\). The learning rate is \(0.1\) and weight decay is \(0,1\). Figure \ref{fig:xor-dynamics}(c) visualizes the hidden layer of that small model after training.

Figure \ref{fig:a-xor-new-embedding-from-small-model} generates \(4000\) i.i.d datapoints from the distribution \(P\), and visualizes \(U x\) for each \(x\). Figure \ref{fig:b-heatmap-norm-ratio-lambda4} fixes \(n=1000\) and train the weak model for \(p = [4 \times 10^4, 8 \times 10^4, 16 \times 10^4,], \epsilon = [1/40, 1/80, 1/160]\).  Figure \ref{fig:c-heatmap-norm-ratio-lambda2} fixes \(p = 8 \times 10^4\) and train the weak model for \(n = [400, 800, 1600, 3200], \epsilon = [1/40, 1/80, 1/160]\). Figure \ref{fig:additional-xor} takes \(v_{\init} = 0.4\), learning rate \(2.0\) and zero weight decay.

\subsubsection{Experiments in Section \ref{sec:experiments}}

The attention layer used in this paper follows the same structure as that in \cite{nanda2023progress}.  While \cite{nanda2023progress} also suggested to set the precision to be \texttt{float64} to mitigate the Slingshot phenomenon \citep{thilak2022slingshot}, a fluctuation of accuracy and loss during training process, we still use \texttt{float32} to control the computation cost.

All modular tasks set \(p=113\) and the fraction of training data being \(25\%\).

For the \((40,3)\)-parity task, we set the sample size \(n=1000\).

Unless otherwise specified, we use the AdamW optimizer \citep{loshchilov2018decoupled} for all experiments; we initialize the weights using the default PyTorch initialization scaled by a factor \(\text{init scale} > 0\) to control the initial weight norm, as proposed by \cite{liu2023omnigrok}. %Intuitively, we need a small \(\alpha\) for large models and a relatively large \(\alpha\) for small models to ensure a good initialization.

The hyperparameters \((\text{init scale}, \text{learning rate}, \text{weight decay})\) are selected by the following grid search:
\begin{align*}
    \text{init scale: } &[0.05, 0.1, 0.2, 0.3, \cdots, 1.5]\\
    \text{learning rate: } &[10^{-4}, 5 \times 10^{-4} , 10^{-3}, 5 \times 10^{-3}, 10^{-2}, 10^{-1}, 0.5, 1.0]\\
    \text{weight decay: } &[10^{-4}, 10^{-3}, 10^{-2}, 10^{-1}, 1, 2,3,4,5].
\end{align*}

In Figure \ref{fig:method-FNN}(a),(b), the structure of weak and target model are the same as those in Figure \ref{fig:b-acc_loss_compare-intro}. In Figure \ref{fig:method-FNN}(c), the weak model is a three-layer \(width=16\) FNN, the target model is a three-layer \(width=512\) FNN with \(d_{embed}=128\).
In Figure \ref{fig:method-FNN}(a),
the optimal configuration for \gt is \((0.3, 0.001, 1)\) and the optimal one for training from scratch is \((0.1, 0.1, 2)\). In Figure \ref{fig:method-FNN}(b), the optimal configuration for \gt is \((0.3, 0.005, 3)\) and the optimal one for training from scratch is \((0.1, 0.1, 2)\). In Figure \ref{fig:method-FNN}(b), we have the number of training samples \(n=1000\). The model trained via \gt uses learning rate \(10^{-3}\) and weight decay  \(10^{-3}\); the model trained from scratch uses learning rate \(10^{-2}\) and weight decay  \(1\).

In Figure \ref{fig:fnn_to_transformer}(a), the weak model is a two-layer width-\(4\) FNN, and the target model is an \(8\)-layer transformer with \(d_{embed}=512, d_{mlp}=512, n_{head}=4, d_{head}=128\). The optimal configuration for target model trained via \gt is \((0.7, 0.001, 1)\). The optimal configuration for target model trained from scratch is \((0.4, 0.0005, 1)\). In Figure \ref{fig:fnn_to_transformer}(b), the weak model remains the same, and the target model becomes an \(2\)-layer transformer with \(d_{embed}=128, d_{mlp}=128, n_{head}=4, d_{head}=32\). The optimal configuration for target model trained via \gt is \((0.6, 0.005, 0.1)\). The optimal configuration for \texttt{GrokFast} is (lr, wd) = \((0.01, 1.0)\).

\subsection{Additional Discussion}

\subsubsection{Forward Pass FLOPs estimation for Models in Figure \ref{fig:fnn_to_transformer} Left}
For the weak model, a two-layer width-\(4\) MLP:
\[
C_{forward} \approx 2*(8*4 + 8 + 4*113 + 226)=1436 \sim 10^3.
\]
For the target model, an \(8\)-layer transformer, following Table 1 in \cite{kaplan2020scaling}, we have
\[
N = 2d_{embed}n_{layer}(2d_{attn} + d_{mlp}) = 2*512*8*(256+512) = 6291456, C_{forward} = 2(N + 8*2*128) \sim 10^7.
\]
Two-layer FNN takes around \(2000\) epochs to generalize. Target model trained by \gt takes around \(1000\) epochs and target model trained from scratch takes around \(10000\) epochs. Thus, the total flops of \gt is around \(10^{10}+2*10^6\) and the total flops of training from scratch is around \(10^{11}\).
% \subsubsection{Trajectory Analysis of Single-Layer Perceptron Trained on XOR Data}
% Here we briefly show the trajectory analysis for one-neuron weak model under certain scenarios. The model is \(f(x) = \phi(w^{\top}x)\). The loss function used in this section is hinge loss \(l(x,y) = \max(0, 1- xy)\). The gradient descent update at \(t-\)th step is
% \[
% w^{(t+1)} - w^{(t)} = \frac{\alpha}{n}\sum_{i=1}^n g_i^{(t)} \phi^{\prime}(\langle w^{(t)}, x_i\rangle) y_i x_i,
% \]
% where \(g_i^{(t)} = -l'(f(x_i), y_i)\). Given any constant \(N>0\), we can choose sufficiently small initialization scale and learning rate so that \(f(x_i) < 1, \forall i \in [n]\). Then for \(t < N\), the update becomes
% \[
% w^{(t+1)} - w^{(t)} = \frac{\alpha}{n}\sum_{i=1}^n \phi^{\prime}(\langle w^{(t)}, x_i\rangle) y_i x_i.
% \]
% Similar to the proof in \cite{xu2023benign}, applying the near orthogonality of \(\{x_i\}\), we have that for samples with label \(+1\), \(\langle w^{(t+1)}, x_i \rangle>0\) if \(\langle w^{(t)}, x_i \rangle>0\).
% Recall that in Section \ref{subsubsec:main_proof}, we define \(n_{\mu}=|\cI_{\mu}|\) to be the number of samples centered around \(\mu \in [\pm 1, \pm 1]\). Similar to Lemma 4.1 in \cite{xu2023benign}, we have that with probability at least \(1 - O(n^{-\varepsilon})\),
% \[
% |n_\mu - n/4| \le \sqrt{\varepsilon n \log(n)}; \quad |n_{\mu_1}-n_{\mu_2}| \ge n^{1/2 - \varepsilon}
% \]
% for \(\mu,\mu_1, \mu_2 \in [\pm 1, \pm 1], \mu_1 \neq \mu_2\). 

\subsubsection{Additional Experiments}
\begin{figure}[htb]
    \centering
    \includegraphics[width=0.3\textwidth]{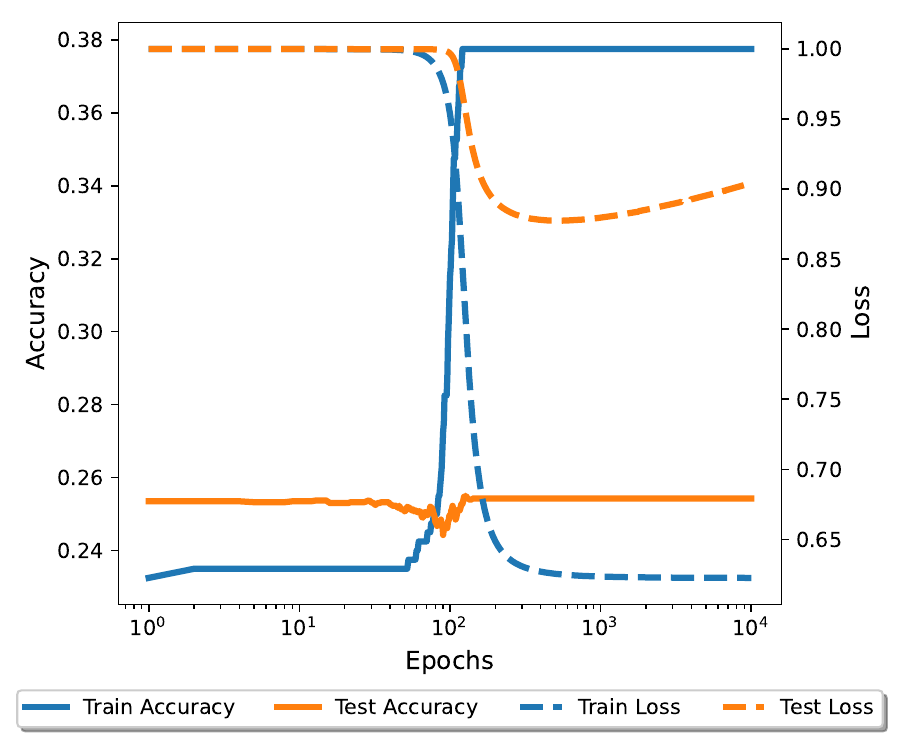}
    \includegraphics[width=0.3\textwidth]{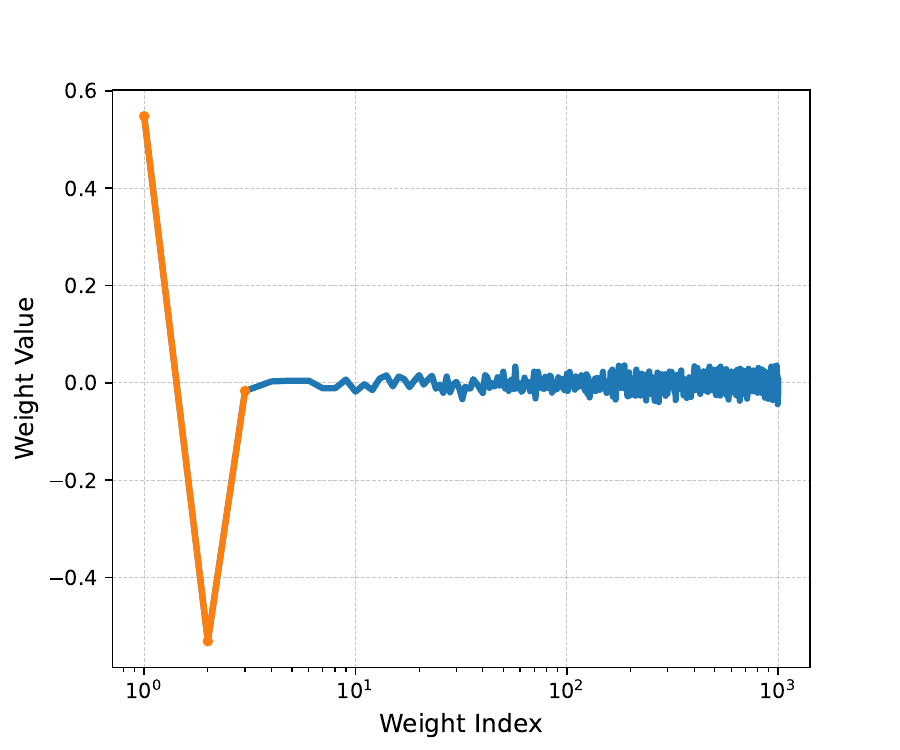}
    \includegraphics[width=0.3\textwidth]{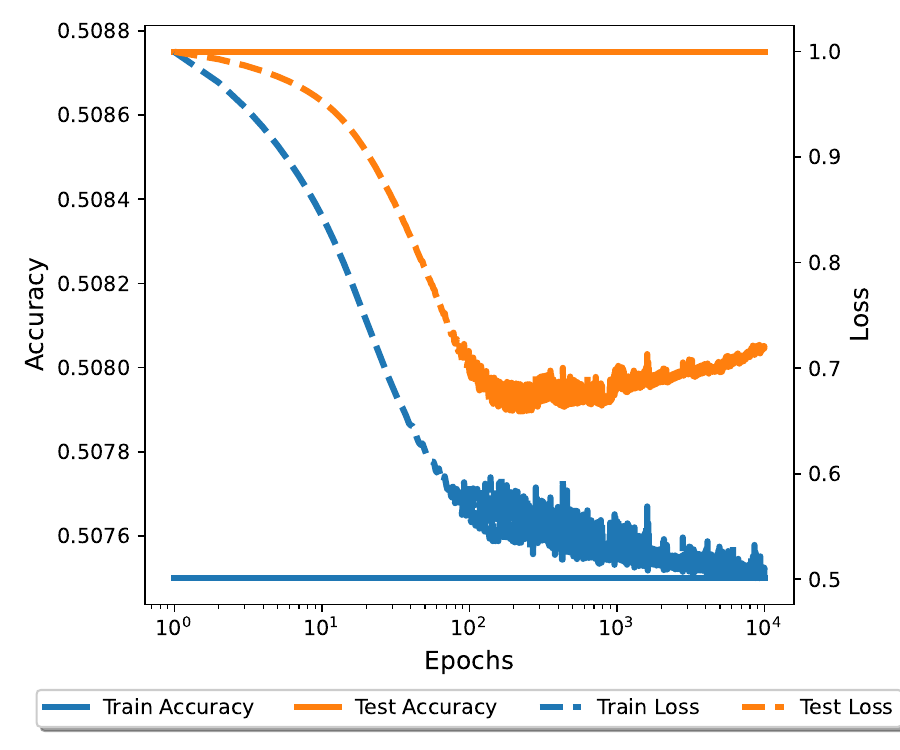}
    \caption{Left: Training dynamics of the one neuron weak model. Middle: Visualization of the neuron in the weak model. We can see it has learned the feature \([1,-1]\). Right: training dynamics of the target model with embedding transferred from the one-neuron weak model.}
    \label{fig:xor-oneneuron}
\end{figure}

\begin{figure}[htb]
    \centering
    \includegraphics[width=0.33\textwidth]{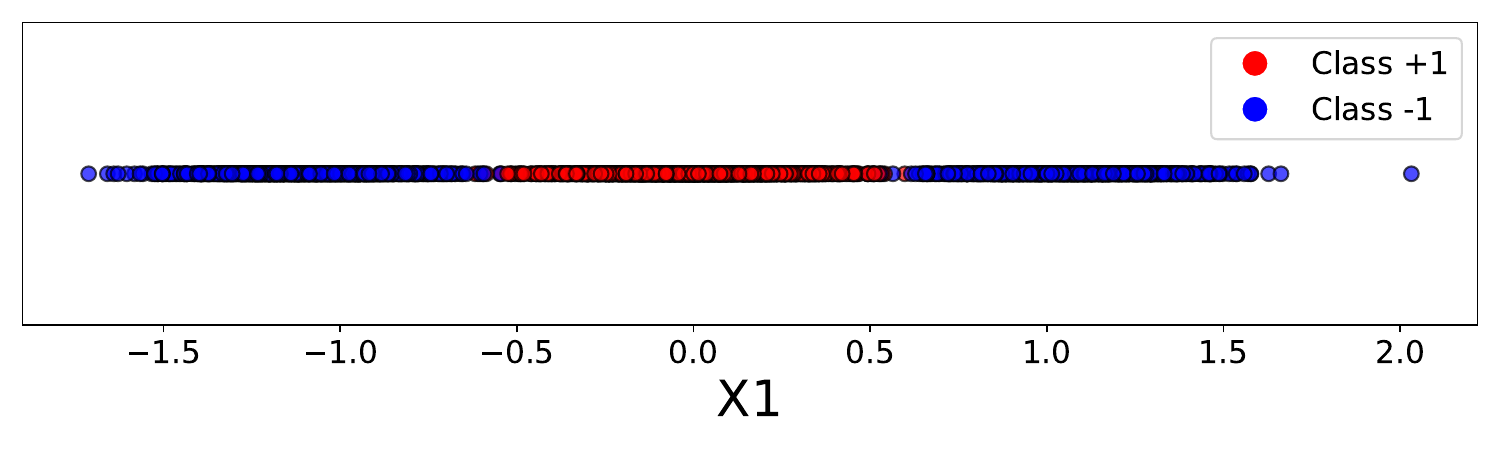}
    \includegraphics[width=0.3\textwidth]{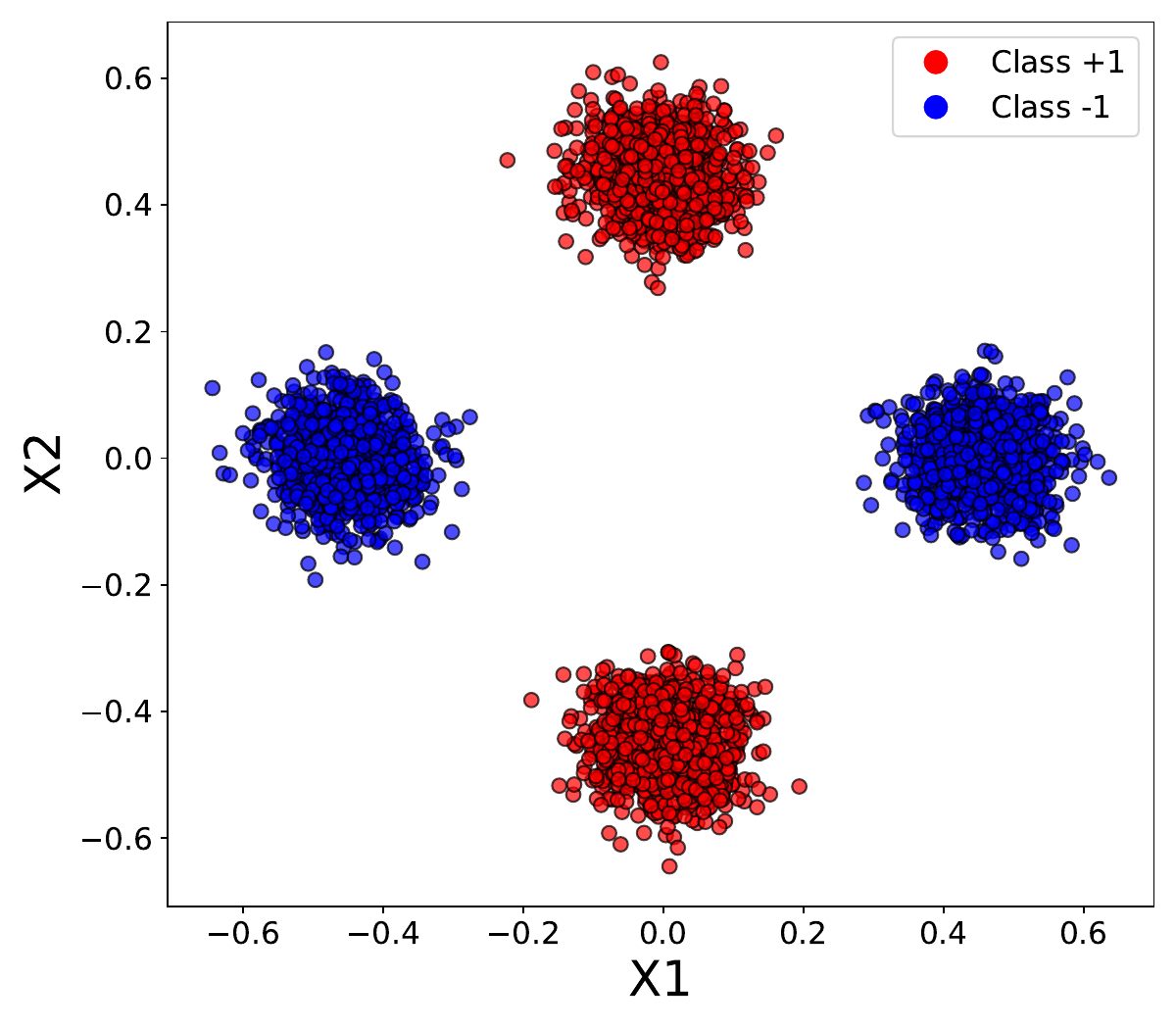}
    \includegraphics[width=0.32\textwidth]{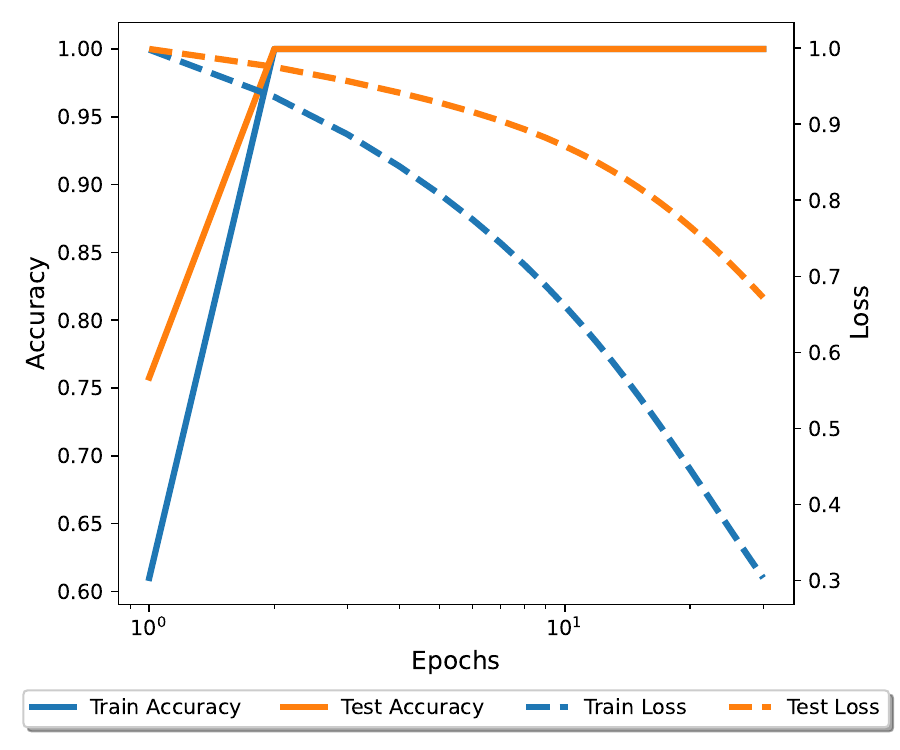}
    \caption{Left: Visualization of the distribution \(P\) with the embedding from the one-neuron weak model. Middle: Visualization of the distribution \(P\) with the embedding from the two-neuron weak model. Right: Training dynamics of the target model with embedding transferred from a two-neuron weak model.}
    \label{fig:xor-2neuron}
\end{figure}

When the weak model only has one neuron, the target model has the form:
\[
f_L(x) = \sum_{j=1}^m a_j \phi(v_j\cdot u^{\top}x).
\]
Denote \(z = u^{\top}x\). Equivalently, we have
\[
f_L(z) = \sum_{j=1}^m a_j \phi(v_j\cdot z).
\]
Note that \(\text{sign}(f_L(z_1)) = \text{sign}(f_L(z_2))\) if \(\text{sign}(z_1) = \text{sign}(z_2)\). Since both positive and negative classes locate on both sides of the original point (Figure \ref{fig:xor-2neuron} Left), it is impossible for \(f_L\) to correctly classify these points (Figure \ref{fig:xor-oneneuron} Right).

\begin{figure}[htb]
    \centering
    \includegraphics[width=0.8\textwidth]{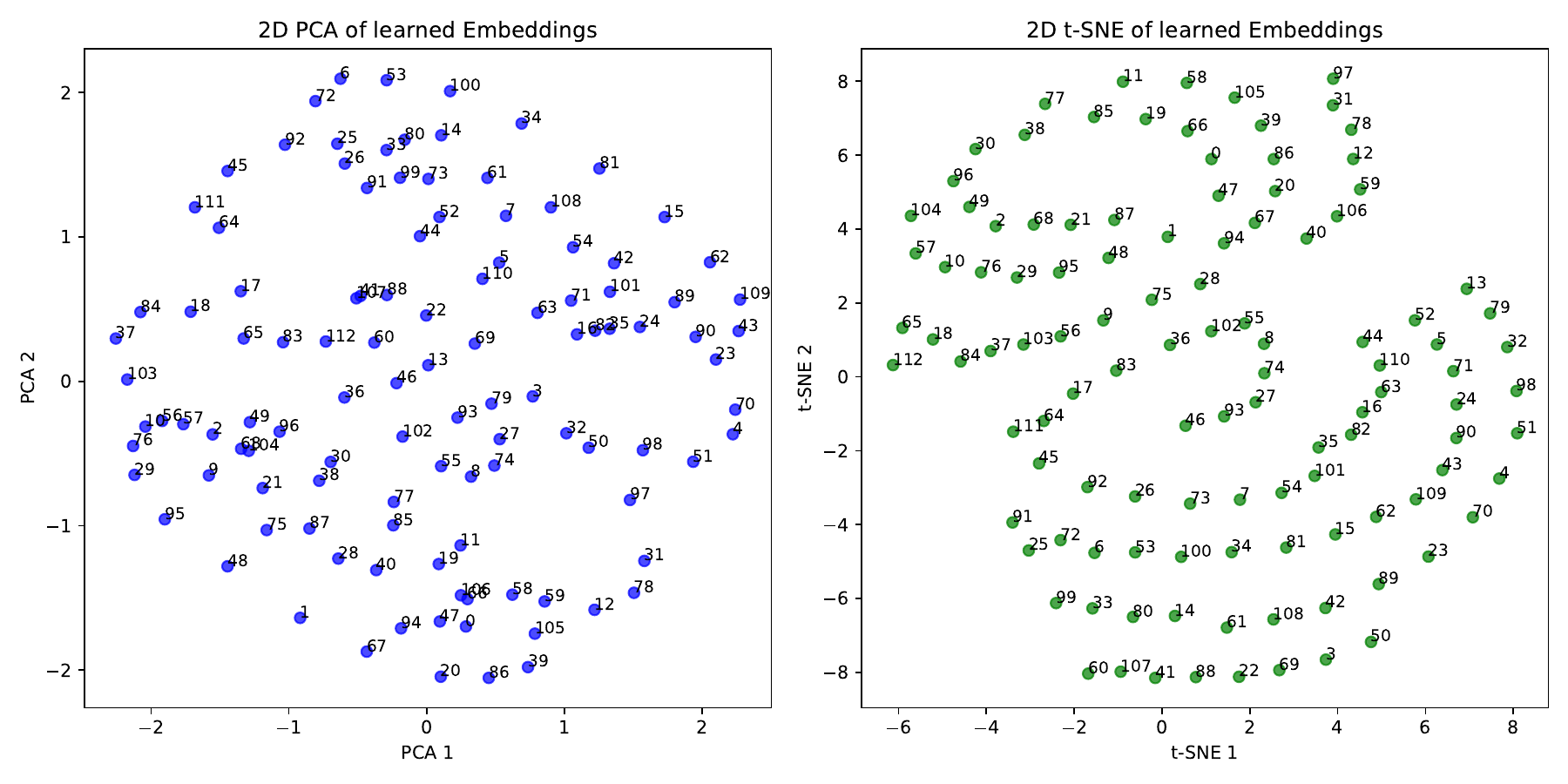}
    \caption{2D Visualization of the 4D embedding from the weak model trained on modular addition task.}
    \label{fig:visual_embed_modadd}
\end{figure}

\begin{figure}[htb]
    \centering
    \includegraphics[width=0.4\textwidth]{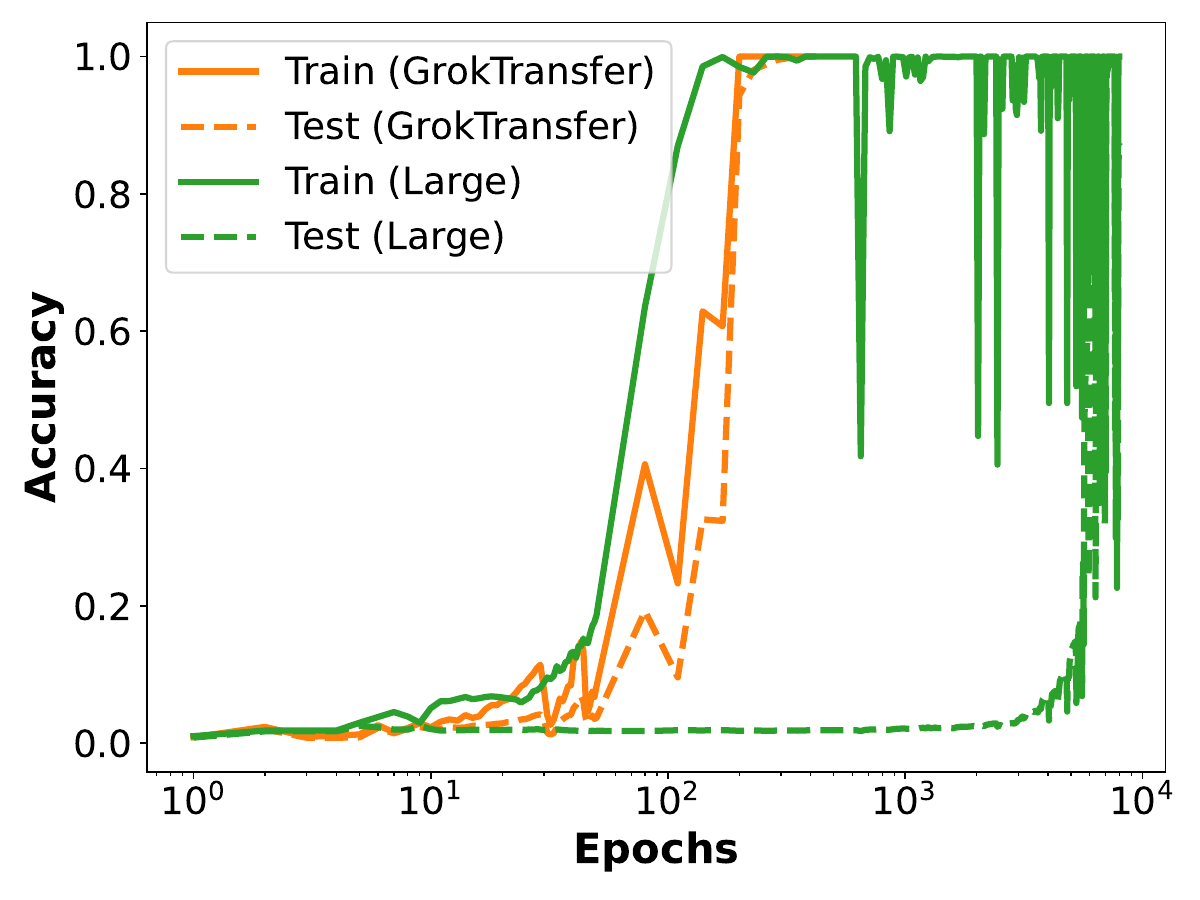}
    \includegraphics[width=0.4\textwidth]{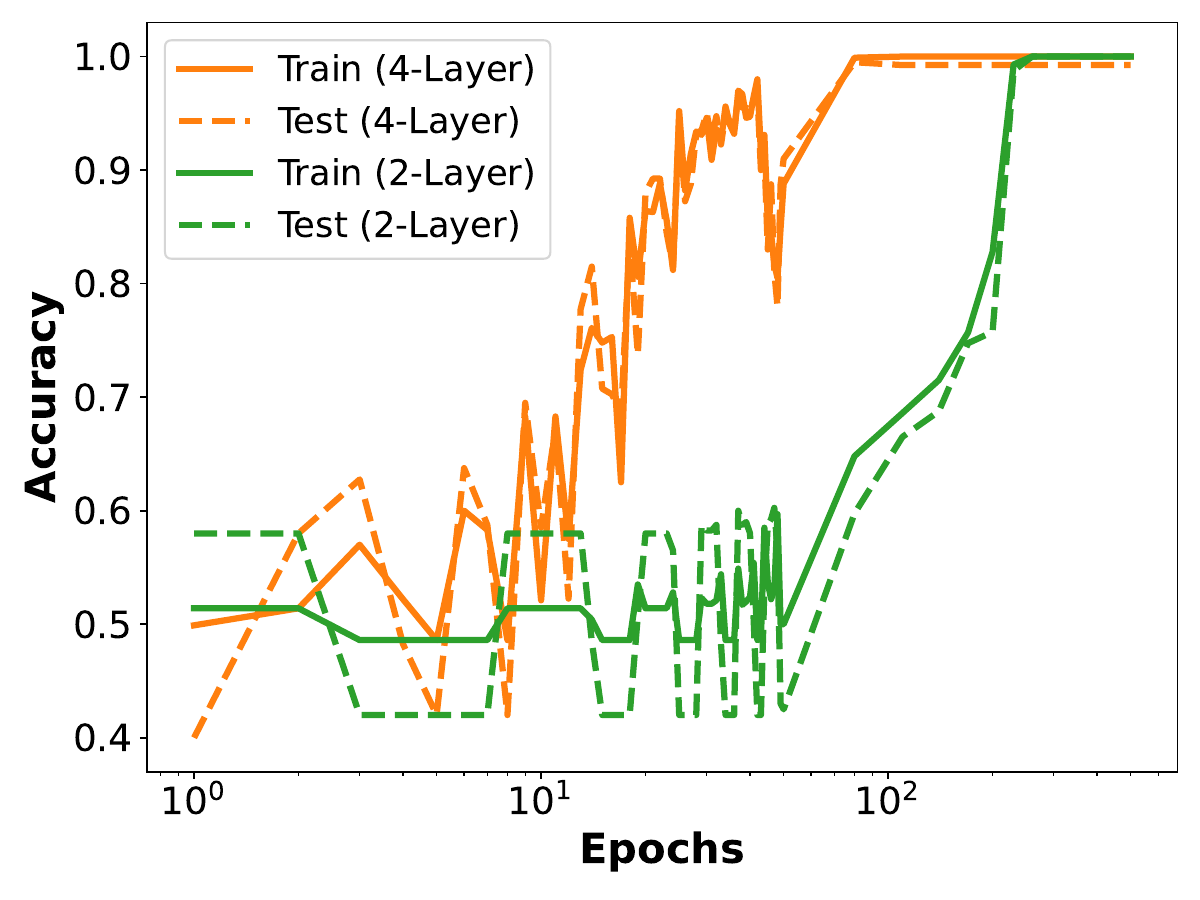}
    \caption{Left: Training dynamics of an 8-layer transformer on modular multiplication task. Right: Training dynamics of a 2-layer transformer and a 4-layer transformer trained on the \((40,3)\)-parity task. Both do not have delayed generalization.}
    \label{fig:parity_transformer}
\end{figure}

\begin{figure}[htb]
    \centering
    \includegraphics[width=0.4\textwidth]{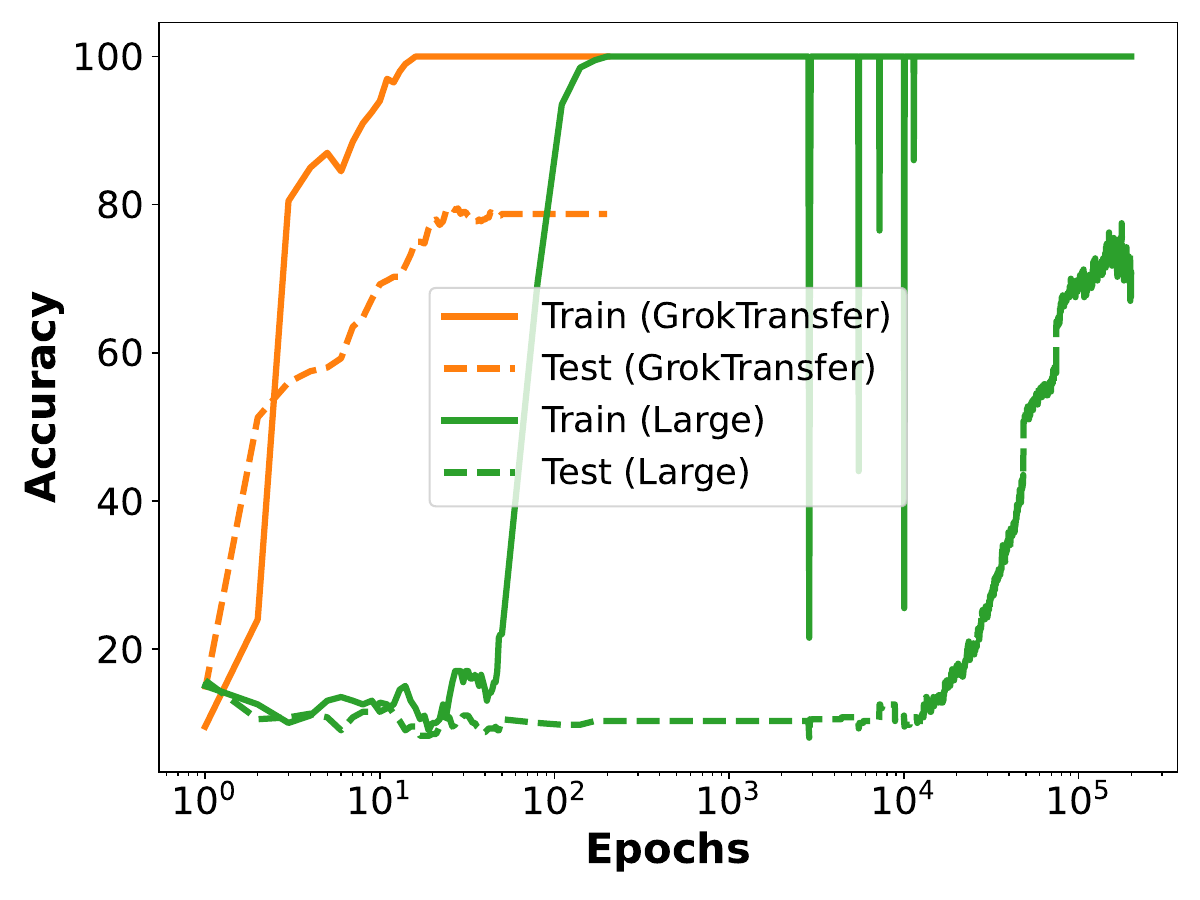}
    \caption{Training dynamics of a four-layer MLP on MNIST dataset. \(200\) samples are randomly selected as the training data, and \(400\) samples are randomly selected as the test data. The weak model is a two-layer MLP with \(40\%\) test accuracy.}
    \label{fig:groktransfer_mnist}
\end{figure}

\begin{figure}[htb]
    \centering
    \includegraphics[width=0.32\textwidth]{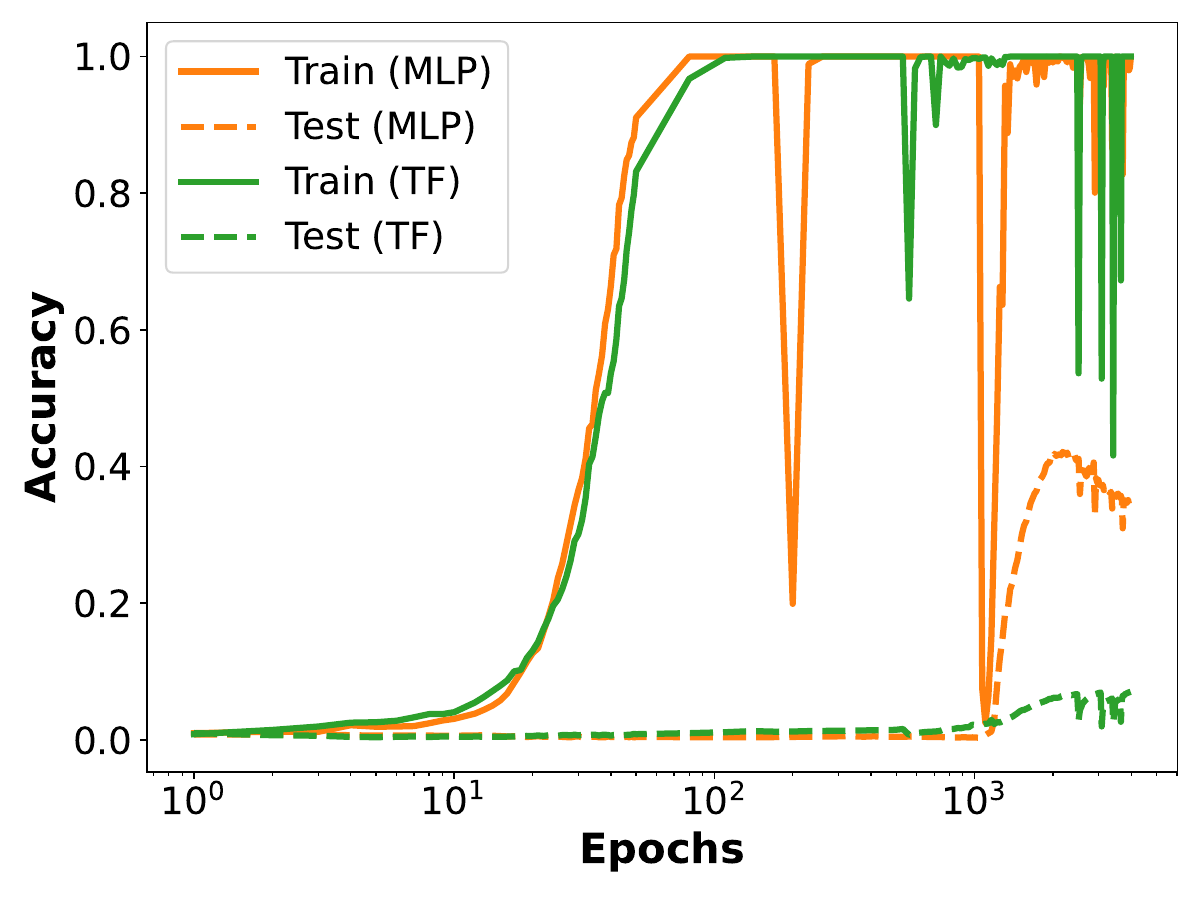}
    \includegraphics[width=0.32\textwidth]{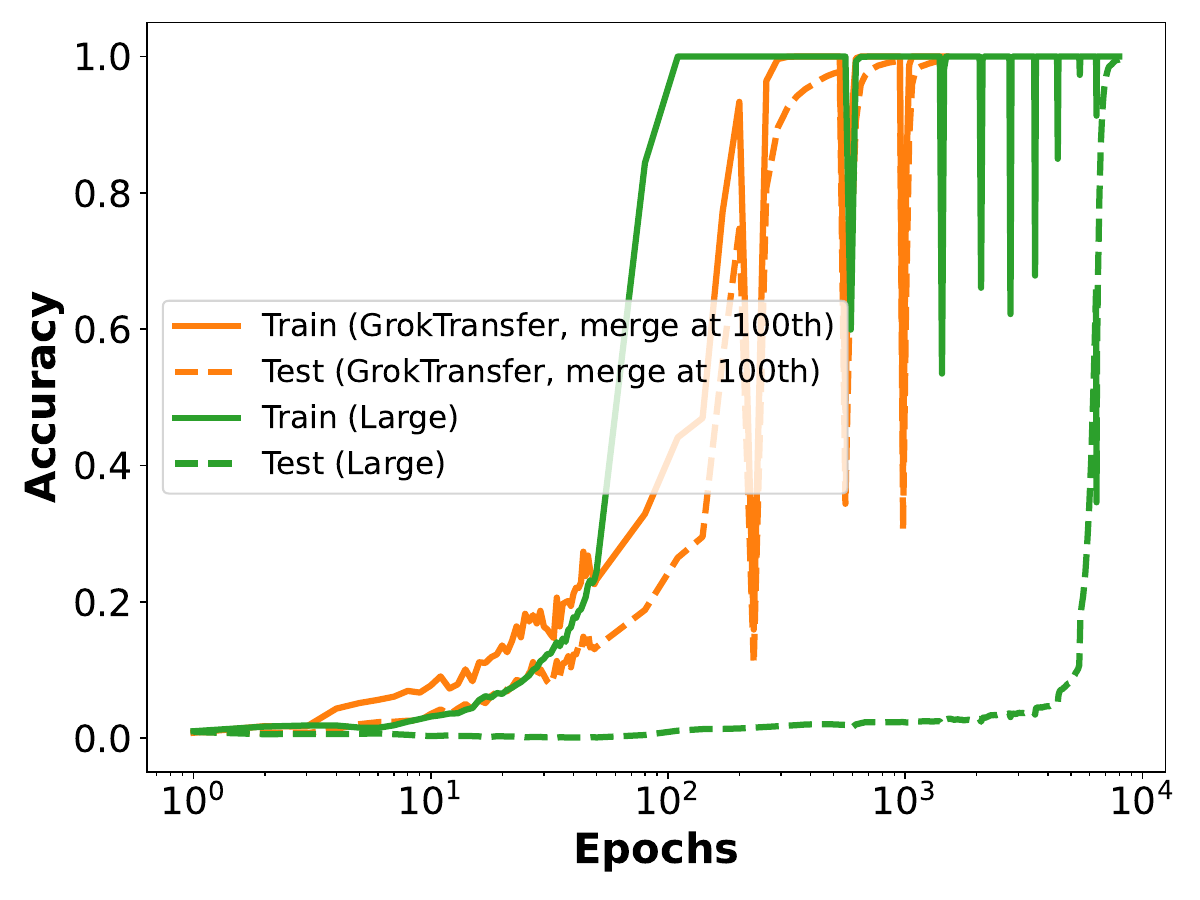}
    \includegraphics[width=0.32\textwidth]{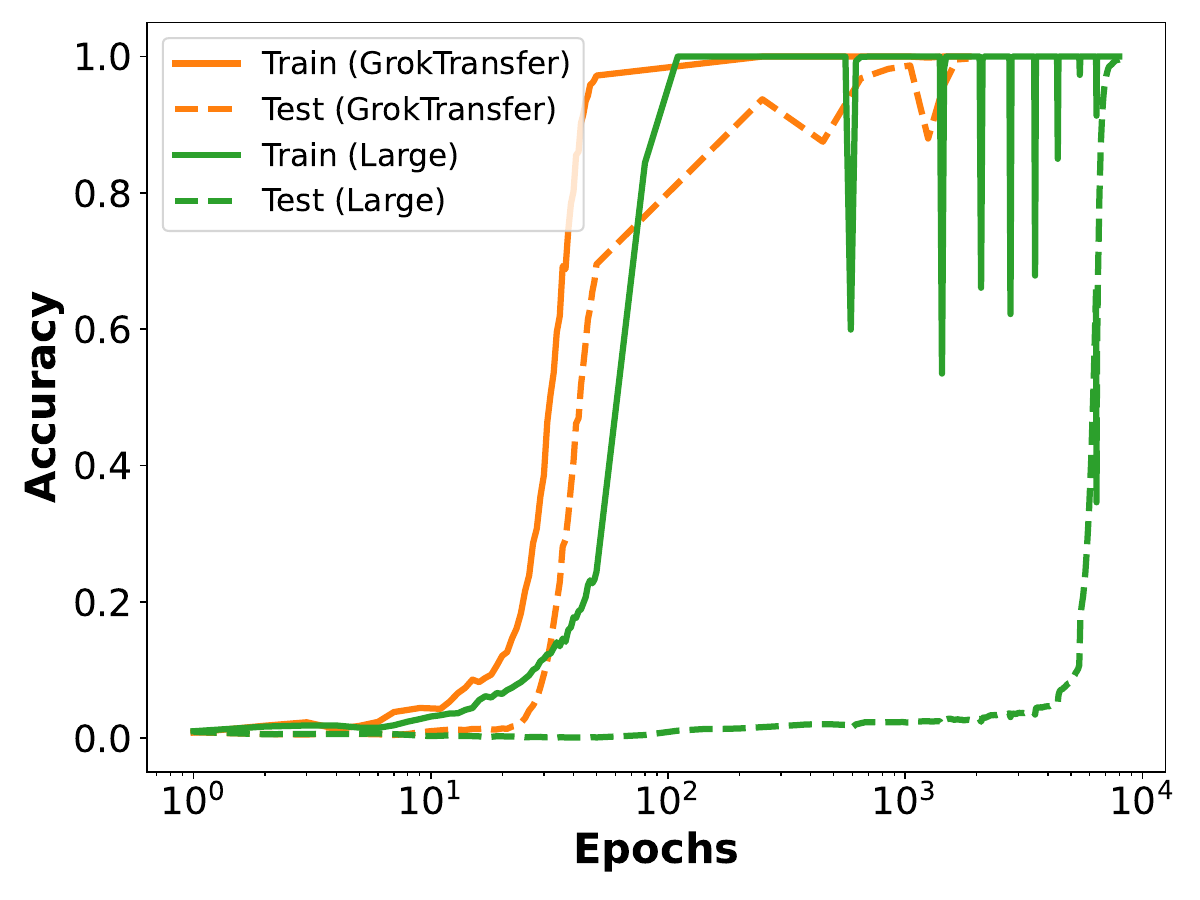}
    \caption{(a)Training dynamics of the target models used in Figure \ref{fig:method-FNN-a} and Figure \ref{fig:fnn_to_transformer} Right, but with low-rank embedding \(A\cdot B\), both \(A\) and \(B\) are randomly initialized. MLP represents a three-layer MLP and TF represents a two-layer transformer. (b) Merge \(A\) and \(B\) into \(E_T\) at 100-th epoch and keep training. (c) Set the embedding dimension of the weak model to be the same as that of the target model.}
    \label{fig:lowrank_nowork}
\end{figure}

\begin{figure}[htb]
    \centering
    \includegraphics[width=0.6\textwidth]{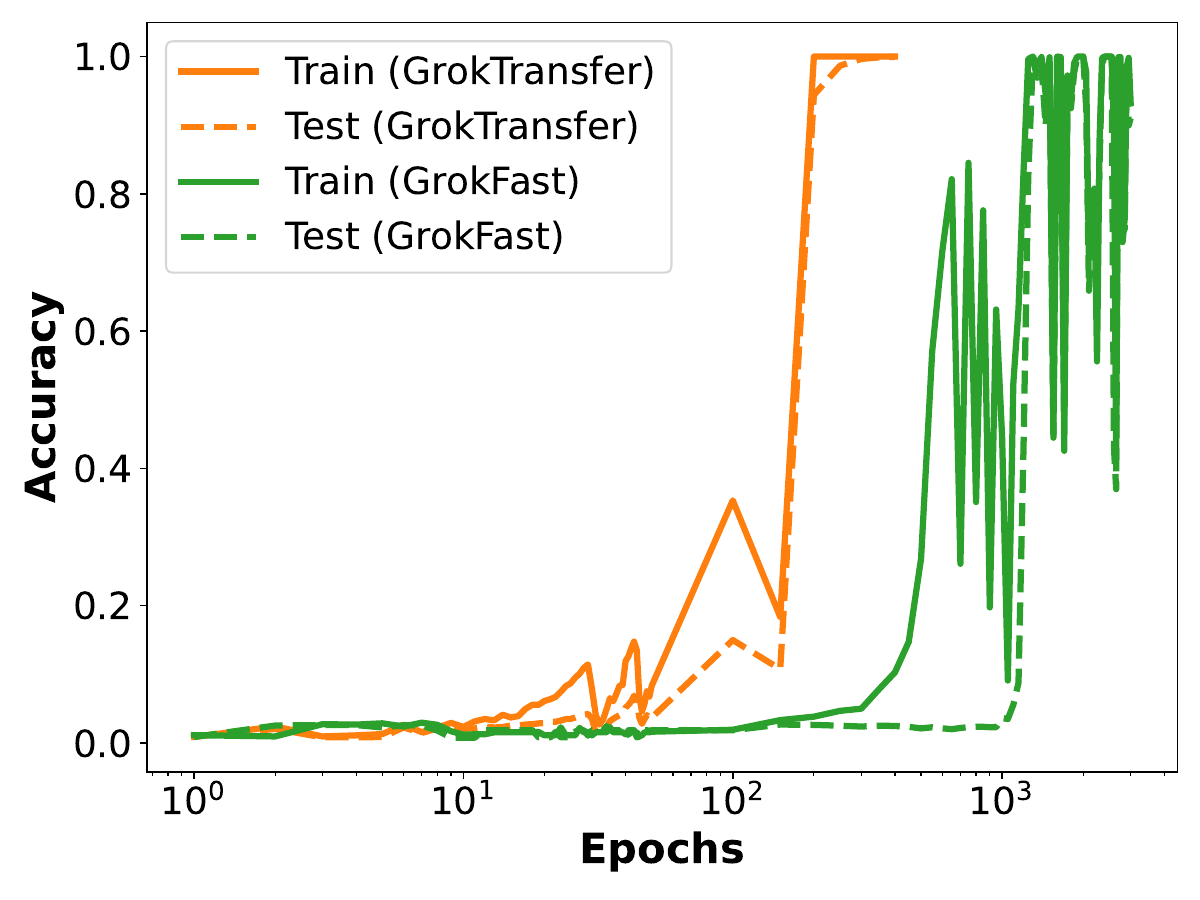}
    \caption{Dynamics of the target model (a two-layer Transformer)
trained via \gt, and the target model trained via GrokFast on modular multiplication task.}
    \label{fig:compare_grokfast_mul}
\end{figure}

\end{document}